\documentclass[twoside,11pt]{article}

\usepackage{blindtext}

%

%
%
%

\usepackage[preprint]{jmlr2e}



\usepackage[utf8]{inputenc} 
\usepackage[T1]{fontenc}    
\usepackage{hyperref}       
\usepackage{url}            
\usepackage{booktabs}       
\usepackage{amsfonts}       
\usepackage{nicefrac}       
\usepackage{microtype}      
\usepackage{float}
\usepackage{graphicx}
\usepackage[dvipsnames]{xcolor}         
\hypersetup{
    colorlinks,
    linkcolor={blue!50!black},
    citecolor={blue!50!black},
    urlcolor={blue!80!black}
}

\usepackage{amsmath}
\usepackage{amssymb}
\usepackage{mathtools}
\usepackage{amsthm}
\usepackage[capitalize,noabbrev]{cleveref}

\usepackage{nicefrac}
\usepackage{bbm,bm}
\usepackage{enumitem}
\usepackage{thmtools,thm-restate}
\usepackage{parskip}

\declaretheorem[name=Theorem]{thm}
\crefname{thm}{Theorem}{Theorems}
\declaretheorem[name=Lemma]{lem}
\crefname{lem}{Lemma}{Lemmas}
\declaretheorem[name=Proposition]{prop}
\crefname{prop}{Proposition}{Propositions}
\declaretheorem[name=Corollary]{cor}
\crefname{cor}{Corollary}{Corollaries}
\declaretheorem[name=Definition]{defn}
\crefname{defn}{Definition}{Definitions}

\crefname{assmpt}{Assumption}{Assumptions}

\crefname{remark}{Remark}{Remarks}


\usepackage{xparse}








\def\eqref#1{equation~\ref{#1}}









\def\1{\bm{1}}
\def\onevec{\mathbf{1}}










\DeclareMathAlphabet{\mathsfit}{\encodingdefault}{\sfdefault}{m}{sl}
\SetMathAlphabet{\mathsfit}{bold}{\encodingdefault}{\sfdefault}{bx}{n}


\def\gI{{\mathcal{I}}}

\def\gL{{\mathcal{L}}}






\newcommand{\R}{\mathbb{R}}


\DeclareMathOperator*{\argmin}{arg\,min}

\DeclareMathOperator{\sign}{sign}


\newcommand{\ceil}[1]{\lceil#1\rceil}

\newcommand{\abs}[1]{\left \vert#1\right \vert}
\newcommand{\paren}[1]{\left(#1\right)}

\newcommand{\braces}[1]{\left\{#1\right\}}

\newcommand{\transpose}{^\mathsf{\scriptscriptstyle T}}
\newcommand{\inverse}{^{\scriptscriptstyle -1}}

\DeclareMathOperator{\dom}{dom}			
\DeclareMathOperator{\inte}{int}		
\DeclareMathOperator{\epi}{epi}		


\newcommand{\w}[1]{w_{#1}}
\newcommand{\m}[1]{m_{#1}}
\newcommand{\n}[1]{n_{#1}}

\def\loss{\gL}

\newenvironment{sproof}{%
  \proof}{\endproof}


\usepackage{lastpage}
\jmlrheading{23}{2022}{1-\pageref{LastPage}}{1/21; Revised 5/22}{9/22}{21-0000}{Author One and Author Two}


\makeatletter
\newcommand{\myref}[1]{\cref{#1}\mynameref{#1}{\csname r@#1\endcsname}}
\newcommand{\Myref}[1]{\Cref{#1}\mynameref{#1}{\csname r@#1\endcsname}}

\def\mynameref#1#2{%
  \begingroup
    \edef\@mytxt{#2}%
    \edef\@mytst{\expandafter\@thirdoffive\@mytxt}%
    \ifx\@mytst\empty\else
    \space(\nameref{#1})\fi
  \endgroup
}
\makeatother

\begin{document}

\title{Gradient Descent on Logistic Regression with Non-Separable Data and Large Step Sizes}

\author{%
  \name Si Yi Meng \email sm2833@cornell.edu \\
  \addr Department of Computer Science\\
  Cornell University \\
  Ithaca, NY, USA
  \AND
  \name Antonio Orvieto \email antonio@tue.ellis.eu \\
  \addr ELLIS Institute Tübingen\\
  MPI for Intelligent Systems\\
  Tübingen AI Center, Germany
  \AND
  \name Daniel Yiming Cao \email dyc33@cornell.edu \\
  \addr Department of Computer Science\\
  Cornell University \\
  Ithaca, NY, USA
  \AND
  \name Christopher De Sa \email cmd353@cornell.edu \\
  \addr Department of Computer Science\\
  Cornell University\\
  Ithaca, NY, USA
}

\maketitle
\addtocontents{toc}{\protect\setcounter{tocdepth}{-1}}

\begin{abstract}
  We study gradient descent (GD) dynamics on logistic regression problems
  with large, constant step sizes. 
  For linearly-separable data, it is known that GD converges 
  to the minimizer with arbitrarily large step sizes, 
  a property which no longer holds when the problem is not separable.
  In fact, the behaviour 
  can be much more complex --- a sequence of period-doubling bifurcations 
  begins at the critical step size $2/\lambda$, where $\lambda$ is the 
  largest eigenvalue of the Hessian at the solution.
  Using a smaller-than-critical step size guarantees convergence if initialized nearby the solution: but does this suffice globally?
  In one dimension, we show that a step size less than 
  $1/\lambda$ suffices for global convergence.
  However, for all step sizes between $1/\lambda$ and the critical step size $2/\lambda$, one can 
  construct a dataset such that GD converges to a stable cycle. 
  In higher dimensions, this is actually possible even for step sizes 
  less than $1/\lambda$. Our results show that although 
  local convergence is guaranteed for 
  all step sizes less than the critical step size,
  global convergence is not, and GD may instead converge to a cycle depending on the initialization. 
\end{abstract}

\section{Introduction}
\label{sec:intro}

Logistic regression is one of the most fundamental methods for 
binary classification. 
Despite being a linear model, 
logistic regression and its multi-class generalization 
play a significant role in deep learning, appearing in tasks like 
model fine-tuning. 
Given features $x_i\in\R^d$ and binary labels $y_i=1$ or $-1$, the 
goal is to find a linear classifier $w^*\in\R^d$ by solving the following 
optimization problem
\begin{align}
	\label{eq:logreg-loss}
	w^*\in \argmin_{w\in\R^d} \loss(w) \coloneqq \frac{1}{n}\sum_{i=1}^n \log\paren{1+\exp(-y_i w\transpose x_i)}.
\end{align}
Since $w^*$ generally does not admit a closed-form expression, 
iterative methods such as Gradient Descent (GD) are typically used.
GD solves this problem by iterating 
\begin{align}
	\label{eq:gd-update}
	w_{t+1} \gets w_t - \eta_t \nabla \mathcal{L}(w_t),
\end{align}
where $\eta_t$ is the step size.
As $\loss$ is convex and $L$-smooth, 
classical optimization theory guarantees that a 
constant step size $\eta_t = \eta < 2/L$ is sufficient for 
GD to converge to $w^*$ for any initialization \citep{nesterov2018lectures}.

Recently, there has been a line of interesting discoveries 
on the behavior of GD, particularly for logistic regression. 
For linearly-separable data, 
\citet{soudry2018implicit} showed that GD with the 
$2/L$ step size
converges to the maximum-margin separator. In fact, this holds true
for \emph{any step size} $\eta >0$ \citep{wu2023implicit}. 
An intuitive explanation is that 
if the data is separable, 
$w^*$ is attained at infinity, 
and so $w_t$ converges in the maximum-margin direction 
but diverges in magnitude. This result shows that the $2/L$ 
condition on the step size is unnecessarily conservative for logistic regression.

If the data is not linearly-separable, 
the objective is strictly convex as long as 
the features have full-rank, thus the unique minimizer 
$w^*$ is finite. For this reason, one can not expect convergence 
under an arbitrarily large step size: indeed, classical dynamical 
systems theory \citep{strogatz2018nonlinear,sayama2015introduction} 
shows that $w^*$ becomes unstable when the step size $\eta > 2/\lambda$, 
where $\lambda$ is the largest eigenvalue of the Hessian of $\mathcal{L}$
at $w^*$.
A natural question to ask is whether this is the only barrier in 
the non-separable case: 
can we still guarantee convergence, as in the separable setting, 
for all ``large'' step sizes $\eta$ between $2/L$ and the ``critical'' 
step size of $2/\lambda$? And what happens at even larger $\eta$?
Large step sizes are interesting because they can often lead to 
faster convergence, both for logistic regression in the separable case 
\citep{axiotis2023gradient,wu2024large} and for more general problems
\citep{altschuler2023acceleration1,grimmer2023accelerated,mishkin2024directional,oymak2021provable,wu2023reg,ahn2022understanding}.
We also don't know \emph{a priori} whether the data is separable, 
so it would be helpful to gain a better understanding of what happens when 
we push the step size beyond the $2/L$ limit. 
The large step size regime has also been studied for deep neural networks,
often referred to as the Edge-of-Stability \citep{cohen2022gradient},
and is known to cause spikes or catapults in the initial steps of 
optimization \citep{zhu2023catapults}. 

In this paper, we study the behaviour of GD on logistic regression in the 
non-separable setting, where the step size is constant but 
potentially much larger than $2/L$.
We begin by showing that as $\eta$ increases past the critical step size $2/\lambda$, 
GD follows a route to chaos characterized by a cascade of period doubling.
If the problem is one-dimensional, we prove that 
$\eta=1/\loss''(w^*)=1/\lambda$ is the largest step size for which 
GD converges globally to $w^*$, and the rate is linear after a finite number of iterations.
Beyond this step size, we show that one can construct a 
dataset on which GD can instead converge to a cycle. 
Finally, for higher dimensional problems, we show that any step size of the form 
$\eta = \gamma/\lambda$
for constant $\gamma\in(0,1)$ can result in convergence to a cycle. 
Interestingly, these are not just an algebraic property of the logistic 
regression objective: in fact, our results hold for any loss functions 
structurally similar to the 
logistic loss, in that they look like a ReLU in the large. 

\section{Background}
Non-separable logistic regression problems differ 
from the separable setting largely due to the location of the minimizer. 
If the data is non-separable, the objective is strictly convex 
in the subspace spanned by the features $\{x_i\}$, and the solution 
is no longer attained at infinity. 
GD on logistic regression is essentially a discrete time nonlinear
dynamical system for which $w^*$ is a fixed point.
A necessary condition for $w^*$ to be locally (linearly) stable is 
to have a step size smaller than 
$2/\lambda$, where $\lambda$ is the \emph{largest eigenvalue of the Hessian at $w^*$}
\citep{strogatz2018nonlinear,sayama2015introduction}. Local stability means GD 
converges to $w^*$ 
when we initialize close enough to it. On the other hand, if we view 
the problem from a convex optimization perspective, a sufficient condition 
for global convergence is to require $\eta < 2/L$, where $L$ 
is a \emph{global upper bound on the Hessian}, 
which can be much larger than $\lambda$. 
One can relax the $2/L$ requirement by leveraging the generalized 
self-concordance property of logistic regression \citep{bach2010self}, 
yielding an instance-dependent large step size.
One can also use step sizes that depend on
the local smoothness (Hessian around the current iterate), 
as do \citet{ji2019implicit} for non-separable logistic regression and 
\citet{mishkin2024directional} for general convex problems. 
However, these step sizes are either still strictly 
below the $2/\lambda$ stability threshold, or still effectively require that 
the objective decreases monotonically, 
which is not guaranteed for $\eta > 2/L$.
For logistic regression,
\citet{liu2023nonseparable} created a 
two-example dataset with identical features but 
opposite labels, on which GD can enter a stable period-$2$ cycle when 
the step size is greater than a critical value. Unfortunately, this 
critical step size coincides with both $2/L$ and $2/\lambda$ due to the 
degeneracy of the dataset.
which means there is still little known 
about what happens when there is a non-trivial 
gap between $2/L$ and $2/\lambda$.

Beyond linear classification problems, period-doubling bifurcations 
and chaos in GD dynamics under large step sizes have been observed in 
many problem settings. 
For least squares problems, \citet{van2012chaotic} showed that  
occasionally taking very large step sizes can lead to much faster convergence.
But if these step sizes are too large, GD can behave chaotically. 
Beyond linear models, 
\citet{chen2022gradient} gave sufficient conditions for a period-$2$ cycle to 
exist for one-dimensional loss functions, but the study is mostly restricted to 
the squared loss. 
\citet{zhu2022quadratic,zhu2023catapults} empirically 
studied non-monotonic convergence of GD under the critical step size 
$\eta < 1/\lambda_0$, where $\lambda_0$ is the largest eigenvalue of the Hessian 
at initialization.Their studies apply to general neural networks 
but are also limited to the squared loss. 
\citet{chen2023from} proved that under restrictive conditions 
on the input data and architecture, GD on neural networks with nonlinear activations 
boils down to a one-dimensional cubic map that can behave periodically or 
chaotically. Once again, these results only apply to the 
squared loss. Under general loss functions and model architectures, 
\citet{ahn2022understanding} gave intuitions to when GD can converge 
under unstably large step sizes, while \citet{danovski2024dynamical} 
observed stable oscillation and chaos for neural network training. 

In the deep learning literature, convergence of GD under large step sizes 
is commonly known as the Edge of Stability (EoS)
phenomenon \citep{cohen2022gradient}. 
Specifically, it has been observed that when training neural networks,
the largest eigenvalue of the Hessian, also referred to as the sharpness, 
often hovers right at, or even above $2/\eta$, 
while the objective continues to decrease. 
\citet{zhu2022understanding} illustrated this 
phenomenon using a minimalist 4-parameter scalar network with the quadratic loss, 
where GD iterates initially oscillate, then de-bifurcate, 
leading to convergence at an EoS minimum. 
Another motivation for studying GD step sizes in the EoS
regime is that large step sizes can be crucial in 
learning the underlying representations of the problem.
For instance, \citet{ahn2022learning} showed that in a 
sparse-coding setup, one can only learn the bias term 
necessary for recovery by dialing up the step size
into the unstable regime.
While there are many more works studying the EoS regime for non-convex problems 
\citep{song2023trajectory,kreisler2023gradient,wang2023good,lu2023benign}, 
we believe it is useful to take a step back and closely examine 
what exactly happens on just linear models, especially 
for the logistic loss which seems to be under-explored. 
We show that stable cycles can occur under the critical step size
$2/\lambda$, and illustrate precisely how these cycles arise. 
\section{Period-doubling bifurcation and chaos}
\label{sec:bifurcation}

When the examples are not linearly-separable, that is, for all $w\in\R^d$, 
there exists $i \in \{1,\dots,n\}$
such that $y_iw\transpose x_i \leq 0$, the logistic regression objective 
$\loss$ in \cref{eq:logreg-loss} is strictly convex, 
as long as the $x_i$'s have full rank. The solution $w^*$ 
is necessarily unique and finite, so we can simply run GD with increasing 
step sizes to examine its convergence properties.
In \cref{fig:real-main}, we see that GD is convergent for small step sizes, 
up to a point at which a period-$2$ cycle emerges. As we continue to increase 
the step size beyond this point, a sequence of period-doubling bifurcation 
occurs, and GD converges to cycles of longer periods. Eventually, 
this period-doubling cascades into chaos. 

\begin{figure}
	\centering \includegraphics[width=0.8\linewidth]{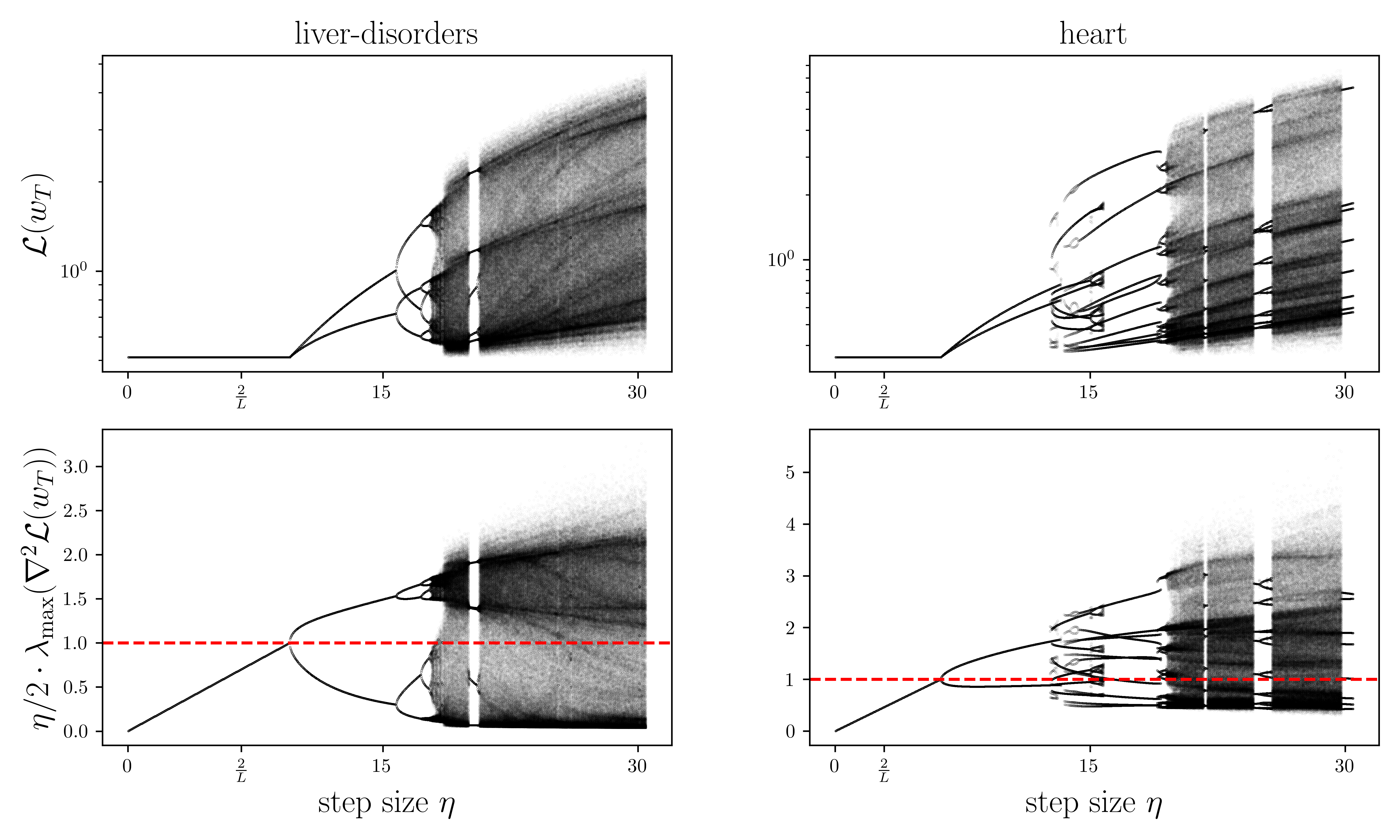}
	\caption{
		Bifurcation diagrams on two binary classification datasets from 
		the \texttt{LIBSVM} repository \citep{chang2011libsvm}, both are non-separable. 
		For each step size, we run GD for $T= 5\cdot 10^5$
		iterations with $1024$ different random initializations 
		of varying scales. 
		Each point corresponds to the loss (first row) or the (scaled) 
		largest eigenvalue of the 
		Hessian (second row) evaluated at the final iterate $w_T$. 
		When multiple points are 
		visible, GD either converged to a cycle or is 
		chaotic under that step size.
	}
	\label{fig:real-main}
\end{figure}

\begin{figure}
	\centering \includegraphics[width=0.95\linewidth]{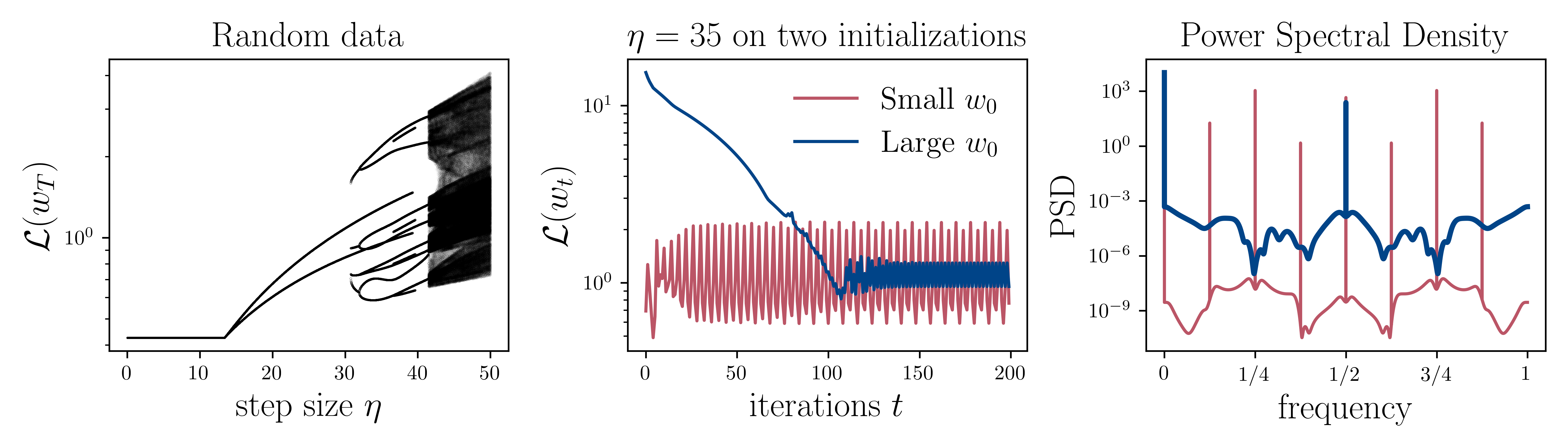}
	\vspace*{-1em}
	\caption{
		On the left is the bifurcation diagram on a synthetic dataset 
		with $n=12$ and $d=4$. The $x_i$'s are generated from 
		the standard Gaussian distribution, with uniformly random labels. 
		In the middle, we plot the loss at each GD iteration when ran 
		with $\eta=35$, for two different initializations 
		$w_0=100\cdot \onevec$ and $w_0=0.001\cdot \onevec$, 
		where $\onevec$ is the all $1$'s vector. 
		On the right, we compute the power spectral density of the 
		losses over $t$, which shows a period-$2$ cycle for the large 
		initialization, while the small initialization run converged
		to a period-$8$ cycle. 
	}
	\label{fig:different-cycles}
\end{figure}

A remark about \cref{fig:real-main} is on the ``discontinuous'' regions 
in the bifurcation diagram on the \texttt{heart} dataset (right panel), around $\eta=15$.
These regions correspond to different cycles arrived at 
when starting at initializations of various scales, and can be 
reproduced on a synthetic dataset 
shown in \cref{fig:different-cycles}. The discontinuous regions are also 
present (around $\eta=35$). We then ran GD with $\eta=35$ 
using two arbitrary initializations that differ in scale, 
and observe that GD can converge to 
different cycles of different periods. 

The fact that GD undergoes period-doubling bifurcation on the logistic 
regression objective is perhaps not so surprising --- this phenomenon 
has long been observed and studied on numerous nonlinear maps 
\citep{strogatz2018nonlinear}. Unlike GD on linear regression which results in 
a linear map in discrete time, the sigmoid link in the gradient adds a 
nonlinearity that gives rise to a much richer 
spectrum of behaviour. What the bifurcation diagrams can help us see is 
at what step size we cease to have convergence to $w^*$. To further our 
understanding, consider 
the gradient and Hessian of \cref{eq:logreg-loss}:
\begin{align}
	\nabla \loss(w) =  \frac{1}{n} \sum_{i=1}^n \sigma(-y_iw\transpose x_i) (-y_ix_i),
	\quad 
	\nabla^2 \loss(w) = \frac{1}{n} \sum_{i=1}^n \sigma'(-y_iw\transpose x_i) x_ix_i\transpose,
\end{align}
where $\sigma(z)= 1/(1+\exp(-z))$ is the sigmoid function. 
We will use $\lambda$ to denote $\lambda_{\max}(\nabla^2\loss(w^*))$.
As the Hessian is maximized at $w=0$, 
the global upper bound on the Hessian is given by 
$L = \lambda_{\max}( XX\transpose/(4n) )$, 
where $X\in \R^{n\times d}$ is the feature matrix. 
One interesting observation from \cref{fig:real-main} lies in the second row, 
where we plot the final values of $\lambda_{\max}(\nabla^2\loss(w_T))$
scaled by $\eta/2$. 
For all step sizes below $2/L$, this value is clearly convergent, as $w_t$ converges to $w^*$ regardless 
of initialization.
It appears that GD remains convergent beyond $\eta=2/L$
until $\eta = 2/\lambda$, at which period-$2$ cycles begin to appear. 
This is reasonable as $w^*$ is the fixed point of the GD map 
$T(w) = w-\eta \nabla \loss(w)$, and first-order stability of $w^*$ is guaranteed 
if the eigenvalues of the Jacobian of $T$ 
\begin{align*}
	J_T(w) = I - \eta \nabla^2\loss (w)
\end{align*}
lie strictly within the unit circle \citep[Chapter 5.7]{sayama2015introduction}. 
As a result, $\eta < 2/\lambda$ 
is a necessary condition for GD to converge to $w^*$ in a neighborhood of $w^*$. 
Next, we illustrate on a toy dataset that the gap
between $2/L$ and $2/\lambda$ can grow arbitrarily large, 
and that analyzing the cycles is very challenging.

\subsection{A toy dataset} 
Consider $n\geq 2$ examples such that $y_i=1$ 
for all $i$. Let $v$ be an arbitrary point on the $d$-dimensional unit sphere. 
The dataset consists of $n-1$ copies of $v$, and a single copy of $-v$. 
Clearly, this dataset is not separable by any linear classifier that goes 
through the origin.  
The gradient and Hessian simplify to (see \cref{sec:toy-dataset-calculations})
\begin{align*}
	\nabla \loss(w) = \frac{1}{n} \paren{ n\cdot \sigma(v\transpose w) - (n-1) }v 
	\quad \text{and} \quad 
	\nabla^2 \loss(w) = \sigma'(v\transpose w) \cdot vv\transpose.
\end{align*}
Setting the gradient to $0$ gives us $\sigma(v\transpose w^*) = \frac{n-1}{n}$. 
Since the largest eigenvalue of $vv\transpose$ is $1$,  
\begin{align*}
	\lambda = \sigma(v\transpose w^*)(1-\sigma(v\transpose w^*)) = \frac{n-1}{n^2},
\end{align*}
while $L=1/4$. So for large $n$, the gap between 
the step sizes $2/L$ and $2/\lambda$ grows quickly.  

This toy dataset can also help us get a sense of how difficult it is to analyze 
these cycles. Note that this problem is degenerate in the sense that $X$ is 
rank-$1$, so the resulting objective is not strictly-convex and there exists a subspace 
of minimizers. Instead, we analyze the associated GD update on the probability 
space. 
For $i=1,\dots,n$, let $p_{t,i} \coloneqq \sigma(-y_i w_t\transpose x_i)$. 
A recurrence relation for $p_{t,i}$ can be derived as follows --- simply 
take the inner product with $-y_i x_i$
on both sides of the GD update 
\labelcref{eq:gd-update} and apply the sigmoid function, 
giving us 
\begin{align}
	p_{t+1,i} &\coloneqq \sigma\paren{ \sigma\inverse(p_{t,i}) - \frac{\eta}{n} y_i \paren{ {\sum_{j=1}^n} y_j p_{t,j} x_j\transpose} x_i   }\quad i=1,\dots, n,
\end{align}
where $\sigma\inverse(p) = \ln\paren{ \nicefrac{p}{1-p}} $ is the logit function,
and $p_{0,i} = \sigma(-y_iw_0\transpose x_i)$. 

On the toy dataset, this update can be simplified into
\begin{align}
	\label{eq:one-d-toy-map}
	p_{t+1,n} &= \sigma\paren{\sigma\inverse(p_{t,n}) - \frac{\eta}{n} \paren{ p_{t,n} - (n-1)(1-p_{t,n}) }} \nonumber \\
	p_{t+1,i} &= 1 - p_{t+1,n} \quad \forall i=1,\dots,n-1.
\end{align}

The bifurcation diagrams of GD on this toy dataset are presented in 
\cref{fig:toy_one_class}.
For $n=2$ and $\eta\geq 2/\lambda =8$, the two points of oscillation 
are given by the two values of
\begin{align}
	\label{eq:two-examples-oscillation}
	p_n = \frac{1}{2}\paren{ h\inverse\paren{\frac{\eta}{8} } + 1 }, 
\end{align}
where $h(z) = \tanh\inverse(z)/z$ (see \cref{prop:two-examples-oscillation} 
in the Appendix for the derivation). Since $h(z)\geq 1$ for all $z$, the 
period-$2$ point $p_n$ is only defined when $\eta/8\geq 1$, as expected.
This shows that even with $n=2$ on this trivially-constructed dataset, 
computing the two points of oscillation is a nontrivial task as 
$h\inverse$ is not even an elementary function. 
\section{Technical setup}
\label{sec:prelim}

We now provide the technical setup for analyzing convergence and cycles 
in the large step size setting.
Consider the linear classification problem with loss function $\ell$ of finding 
\begin{align}
	\label{eq:classification-obj}
	w^* \in \argmin_{w\in\R^d} \loss(w) = \smash{ \frac{1}{n}\sum_{i=1}^n \ell(-y_i w\transpose x_i) }
\end{align}
where $y_i \in \braces{-1, 1}$, $x_i\in\R^d$, 
and $\ell$ is the loss function on a single example.
We use $\lambda$ to denote $\lambda_{\max}(\nabla^2\loss(w^*))$.
We are particularly interested in the case where the data is not 
linearly-separable. Concretely, we assume that for 
all $w\in\R^d$, there exists $i\in\braces{1,\dots,n}$ such that 
$y_i w\transpose x_i \leq 0$. This implies that $\nabla^2\loss(w) \neq 0$
for all $w$.
While our motivation is to study the behaviour of large step size GD 
on logistic regression, 
our results will be stated in terms of general loss functions 
$\ell$ satisfying the following set of assumptions.

\begin{minipage}{0.59\textwidth}
	\begin{restatable}{assmpt}{lossassumptions}
		\label{assmpt:individual-loss}
		The individual loss $\ell:\R\to\R_+$ is
		\begin{enumerate}[leftmargin=*]
			\item three-times continuously-differentiable and strictly convex, 
			and $\lim_{z\to -\infty} \ell(z) = 0$, 
			\item $\lim_{z\to -\infty} \ell'(z) = 0$ and $\lim_{z\to \infty} \ell'(z) = 1$, and
			\item $\ell''$ is increasing on $(-\infty, 0]$ and decreasing on $[0, \infty)$; furthermore, it decays rapidly as
            \begin{align*}
				\lim_{\epsilon\to 0} \; \frac{1}{\epsilon^2} \ell''\paren{ \frac{1}{\epsilon}} = 0.
			\end{align*} 
		\end{enumerate}		
	\end{restatable}
	The limit of $\ell$ and the upper bound on $\ell'$ 
	can both be generalized to any finite positive value. 
\end{minipage}
\hfill%
\begin{minipage}{0.38\textwidth}
	\begin{figure}[H]
		\centering \includegraphics[width=\textwidth]{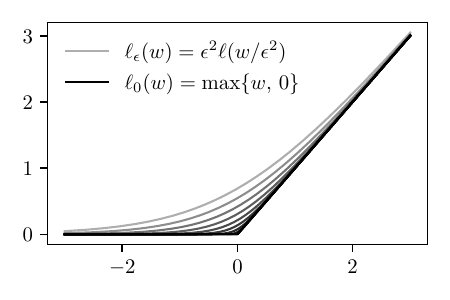}
		\vspace*{-2em}
		\caption{The limit in \labelcref{eq:limit-is-relu}
		using the logistic loss as an example. 
            Darker grey represents smaller values of $\epsilon$. 
  		}
		\label{fig:limit-is-relu}
	\end{figure}
	\vfill%
\end{minipage}

This set of assumptions essentially requires that $\ell$
looks like a ReLU when zoomed out. 
\begin{restatable}{cor}{limitisrelu}
	\label{cor:limit-is-relu}
	\cref{assmpt:individual-loss} implies that 
	\begin{align}
		\label{eq:limit-is-relu}
		\lim_{\epsilon\to 0} \epsilon^2 \ell\paren{\frac{z}{\epsilon^2}} = \max\braces{z,\, 0}.
	\end{align}
\end{restatable}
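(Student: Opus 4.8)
The plan is to convert the two-sided scaling limit into a single asymptotic statement about $\ell$ via the substitution $t = 1/\epsilon^2$. Since then $\epsilon^2 = 1/t$ and $z/\epsilon^2 = tz$, with $t\to+\infty$ as $\epsilon\to 0^+$, the claim becomes $\lim_{t\to\infty}\ell(tz)/t = \max\braces{z,\,0}$. First I would dispose of the cases $z\le 0$. For $z=0$ the expression is $\ell(0)/t\to 0$. For $z<0$, strict convexity together with $\ell'\to 0$ at $-\infty$ and $\ell'\to 1$ at $+\infty$ forces $\ell'\in(0,1)$, so $\ell$ is increasing and $0\le \ell(tz)\le \ell(0)$ for $t>0$; dividing by $t$ and squeezing gives the limit $0$. (One may equally use $\lim_{z\to-\infty}\ell(z)=0$.) In both cases the limit matches $\max\braces{z,\,0}=0$.

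The only case needing a genuine limiting argument is $z>0$, where $\max\braces{z,\,0}=z$. Here I would set $u = tz$, so that $u\to+\infty$ and $\ell(tz)/t = z\cdot \ell(u)/u$, reducing the goal to $\lim_{u\to\infty}\ell(u)/u = 1$. This is precisely the $\infty/\infty$ form of L'Hôpital's rule: the denominator $u$ tends to $+\infty$ and the ratio of derivatives is $\ell'(u)\to 1$ by assumption, hence $\ell(u)/u\to 1$ and $\ell(tz)/t\to z$. To avoid invoking L'Hôpital one can argue directly: given $\delta>0$ choose $M$ with $\abs{\ell'(s)-1}<\delta$ for $s>M$, write $\ell(u) = \ell(M)+\int_M^u \ell'(s)\,ds$, and divide by $u$ to obtain $\limsup_{u\to\infty}\abs{\ell(u)/u - 1}\le\delta$; letting $\delta\to 0$ finishes it.

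I expect no real obstacle: the proof uses only a subset of \cref{assmpt:individual-loss} --- nonnegativity, convexity, $\lim_{z\to-\infty}\ell(z)=0$, and $\lim_{z\to\infty}\ell'(z)=1$ --- and not the finer decay condition in part~3. The \emph{only} delicate point is the handoff from $\ell'(u)\to 1$ to $\ell(u)/u\to 1$ in the positive branch, which is the standard L'Hôpital / Cesàro--Stolz step and is entirely routine.
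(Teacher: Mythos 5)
Your proof is correct and follows essentially the same route as the paper's: dispose of $z\le 0$ using the limit of $\ell$ at $-\infty$ (your squeeze via monotonicity is an equivalent variant), and for $z>0$ apply L'H\^opital's rule together with $\lim_{u\to\infty}\ell'(u)=1$ to get the limit $z$. Your substitution $t=1/\epsilon^2$ and the optional integral/Ces\`aro--Stolz argument are only cosmetic refinements of the paper's direct differentiation in $\epsilon$.
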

The logistic loss $\ell(z) = \log(1+\exp(z))$ and the 
squareplus function $\ell(z) = 0.5(\sqrt{4+z^2}+z)$ 
\citep{barron2021squareplus} both satisfy \cref{assmpt:individual-loss}, as 
verified in \labelcref{sec:assumptions-on-ell-verification}. 
Outside of these two losses, there are few commonly-used machine learning
loss functions that satisfy our assumptions. 
The purpose of stating our results in this way is to 
emphasize that our results are not an algebraic consequence of 
the logistic loss, but rather, a consequence of its structural properties.

\section{One-dimensional case}
\label{sec:1d-case}

As discussed in the previous section, although $\eta < 2/\lambda$ is a 
necessary condition for convergence, it is not sufficient.
Specifically, it only guarantees \emph{local} convergence to 
$w^*$ when we initialize within a neighborhood of $w^*$.
From \cref{fig:real-main}, it appears that we do converge globally for all 
$\eta < 2/\lambda$, as the initializations we used vary in scale.
However, as we show later, this is not the case for all datasets.
The first natural question to 
ask is \emph{what is a sufficient condition on the step 
size to guarantee global convergence?}
As it turns out, when $d=1$, this step size is given by $\eta \leq 1/\lambda$.

\begin{restatable}{thm}{onedimconvergence}
	\label{thm:1d-convergence}
	Suppose \cref{assmpt:individual-loss} holds 
	for the classification problem in $1$ dimension 
	with non-separable data. 
	Then the GD iterates 
	$\braces{w_t}$ under $\eta \leq 1/\loss''(w^*)$ converge to $w^*$ 
	for any initialization $w_0$. 
	When $\eta = 1/\loss''(w^*)$, let 
	$\bar \tau \coloneqq 1 + \max\braces{0, -\ceil{ w_0 \cdot \sign(w^*) \cdot \loss''(w^*)/\loss''(0) }}$.
	Then there exists $\rho\in(0,1)$ such that for all $t\geq \bar \tau$, we have 
	$(w_{t+1} - w^*)^2 \leq \rho (w_t - w^*)^2$.
\end{restatable}
Convergence is straightforward. Suppose $w^*>0$. The interval $\gI=[w^*, +\infty)$ is an 
invariant set under the step size requirement, that is, 
if $w_t\in\gI$, then $w_{t+1}\in\gI$. Furthermore, 
$w_t$ converges to $w^*$ on $\gI$.
So either we converge directly 
in $\gI$, or we initialize in $(-\infty, w^*)$,
from which we either approach $w^*$ from the left, or cross over into $\gI$ ,
within which convergence is guaranteed.
The detailed proof including the rate can be found in \cref{sec:1d-convergence}.
The next result shows that if we increase the step size beyond $1/\loss''(w^*)$,
global convergence is no longer guaranteed. 

\begin{restatable}{thm}{onedimcycle}
	\label{thm:1d-convergence-to-cycle}
	For any $\ell$ for which \cref{assmpt:individual-loss} holds
	and any $\gamma\in (1,2]$, there exists a 1-dimensional 
	non-separable classification 
	problem of the form \labelcref{eq:classification-obj}, on which 
	there exists a GD trajectory under the step size 
	$\eta = \gamma/\loss''(w^*)$
	that converges to a cycle of period $k>1$.
\end{restatable}

It is worth emphasizing that \cref{thm:1d-convergence-to-cycle} does not imply  
a cycle is possible for every dataset. It only shows that if we were to 
first pick a $\gamma\in(1,2]$, then we can construct 
a dataset such that 
GD with step size $\eta = \gamma/\loss''(w^*)$ \emph{can} converge to a cycle. 
Note that this construction necessarily implies $\eta > 2/L$
where $L=\loss''(0)$ is the global smoothness constant, as otherwise 
we would have global convergence. 
Stability of the cycle coexists with stability of the solution $w^*$. 
Therefore, depending on the initialization, 
GD may still converge to $w^*$ with the same step size.
Moreover, given an arbitrary dataset, it 
is not always easy to verify whether 
a cycle exists before hitting the $2/\lambda$ step size. 
In fact, when we ran GD using many different 
initializations of varying scales in on real datasets (\cref{fig:real-main}), 
we did not see a clear cycle emerging at all until 
the stability threshold of $\eta = 2/\lambda$.
We now state the main proof steps.
\begin{sproof}
	Fix a $\gamma\in(1,2]$, we begin by constructing a dataset 
	where the $x_i$'s are copies of $1$'s and $-1$'s, with all $1$'s label.
	This dataset corresponds to a loss $\loss$ such that 
	the minimizer is $w^*>0$ without loss of generality. On this $\loss$, 
	it can be shown that a trajectory of the form 
	\begin{align}
		w_{k-1} < 0 < w^* < w_{k-2} < \dots < w_0
	\end{align}
	exists. Each iterate is generated via the GD update 
	on $\loss$ with step size $\eta = \gamma / \loss''(w^*)$, illustrated 
	in the left panel of \cref{fig:perturbed-loss-1d}.
	Furthermore, we can ensure that 
	$w^* < w_k < w_0$.  

	Then, consider perturbing $\loss$ by adding a ReLU to it, giving us
	\begin{align}
		\loss_{\text{perturbed}}(w) = \loss(w) + \frac{w_0 - w_k}{\eta} \max\braces{ -w , \, 0}.
	\end{align}
	Observe that the ReLU does not contribute any additional curvature 
	on top of $\loss$, and it also does not contribute any gradient to 
	all points $w>0$. Therefore, the minimizer remains the same.
	So if we were to run GD starting at the same $w_0$ 
	on $\loss_{\text{perturbed}}$ with the step size 
	$\eta_{\text{perturbed}} = \gamma / \loss''_{\text{perturbed}}(w^*) = \eta$, 
	the new trajectory would coincide with the original trajectory up to 
	step $k$. Let $\tilde w_t$ denote the iterates of this new trajectory 
	for $t \geq 0$. At point $\tilde w_{k-1} = w_{k-1} < 0$, the ReLU becomes active.
	Applying one GD update at this point leads to
	\begin{align}
		\tilde w_k &= \tilde w_{k-1} - \eta \loss'_{\text{perturbed}}(\tilde w_{k-1}) \\
		&= \tilde w_{k-1} - \eta \loss'(\tilde w_{k-1}) + \eta \frac{w_0 - w_k}{\eta} \nonumber \\
		&= w_k + w_0 - w_k  \nonumber \\
		&= w_0, \nonumber 
	\end{align}
	resulting in a cycle. As shown in the second panel of 
	\cref{fig:perturbed-loss-1d}, we have effectively added enough gradient 
	to the left of $0$ such that $\tilde w_{k+1}$ gets 
	kicked right back to where we started. 
	\begin{figure}
		\centering \includegraphics[width=0.9\linewidth]{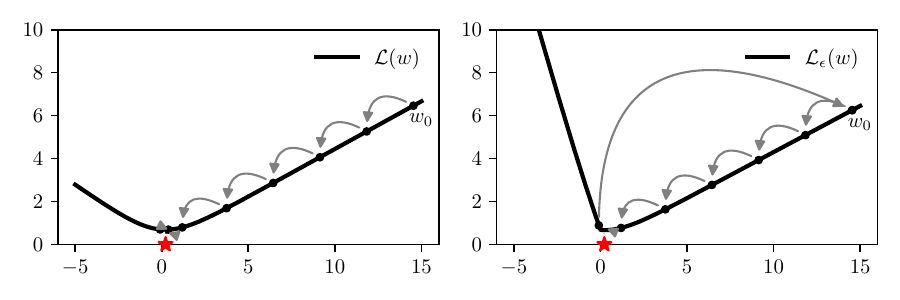}
		\vspace*{-1.2em}
		\caption{Illustration of our cycle construction. On the left, $\loss$ is 
			the logistic loss with data consisting of $250$ copies of $x_i=1$
			and $200$ copies of $x_i=-1$. On top of this dataset we add $15$ 
			copies of $x_i=70$ to get $\loss_\epsilon$ on the right. 
			The red star marks the minimizer $w^*$ and $w^*_\epsilon$ of the 
			respective objective.
			Starting at the same $w_0$ and using the same $\gamma=1.5$, 
			GD on $\loss$ with $\eta = \gamma/\loss''(w^*)$ converges to the 
			minimizer, while GD on $\loss_\epsilon$ 
			with $\eta_\epsilon = \gamma/\loss_\epsilon''(w^*_\epsilon)$
			converges to a period-$7$ cycle.}
		\label{fig:perturbed-loss-1d}
	\end{figure}
	However, adding a ReLU to a loss function of the form 
	\labelcref{eq:classification-obj} does not automatically constitute 
	a valid classification problem. 
	We need to show that the kick can be achieved equivalently by 
	adding more examples to the dataset, yielding 
	an objective of the finite-sum form \cref{eq:classification-obj}.
	Recall that by \cref{cor:limit-is-relu},
	this ReLU is the limit of the function
	$\ell_{\epsilon}(w) = \epsilon^2 \ell\paren{ - w/\epsilon^2}$
	as $\epsilon\to 0$ (\cref{fig:limit-is-relu} and \cref{cor:limit-is-relu}). 
	This allows us to define a continuous perturbation 
	\begin{align}
		\loss_\epsilon(w) = \loss(w) + \frac{w_0 - w_k}{\eta} \ell_\epsilon(w).
	\end{align}
	It remains to invoke the implicit function 
	theorem to argue that there exists a 
	non-zero $\epsilon$ such that $\epsilon^2 (w_0-w_k) / \eta >0$ is 
	rational. This implies we can obtain a loss function equivalent to adding integer copies of  
	the example $x=1/\epsilon^2$ of label $y=1$ 
	to the original dataset. This ensures that $\loss_\epsilon$ 
	indeed corresponds to an objective on a valid binary classification dataset. 
	
	Finally, we argue that this cycle is locally stable.
	This can be achieved by observing that $\loss''$ evaluated 
	at all points on this cycle (except for 
	the one to the left of $w^*$) is strictly smaller than that 
	at $w^*$ (\cref{assmpt:individual-loss}).
    This is sufficient 
	to guarantee that the Jacobian of the $k+1$th iterate GD map 
	has a magnitude strictly less than one; if this is not the case, we 
	can extend the trajectory backwards from $w_0$ to $w_{-1}$, $w_{-2}$, ...,  
	until satisfied.
	Local stability implies that 
	if we initialize sufficiently close to the cycle (rather than the 
	minimizer), GD would converge to this cycle instead.
\end{sproof}
As verifying the assumptions of the implicit 
function theorem can be technical and tedious, 
we defer the full proof to \cref{sec:1d-cycle} and the 
supplemental materials.
In \cref{fig:cycles-w-diff-gammas}, we illustrate that one can 
easily construct a dataset for different values of $\gamma\in(1,2]$
such that GD converges to cycles. 
A pattern to note is that the smaller the $\gamma$, the longer period the 
cycle tends to have, as it takes more steps to move along the ``flatter''
side of the loss before being kicked back. 
Note that convergence to $w^*$ and stable cycles are not necessarily the only two 
events that can happen below the $2/\lambda$ step size. If for some 
dataset there exists a period-$3$ cycle under the step size 
$\eta = \gamma/\lambda$ for some $\gamma<2$,
then a chaotic GD trajectory can also occur for that dataset 
under the same $\gamma$ \citep{li2004period}. 

\begin{figure}
	\hspace*{-1em} \includegraphics[width=\linewidth]{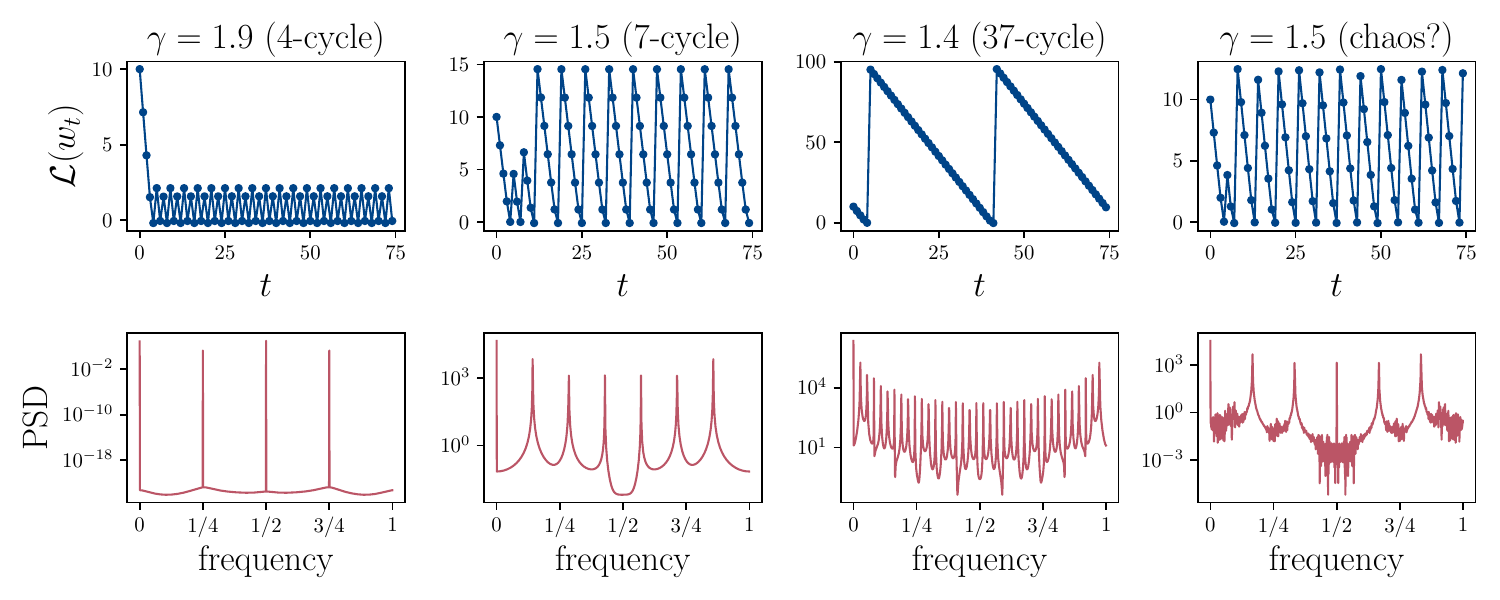}
	\caption{
		For each $\gamma$, we construct a one-dimensional dataset 
		such that GD on this problem 
		with the logistic loss converges to a stable cycle under the step size 
		$\eta=\gamma/\loss''(w^*)$. 
		One exception is the last column which 
		seems to suggest GD can even be chaotic when $\gamma<2$. 
		Figures in the first row show the loss evaluated at successive iterates, 
		while the second row shows the power spectral density of the losses 
		at the last $1024$ iterations. 
		Details on each dataset are in \cref{sec:exps}.
	}
	\label{fig:cycles-w-diff-gammas}
\end{figure}

\section{Higher dimensions}
\label{sec:higher-dimension-case}

In the one-dimensional case, we have shown that under the step size 
$\eta=\gamma/\loss''(w^*)$, convergence is guaranteed
for all $\gamma\leq 1$, while
increasing $\gamma$ beyond $1$ can lead to cycles.
In higher dimensions, 
we no longer have global convergence even 
when $\gamma$ is small. 
As we now show, for all $\gamma\in(0,1]$, a cycle can be  
constructed in $d=2$ using similar techniques. 
Since we can embed the constructed 
dataset in higher dimensions, the result trivially extends to 
any non-linearly-separable classification problem 
satisfying \cref{assmpt:individual-loss} of any dimension $d>1$. 
\begin{restatable}{thm}{twodimcycle}
	\label{thm:2d-convergence-to-cycle}
	For any loss function $\ell$ for which \cref{assmpt:individual-loss} holds
	and any $\gamma\in (0,1]$, there exists a 2-dimensional 
	non-separable classification 
	problem of the form \labelcref{eq:classification-obj}, on which 
	there exists a GD trajectory under the step size 
	$\eta = \gamma/\lambda$
	that converges to a cycle of period $k>1$.
\end{restatable}
The idea of the proof is similar to that of \cref{thm:1d-convergence-to-cycle}.
We briefly discuss the technique and main differences here 
and defer the full proof to \cref{sec:cycle-construction-2d}.
\begin{sproof}
	Fix a $\gamma\in(0,1]$, we construct a base dataset corresponding 
	to a loss $\loss$ such that the minimizer is strictly positive. 
	This base dataset is essentially two 
	1-dimensional datasets lying independently 
	in the two dimensions.
	We show that there exists a trajectory that moves almost in a straight 
	line towards the minimizer, as in 
	\cref{fig:perturbed-loss-2d} (left). Let this trajectory be 
	$w_0, w_1, \dots$, up to $w_{k-2}$, and $T(w) = w - \eta \nabla \loss(w)$ 
	be the GD update on $\loss$. 

	We then perturb $\loss$ by adding two ReLUs to it. Define 
	\begin{align}
		\loss_{\text{perturbed}}(w) = \loss(w) + c_1 \max\braces{-w\transpose p, \, 0} + c_2 \max\braces{ -w \transpose q,\, 0},
	\end{align}
	for some constants $c_1, c_2 > 0$ and vectors $p$ and $q$ 
	that we will choose later. 
	Specifically, $p$ and $q$ are positioned such that the minimizer of 
	$\loss_{\text{perturbed}}$ is identical to that of $\loss$.
	Let $\braces{\tilde w_t}_t$ be the trajectory of 
	running GD on $\loss_{\text{perturbed}}$ with the step size 
	$\eta = \gamma/\lambda_{\max}(\nabla^2\loss_{\text{perturbed}}(w^*)) = \eta$,
	and the same initialization $\tilde w_0 = w_0$.
	
	Observe that the two ReLUs are only activated for all points above the 
	line $w\transpose p=0$ and below the line $w\transpose q =0$. We 
	design our original trajectory such that neither gets activated until 
	$w_{k-2}$, thus the two trajectories coincide for all $w_t$ from 
	$t=0$ to $t=k-2.$ The point $\tilde w_{k-2} = w_{k-2}$
	crosses over the line defined by $w\transpose p=0$, thus 
	activate the first ReLU, kicking the perturbed trajectory 
	in the direction orthogonal to $\tilde w_{k-2}$
	(as in the middle plot of \cref{fig:perturbed-loss-2d}).
	At this new point $\tilde w_{k-1}$, 
	we carefully set $c_1$, $c_2$, $p$ and $q$ such that 
	the first ReLU becomes inactive while the second activates, and that 
	another GD step takes us back to $w_0$. 
	\begin{figure}
		\centering \includegraphics[width=\linewidth]{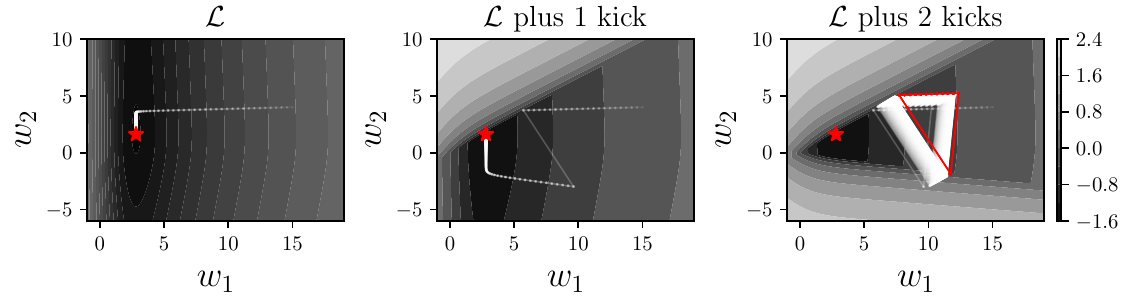}
		\vspace*{-1.2em}
		\caption{
			Illustration of our cycle construction in two dimensions using the logistic loss. 
			The objective (in log scale) in the middle and right contours are created by adding 
			large-magnitude examples to the dataset underlying $\loss$, 
			depicted on the left. Red stars mark the minimizer $w^*$ 
			of the respective losses. 
			Starting at the same $w_0$ 
			using the same $\gamma=0.4$, GD on $\loss$ and $\loss$ with 
			one set of additional data points both converge to 
			the solution. When we add in the second set of data points, 
			GD converges to a period-$13$ cycle.
			The trajectory in white marks the first $500$ steps, while the final $20$ steps are marked in red.
			Details on the synthetic dataset are in \cref{sec:exps}. A discussion 
			on the basin of attraction of such limit cycle can be found in 
			\labelcref{sec:initial-conditions}.
			}
		\label{fig:perturbed-loss-2d}
	\end{figure}
	
	As in the one-dimensional case (\cref{thm:1d-convergence-to-cycle}), 
	we then use the fact that 
	$\loss_{\text{perturbed}}$ is the limit of a continuous perturbation 
	for which we can find a valid classification dataset.
	Stability of the resulting cycle is also straightforward.
\end{sproof}
While the proof technique is similar to what we used 
in the one-dimensional case, the two-dimensional case is actually a lot more 
interesting. If $\gamma$ were to remain in the $(1,2]$ range, 
then we could have trivially stacked two (or more) 
one-dimensional constructions to obtain a $d>1$ counterexample.
But because we are pushing $\gamma$ below $1$, the unperturbed trajectory 
on the base dataset cannot cross over $0$ in either dimensions as 
it did in the one-dimensional construction. 
The difficulty lies in positioning the two 
kicks albeit the existence of an invariant set in the base dataset.

It is worth noting that by this construction, it appears that 
we cannot obtain a cycle of period exactly $2$ as the perturbation 
involves $2$ kicks, and so the length of the cycle is at least $3$. 
Nonetheless, one can easily construct a period-$2$ cycle by simply raising 
$\gamma$ to be between $1$ and $2$, 
so that the trajectory can cross over $0$ in the first coordinate,
to the left of which we can add just one ReLU to kick us
back to $w_0$ in just one step. 
As our goal is to show that in the two dimensional case, GD can converge to 
cycles on certain datasets when $\gamma \leq 2$, what we have proven is 
already sufficient. Although our result is only for the two-dimensional 
setting, it can easily be generalized to any dimension $d>1$.

\begin{cor}
	\label{cor:lift-from-2d}
	For any $\ell$ for which \cref{assmpt:individual-loss} holds,
	any $\gamma\in (0,2]$ and any $d > 2$, 
	there exists a $d$-dimensional 
	non-separable classification 
	problem of the form \labelcref{eq:classification-obj}, on which 
	there exists a GD trajectory under the step size 
	$\eta = \gamma/\lambda$
	that converges to a cycle of period $k>1$.
\end{cor}

\begin{proof}
	The easiest way to construct a counterexample in $d>2$ is to stack multiple 
	lower-dimensional counterexamples. 
	Let such a dataset be $(X',y')$ with $X'$ an $n'\times d'$ matrix
	and $d'=1$ or $2$. Let its solution be $\bar w'$, and the corresponding
	period-$k$ orbit under some $\gamma<2$ be 
	denoted as  $\bar w_1$, $\bar w_2$, ..., $\bar w_k$.
	Now consider a new classification problem with features
	$ X = I_k\otimes X'$, 
	where $I_k$ is the $k\times k$ identity matrix, and $\otimes$ 
	is the Kronecker product. 
	The corresponding label vector $y$ is just $y'$ repeated $k$ times.
	
	Essentially, we have stacked $k$ independent problems of 
	the original dataset, one across every $d'$ dimension(s). 
	As a result, the solution $\bar w$ is simply $\bar w'$ 
	repeated $k$ times. Note that both the gradient an the Hessian 
	are identical to that of the original problem in each dimension, 
	albeit scaled down by a factor of $k$ due to the increase in the number 
	of examples. The Hessian is also (block) diagonal 
	everywhere, so the new step size $\gamma/\lambda$ is the same 
	as the original but scaled up by $k$.
	This allows the extra $k$ to be cancelled out in the GD update.

	Now we can run GD with that step size initialized at $w_0\in\R^{kd'}$, and 
	as long as $w_0$ contains an entry (or two consecutive entries starting 
	at an even index) close enough to a point on the $k$-periodic orbit, 
	the trajectory will converge to a cycle. 

	In fact, one can mix and match any combination of low-dimensional 
	cycles like so. Choosing the appropriate $\gamma$ will guarantee 
	cycling in at least one of the dimensions. One can even embed 
	a low-dimensional counterexample in an arbitrary problem (of smaller 
	$\lambda$) in the same manner, potentially involving real data. 
\end{proof}

An interesting observation on the example in the previous proof is that 
it can actually violate the EoS phenomenon \citep{cohen2022gradient}. 
For simplicity, let $d'=1$. Suppose we pick a specific 
initialization $w_0$ in the neighborhood of 
$(\bar w_1, \bar w_2, \dots, \bar w_k)$.
Then in each dimension, GD will converge to the original period-$k$ 
cycle, but off-sync by precisely one time step. 
That is, each point on this $d=k$-dimensional cycle 
will be a cyclic permutation of the previous point. 
Observe that in the original one-dimensional cycle, there is a unique point 
that attains the highest second derivative, 
one that is potentially greater than $\lambda=\gamma/\eta$.
Combining the fact that the Hessian is diagonal and the $d$-dimensional 
cyclic orbit is always a permutation, the Hessian will always have its maximum 
eigenvalue exactly equal to that of the peak, and this peak 
can be greater than $2/\eta$. As illustrated in \cref{fig:cyclic-permutation}, 
we now have an example where the step size used 
is below the stability threshold, the sharpness converges to a constant 
(albeit strictly above $2/\eta$), 
and the loss does not continue to decrease. This seems to contradict 
the result of \citet{damian2022self}, which predicts that GD often 
self-stabilizes once the sharpness increases above the $2/\eta$.
However, their prediction relies on the existence of progressive sharpening, 
the lack of which in our example might be the reason 
behind this non-self-stabilizing behaviour. 

\section{Discussion}
\label{sec:conclusion}

We consider GD dynamics on classification tasks where the objective 
shares similar properties as that of logistic regression. 
Specifically, we study convergence behaviour when the problem is 
not linearly separable, and show that as the step size increases, 
GD follows a period-doubling route to chaos. We prove that although 
$\eta<2/\lambda$ is a necessary condition for global convergence, it is not 
sufficient, in that GD can converge to stable cycles below this step size 
on certain datasets. One limitation of our work is that 
beyond the one-dimensional setting, we still do not 
have a sufficient condition above the $2/L$ step size for which 
global convergence is guaranteed. Additionally, our analysis does not 
provide any practical recommendation --- even in one-dimension when 
the sufficient $1/\lambda$ step size is potentially greater than $2/L$, it is 
still impossible to know what $\lambda$ is without first finding the solution. 
Nevertheless, we hope our study provides insights into analyzing GD convergence 
with large step sizes on more general tasks involving the logistic 
or the crossentropy loss.

Although the results are mostly negative on the global convergence of 
GD above the $2/L$ step size, 
the way we construct our cycles already contains an 
interesting observation: it seems that the datasets we 
construct to generate a cycle all require a few massive ``outliers''. 
These outliers are responsible for the large gradients on 
one side of the objective that kick our trajectory back to the starting location.
Indeed, the real datasets that we used 
to generate \cref{fig:real-main} both have features scaled within the $[-1,1]$ 
range. This scaling is perhaps why it appears that all runs 
below the critical $2/\eta$ step size actually
converged to the minimizer, despite using a large scale for initializations.
The reason we chose scaled data is to improve the conditioning so that 
GD can converge within a reasonable number of steps under tiny step sizes, 
so that generating the bifurcation diagrams can be done in a reasonable amount of time.
This naturally raises some interesting questions. 
Does data normalization or feature scaling (i.e. requiring $\| x_i \| = 1$)
allow GD to converge with even larger step sizes, potentially 
up to $2/\lambda$? Does it explain why techniques such as 
layer normalization \citep{ba2016layer} in deep neural networks help stabilize training?
We leave these questions to future work.

\vskip 0.2in
\acks{
We would like to thank Frederik Kunstner, Aaron Mishkin, Liwei Jiang, and 
Sharan Vaswani for helpful discussions and feedback on the manuscript. 
S.M. was partially supported by the NSERC PGS-D award 
(PGSD3-547276-2020). 
A.O. acknowledges the financial support of the Hector Foundation.
C.D. was supported by the NSF CAREER Award (2046760). 
}

\vskip 0.2in
\bibliography{refs}

\newpage
\appendix
\renewcommand{\contentsname}{Appendix}
\tableofcontents
\clearpage

\addtocontents{toc}{\protect\setcounter{tocdepth}{3}}

\section{Proofs in one dimension}
\label{sec:one-d-proofs}
For all proofs in this section, 
we assume without loss of generality that $y_i=1$ for all $i$. 
If $y_i=-1$, we can simply flip the corresponding $x_i$ to $-x_i$ 
to obtain the same objective. 
We use $T$ to denote the GD map with a constant step size $\eta$
\begin{align}
	T(w) \coloneqq w - \eta \loss'(w).
\end{align}

Recall our \cref{assmpt:individual-loss} which holds for the logistic loss.
\lossassumptions*

\subsection{Convergence under the stable step size}
\label{sec:1d-convergence}
\onedimconvergence*

\begin{proof}
	We assume without loss of generality that $w^*>0$. 
	If $w^*=0$, then fact that $\loss''$ is maximized at $0$ implies
	$T$ is Lipschitz with a constant strictly 
	less than $1$ (because $\loss''(w^*)> 0$ when the data is non-separable). 
	Therefore $T$ is a contraction 
	which implies global convergence. If $w^*<0$, we can flip the signs of 
	all the $x_i$'s and get $w^*>0$ as the problem is symmetric 
	about $0$. The proof is split into three cases based on $w_0$:	

	\textbf{Case 1}: $w_0 > w^*$. 
	By \cref{assmpt:individual-loss}, 
	$\ell''$ is decreasing on $[0,\infty)$, and so
	$\ell'$ is concave on this interval, therefore
	\begin{align}
		\label{eq:concavity-of-lprime}
		\loss'(w) &\leq \loss'(w^*) + \loss''(w^*)(w-w^*) \nonumber \\
		&= \loss''(w^*)(w-w^*).
	\end{align}
	And so if $w > w^*$, 
	\begin{align*}
		T(w) \geq w - \eta \paren{\loss''(w^*)(w-w^*)} \geq w^*
	\end{align*}
	using the upper bound on the step size. Since we will always move towards 
	the left when $w>w^*$, convergence is thus guaranteed as $\gI= [w^*, +\infty)$
	is an invariant set. Furthermore, $\loss''$ is maximized at $w^*$ over $\gI$, 
	and is $\loss''(w_0)$-strongly convex over the bounded subset $[w^*, w_0]$.
	Then for any $t>0$ starting at $w_0$, the classic strongly-convex analysis 
	(for instance \citet{garrigos2023handbook}) gives 
	\begin{align} 
		\label{eq:rate-in-I}
		(w_t-w^*)^2 &\leq \paren{ 1- \eta\loss''(w_0) }^t (w_0 - w^*)^2 
	\end{align}
	for all $\eta \leq 1/\loss''(w^*)$.

	For all $w_0<w^*$, each GD step takes us to the right, and so we either 
	approach $w^*$ from the left, or cross over into $\gI$ within which 
	convergence is already established. 
	When we do cross over at some $w < w^*$, the farthest we can land on 
	the right is given by 
	\begin{align}
		\label{eq:farthest-we-can-go}
		T(w) &\leq w^* - \eta \loss'(w) \nonumber \\
		&\leq w^* - \eta \lim_{w\to -\infty} \loss'(w) \nonumber \\
		&= \underbrace{w^* + \eta C < \infty}_{\bar w}
	\end{align}
	for some finite $C>0$ due to boundedness of $\ell'$.
	As $\bar w$ does not depend on $t$, we can never move arbitrarily 
	far to the right.
	It remains to establish the iteration complexity.

	\textbf{Case 2}: $w_0<0$. In this case, the maximum number of 
	iterations to arrive at $w_\tau \geq 0$ is 
	\begin{align}
		\label{eq:max-iters-to-enter-positive}
		\tau = \ceil{ \frac{-w_0}{\eta \cdot \loss'(0)} }
	\end{align}
	which holds for all $\eta>0$, as $\loss'<0$ but decreasing in 
	magnitude as we increase from $w_0$ to $0$. 

	\textbf{Case 3}: $0 \leq w_0 \leq w^*$. In this interval, 
	we can use \labelcref{eq:concavity-of-lprime} again and
	use $\eta = 1/\loss''(w^*)$ to get  
	\begin{align*}
		T(w_0) &\geq w_0 - \eta ( \loss''(w^*) (w_0-w^*) ) = w^*.
	\end{align*}
	So in this interval, we cross over into $\gI$ in one step. Combining 
	\labelcref{eq:rate-in-I,eq:max-iters-to-enter-positive,eq:farthest-we-can-go},
	we have that after at most $1 + \max\braces{0, \, -\ceil{w_0\cdot \loss''(w^*)/\loss''(0)}}$
	number of iterations, we will enter $\gI$, from which convergence is guaranteed 
	at a linear rate. The rate depends on
	$\eta = 1/\loss''(w^*)$ as well as  $\bar w$ if we did not initialize in 
	$\gI$. If $w^* < 0$, then the initial number of iterations kick in when 
	$w_0 > 0$ and the entire argument holds with a sign flip.
	This completes the proof.
\end{proof}

\subsection{Cycle construction below the critical step size}
\label{sec:1d-cycle}

\onedimcycle*
\begin{proof}
We will construct for all $\gamma\in[1,2)$ a dataset for which the 
objective is given by 
\begin{align*}
	\loss(w) = \sum_{i=1}^{n+m} \ell\paren{-y_ix_iw},
\end{align*}
in which all labels $y_i=1$, 
and there are $n$ copies of $x_i=-1$, and $m>n$ copies of $x_i=1$, 
such that $w^* > 0$. We drop the scaling factor $1/(n+m)$ for simplicity.
By \cref{lemma:cross-over,lemma:extending-gd-trajectory}, 
there exists a valid combination of $n$ and $m$ such that 
a GD trajectory of the form 
\begin{align}
	\label{eq:original-trajectory}
	w_{k-1} < 0 < w_{k-2} < \dots  < w_1 < w_0
\end{align}	
is possible for some $k>1$, under the step size $\eta$
where $w_{t+1} = T(w_t)$ for $t=0,1,\dots,k-2$.
We also assume that $w_0 > T(w_{k-1})$. This is valid since if it's not the case, 
we can use \cref{lemma:extending-gd-trajectory} to extend the trajectory further
to the right as $w_0 < w_{-1} < \dots < w_{-t}$
until $w_{-t} > T(w_{k-1})$ is satisfied, then relabel the iterations.

Now let's define the perturbation
\begin{align}
	\label{eq:ell-epsilon-to-add}
	\ell_\epsilon(w) \coloneqq \begin{cases}
		\epsilon^2\cdot\ell\paren{ - \frac{w}{\epsilon^2}} & \text{if $\epsilon \neq 0$} \\
		\max\{-w,\, 0\} & \text{if $\epsilon = 0$}.
	\end{cases}
\end{align}
We also define the constant 
\begin{align}
	\label{eq:defn-c}
	c \coloneqq \frac{w_0 - T(w_{k-1})}{\eta} > 0,
\end{align}
where $\eta$ is the step size 
that generated the trajectory in \labelcref{eq:original-trajectory}.
Then for all $\epsilon$ such that $c\cdot \epsilon^2=p/q>0$, 
the perturbed objective
\begin{align}
	\label{eq:perturbed-loss}
	\loss_\epsilon(w) \coloneqq \loss(w) + c\cdot \ell_\epsilon(w)
\end{align}
is equivalent to a classification problem on a dataset consisting 
of $q$ copies of the original examples plus $p$ copies of 
the example $x=1/\epsilon^2$. 
By \cref{assmpt:individual-loss},
\begin{align*}
	\lim_{\epsilon\to 0} \epsilon^2 \cdot \ell\paren{-\frac{w}{\epsilon^2}} = \max \braces{-w, \, 0}
\end{align*}
and so $\ell_\epsilon$ is continuous.

Let $w^*_\epsilon > 0$ 
be the minimizer of $\loss_\epsilon$ for $\epsilon>0$.
We can run GD with the step size
$\eta_\epsilon = \gamma/\loss_\epsilon''(w^*_\epsilon)$ 
on this perturbed objective \labelcref{eq:perturbed-loss} using the same $\gamma$ 
that generated \labelcref{eq:original-trajectory}. Let 
\begin{align}
	\label{eq:perturbed-gd-map}
	T_\epsilon(\tilde w) \coloneqq \tilde w - \eta_\epsilon \loss_\epsilon'(\tilde w)
\end{align}
be the corresponding GD map. 
Starting at $\tilde w_0 = w_0$ in \labelcref{eq:original-trajectory}, 
let $\tilde w_{t+1} = T_\epsilon(\tilde w_t)$ for $t=0,1\dots, k$.
Since $w^*_\epsilon$, $\loss'_\epsilon$, and $\loss''_\epsilon$ are 
all continuous in $\epsilon$ (by \cref{lemma:ell-eps-cont-diff,lemma:continuous-perturbation-wstar-epsilon}), 
so is $\eta_\epsilon$ and $T_\epsilon$, with respective limits
\begin{align*}
\eta_\epsilon\to\eta \quad \text{and} \quad T_\epsilon \to T_0	
\end{align*}
as $\epsilon\to 0$.
Furthermore, for all $w>0$, $\lim_{\epsilon\to 0} \ell'_\epsilon(w)  = 0$.
Combined with the fact that original trajectory \labelcref{eq:original-trajectory} 
is positive up to $w_{k-2}$, we also have $\tilde w_{t} \to w_t$ 
as $\epsilon\to 0$ for all $t=0,1,\dots, k-1$. We will show that the next step 
takes us back to $w_0$, resulting in a cycle.

Let $G:\R^2\to\R$ be defined as
\begin{align}
	G(\epsilon, w) \coloneqq T^{k}_\epsilon(w) - w = \underbrace{T_\epsilon \circ \hdots \circ T_\epsilon}_{k \text{ times}} (w) - w.
\end{align}
By \cref{lemma:perturbed-T-differentiable}, 
$T_\epsilon$ is continuously differentiable in $w$ and $\epsilon$ except at 
$w=0$ and $\epsilon=0$, and so is $T^k_\epsilon$ as it is a function composition via 
the chain rule.
Consider the point $(\epsilon, w) = (0, w_0)$. 
Note that we must have $w_0 > w^* > 0$ due to the trajectory construction 
\labelcref{eq:original-trajectory}.
Evaluating $G$ at this point gives us
\begin{align*}
	G(0, w_0) &= T_0(T^{k-1}_0( w_0)) - w_0 \\
	&= T_0 \paren{ w_{k-1} } - w_0 \\
	&= w_{k-1}- \eta L'(w_{k-1}) + \eta \cdot c  - w_0 \tag{$w_{k-1} < 0$}\\
	&= T(w_{k-1}) + w_0 - T(w_{k-1})  - w_0 \tag{Definition of $c$ in \labelcref{eq:defn-c}}\\
	&= 0,
\end{align*}
resulting in a cycle. In addition, at any $w\neq 0$, 
\begin{align*}
	T'_\epsilon(w) &= 1 - \eta_\epsilon \loss''_\epsilon(w), \\
	\frac{\partial G}{\partial w}(0, w_0) &= \prod_{t=0}^{k-1} \paren{ 1 - \eta_0 \loss_0 ''( T_0 (\tilde w_t) ) } - 1.
\end{align*}
Note that for all $t=0,1,\dots,k-2$, $\tilde w_t = w_t > w^*$ when 
$\epsilon=0$ and that $w^* = w^*_0$, 
\begin{align*}
	\loss''_0(\tilde w_t) < \loss_0''(w^*_0) = \frac{\gamma}{\eta_0}
\end{align*}
and so 
\begin{align*}
	\abs{ \prod_{t=0}^{k-2} \paren{ 1 - \eta_0 \loss_0 ''( T_0 (\tilde w_t) ) } } < \abs{ (1-\gamma)^k } \leq 1
\end{align*}
since $\gamma \in (1,2]$. 
Observe that we can extend the trajectory 
\labelcref{eq:original-trajectory} to the right as far as needed to guarantee that 
the product with the last term $(1-\eta_0\loss''_0(T_0(\tilde w_k)))$
term is still less than $1$ in magnitude. This implies 
\begin{align*}
	\frac{\partial G}{\partial w}(0, w_0) \neq 0.
\end{align*}
We have justified the assumptions of the implicit function theorem 
(\cref{thm:implicit-function-theorem}) to conclude that 
there exists a function $\omega: I\to J$ where $I$ is an open interval about 
$\epsilon=0$ and $J$ is an open interval about $w=w_0$ such that 
$G(\epsilon, \omega(\epsilon))  = 0$ for all $\epsilon\in I$.

Lastly, we need to show that cycle is stable. Consider the function 
$\mu:\R^2\to\R_+$ defined as 
\begin{align}
	\mu(\epsilon, w) &\coloneqq \abs{ (T_\epsilon^{k})'(w)} = \abs{ \prod_{t=0}^{k-1}( 1 - \eta_\epsilon \loss_\epsilon''(T^t_\epsilon(w)) ) },
\end{align}
which is continuous in $w$ due to continuity of $\ell''_\epsilon$ 
in $w$ for all $\epsilon$, as well as continuity of $T_\epsilon$
except at $w=0$ and $\epsilon=0$ \cref{lemma:perturbed-T-differentiable}.
By \cref{lemma:ell-eps-cont-diff}, $\ell_\epsilon''(w)$ 
is continuous almost everywhere, and so is 
$\loss''_\epsilon$. Combining \cref{lemma:continuous-perturbation-wstar-epsilon}
for continuity of $\eta_\epsilon$ and \cref{lemma:perturbed-T-differentiable}
for $T_\epsilon$, we have that $\mu$ is continuous in both variables 
except at $w=0$ and $\epsilon=0$.

We know that  $\mu(0, w_0) < 1$ from earlier, and by continuity there must 
exist a neighborhood $U$ around $\epsilon=0$ such that 
$\mu(\epsilon, \omega(\epsilon)) < 1$ for all $\epsilon\in U$, as $\omega$ is 
continuous in $\epsilon$. Since $U$ and $I$ are both open, there must exist 
a nonzero $\epsilon_0\in I \cap U$ such that
\begin{align*}
	G(\epsilon_0, \omega(\epsilon_0)) = 0 \quad \text{and} \quad \mu(\epsilon_0, \omega(\epsilon_0)) < 1,
\end{align*}
implying that the trajectory on the perturbed objective with $\epsilon = \epsilon_0$ 
is a stable cycle. Stability ensures that if initialize close enough 
to $\omega(\epsilon_0)$ we will converge to this cycle, as it is a 
stable fixed point of the $T^{k}$ map.
A final note is that we can always pick $\epsilon_0$ such that 
$c\cdot \epsilon_0^2$ is rational, so that 
the perturbed objective \labelcref{eq:perturbed-loss} 
indeed corresponds to a valid classification problem. This completes the proof.
\end{proof}

\begin{lem}[Crossing over $0$]
	\label{lemma:cross-over}
	Suppose the $1$-dimensional classification objective has the form
	\begin{align}
		\label{eq:cross-over-obj}
		\loss(w) = \frac{1}{n+m} \sum_{i=1}^{n+m} \ell(-y_ix_i w),
	\end{align}
	where $y_i =1$ for all $i$, and there are $n$ copies of $x_i=-1$ and 
	$m>n$ copies of $x_i=1$. Let $w^*$ be the minimizer of $\loss$.
	Assume the individual loss function $\ell$ 
	satisfies \cref{assmpt:individual-loss}.
	Then for all $\gamma\in(1,2]$, there exists positive integers 
	$m>n$, such that with a step size of 
	$\eta = \gamma/\loss''(w^*)$, there exists a point $\bar w > w^* > 0$
	at which one GD step takes us to some $T(\bar w) < 0$.
	That is, we cross over from the right of $w^*$ to 
	a point below $0$.
\end{lem}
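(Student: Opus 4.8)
The plan is to engineer the dataset so that the minimizer $w^*$ sits only just to the right of the origin, where a single gradient step with $\eta=\gamma/\loss''(w^*)$ and $\gamma>1$ is guaranteed to overshoot past $0$. Since every label is $+1$, the $n$ copies of $x_i=-1$ contribute $\ell(w)$ and the $m$ copies of $x_i=+1$ contribute $\ell(-w)$, so
\begin{align*}
	\loss(w) = \tfrac{1}{n+m}\paren{n\,\ell(w) + m\,\ell(-w)}, \quad \loss'(w) = \tfrac{1}{n+m}\paren{n\ell'(w)-m\ell'(-w)}, \quad \loss''(w) = \tfrac{1}{n+m}\paren{n\ell''(w)+m\ell''(-w)}.
\end{align*}
First I would record that $w^*$ is the unique root of $n\ell'(w)=m\ell'(-w)$. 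Writing this as $\psi(w^*)=m/n$ with $\psi(z)\coloneqq \ell'(z)/\ell'(-z)$, and noting that under \cref{assmpt:individual-loss} the map $\psi$ is continuous, strictly increasing (as $\ell'>0$ is increasing and $\ell''>0$), and satisfies $\psi(0)=1$, I conclude that $w^*=\psi\inverse(m/n)>0$, and crucially that $w^*\to 0^+$ as the ratio $m/n\to 1^+$.

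Second, I would pass to the map obtained by letting $m/n\to 1$ along integers, e.g.\ $m=n+1$ with $n\to\infty$. Then the weights $n/(n+m)$ and $m/(n+m)$ both tend to $1/2$, so on every compact interval $\loss'$ converges uniformly to $\tfrac12\paren{\ell'(w)-\ell'(-w)}$, while $w^*\to 0$ forces $\loss''(w^*)\to\ell''(0)>0$ and hence $\eta\to \gamma/\ell''(0)$. Consequently the GD map $T=\mathrm{id}-\eta\loss'$ converges uniformly on compact sets to
\begin{align*}
	T_\infty(w) = w - \frac{\gamma}{2\ell''(0)}\paren{\ell'(w)-\ell'(-w)}.
\end{align*}
A one-line computation gives $T_\infty(0)=0$ and $T_\infty'(0)=1-\gamma<0$, so $T_\infty(\bar w)<0$ for every sufficiently small $\bar w>0$.

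Third, I would fix one such $\bar w$ (a constant depending only on $\ell$ and $\gamma$, with $T_\infty(\bar w)=:-\delta<0$) and transfer the strict inequality back to a finite dataset by continuity. For $m/n$ close enough to $1$, uniform convergence yields $T(\bar w)<-\delta/2<0$, while $w^*\to 0$ yields $w^*<\bar w$; choosing $m=n+1$ with $n$ large enough realizes both simultaneously. This produces integers $m>n$ together with a point $\bar w>w^*>0$ at which one GD step lands at $T(\bar w)<0$, which is exactly the claim.

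The main obstacle is conceptual rather than computational. Because $\loss'(\bar w)<n/(n+m)$ for every finite $\bar w$, a single step can move rightward by at most $\eta\cdot n/(n+m)$, so landing below $0$ is impossible unless $w^*$ itself is near the origin; this rules out the naive regime $m\gg n$ (where $w^*$ is large) and forces the $m/n\to 1$ construction. The remaining effort is the routine uniform-convergence bookkeeping needed to promote the limiting inequality $T_\infty(\bar w)<0$ to the finite-$n$ statement while maintaining $\bar w>w^*$, together with verifying the sign of $T_\infty'(0)$; none of these steps is delicate, the only real idea being to place $w^*$ just above $0$.
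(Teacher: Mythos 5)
Your proof is correct, and it reaches the same key insight as the paper --- drive the ratio $m/n$ toward $1^+$ so that $w^*\to 0^+$, where the GD map's slope $1-\eta\loss''(w^*)\approx 1-\gamma$ is negative, forcing points just right of $w^*$ to land below zero --- but it completes the argument by a genuinely different mechanism. The paper parametrizes the family continuously, writing the objective as $\loss_c(w)=\ell(w)+(1+c^2)\ell(-w)$, works at the exactly symmetric case $c=0$ (where $w^*=0$) to produce a point $\tilde w>0$ with $T(\tilde w)<0$, and then invokes the implicit function theorem (together with a continuity-of-argmin result, \cref{cor:continuity-of-argmin}, to get continuity of $c\mapsto w^*_c$ and hence of the step size) to perturb to a nonzero rational $c_0$, finally rescaling to recover integers $m>n$. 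You instead take the discrete sequence $m=n+1$, $n\to\infty$, show the GD maps $T_n$ converge (uniformly on compacts, using boundedness of $\ell'$ and $w^*_n\to 0$, $\eta_n\to\gamma/\ell''(0)$) to the symmetric limit map $T_\infty$, verify $T_\infty(0)=0$, $T_\infty'(0)=1-\gamma<0$, and transfer the strict inequality $T_\infty(\bar w)<0$ back to finite $n$ by plain convergence. This buys you two things: you avoid the IFT regularity checks and the continuity-of-minimizer machinery entirely, and your construction produces valid integers $m>n$ directly, with no rationality bookkeeping. What the paper's route buys in exchange is reuse: the same IFT template is the engine of the main cycle construction (\cref{thm:1d-convergence-to-cycle}), and it yields the crossing for an open interval of ratios rather than a sequence. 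One cosmetic slip in your closing remark: the step from $\bar w>w^*>0$ to below $0$ moves \emph{leftward}, not rightward, and your claim that the regime $m\gg n$ is ruled out is not actually established (as $m/n\to\infty$, $\loss''(w^*)\to 0$ so $\eta$ blows up too); but that paragraph is motivational and nothing in your proof depends on it.
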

\begin{proof}
	The objective can be simplified as 
	\begin{align*}
		\loss(w) &= \frac{1}{n+m} ( n\ell(w) + m\ell(-w) ) .
	\end{align*}
	Up to a scaling factor, minimizing this is equivalent to minimizing 
	\begin{align}
		\loss_c(w) &= \ell(w) + (1+c^2)\ell(-w)
	\end{align}
	for some $c\in\R$, and $w^*_c$ be the minimizer of this objective.
	Let $T_c$ denote the corresponding GD map 
	\begin{align*}
		T_c(w) = w - \eta_c \loss_c'(w)
	\end{align*}
	where 
	\begin{align}
		\label{eq:eta-c}
		\eta_c = \frac{\gamma}{\loss_c''(w^*_c)}.
	\end{align}
	Consider the case of $c=0$. As all examples $x_i$ have the same magnitude 
	with the same label, the minimizer is at $w^*=0$. Thus 
	the step size is simply $\eta = \frac{\gamma}{\loss''(0)}$.
	The derivative of the GD map in this case is just 
	\begin{align*}
		T'(w) &= 1 - \eta \loss''(w) = 1 - \frac{\gamma}{\loss''(0)} \loss''(w).
	\end{align*}
	Since $\ell''$ is decreasing on $[0, \infty)$, so is $\loss''$.
	By continuity, there must exist $w>0$ such that $T'(w) < 0$, as 
	$\gamma > 1$. 
	As $T(0) = 0$ and $T'(0) = 1 - \gamma < 0$, $T$ is decreasing in the 
	neighborhood of $0$ to the right. Thus there exists $\tilde w>0$ 
	depending on the value of $\gamma\in (1,2]$ sufficiently 
	close to $0$ such that $T(\tilde w) < 0$.
	
	Now define $G:\R^2\to\R$ to be
	\begin{align*}
		G(c, w) = T_c(w) - T(\tilde w)
	\end{align*}
	At the point $c=0$ and $w=\tilde w$, clearly $G(0, \tilde w) = 0$, and 
	\begin{align*}
		\frac{\partial G}{\partial w} (0, \tilde w) =  1 - \eta_0 \loss_0''(\tilde w) \neq 0.
	\end{align*}

	Define $\phi:\R^2\to\R$ to be $\phi(c,w) \coloneqq \loss_c(w)$. 
	Clearly, $\phi$ is twice-continuously differentiable in both both variables 
	as $\ell$ is twice-continuously differentiable. 
	\cref{assmpt:individual-loss} on $\ell$ lets us invoke 
	\cref{cor:continuity-of-argmin} to have that $w^*_c$ is continuous and 
	differentiable as a function of $c$. 
	Finally, by the chain rule, $T_c(w)$ is also continuously differentiable 
	as a function of both $c$ and $w$.
	Together, we have that $G$ 
	is continuously differentiable in a neighborhood of the point
	$c=0$ and $w=\tilde w$.
	By the implicit function theorem (\cref{thm:implicit-function-theorem}), 
	there exists a function $\omega:I\to J$ where $I$ is an open interval about 
	$c=0$ and $J$ is an open interval about $w=\tilde w$ such that 
	$G(c, \omega(c)) = 0 $ for all $c\in I$. It suffices to pick a nonzero
	and rational $c_0\in I$ such that $\bar w = \omega(c_0)> 0$ and 
	$G(c_0, \bar w) = T_{c_0}(\bar w) - T(\tilde w) = 0$.
	
	Therefore, we can rescale the objective $\loss_c$ such that 
	it is equivalent to \labelcref{eq:cross-over-obj} with a valid combination 
	of $m$ and $n$, on which one GD step with step size \labelcref{eq:eta-c}
	takes us from $\bar w >0$ to $T_c(\bar w) = T(\tilde w) < 0$.
\end{proof}

\begin{lem}[Trajectory extension]
	\label{lemma:extending-gd-trajectory}
	Suppose $d=1$ and the minimizer of the objective \labelcref{eq:classification-obj} 
	satisfies $w^*>0$, where \cref{assmpt:individual-loss} on the individual 
	loss function holds. 
	Then for all $w' > w^*$ and $\eta>0$, 
	there exists $w>w'$ such that $T(w) = w'$.
\end{lem}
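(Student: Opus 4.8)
The plan is to realize the desired preimage as a root of the continuous function
\begin{align*}
	F(w) \coloneqq T(w) - w' = w - \eta\,\loss'(w) - w'
\end{align*}
on the ray $(w', \infty)$, and to locate it with the intermediate value theorem. First I would pin down the sign of $F$ at the two ends of a suitable interval. At the left endpoint $w = w'$, note that $\loss$ is strictly convex in the non-separable case: non-separability forces at least one $x_i \neq 0$, so $\loss''(w) = \tfrac{1}{n}\sum_i \ell''(-y_i x_i w)\,x_i^2 > 0$ using strict convexity of $\ell$ (\cref{assmpt:individual-loss}, item 1). Hence $\loss'$ is strictly increasing, and since $\loss'(w^*) = 0$ we get $\loss'(w') > 0$ for every $w' > w^*$. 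Therefore
\begin{align*}
	F(w') = -\eta\,\loss'(w') < 0.
\end{align*}

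For the right end, the key observation is that $\loss'$ is globally bounded. By \cref{assmpt:individual-loss} (item 2), $\ell'$ takes values only in $[0, a]$ for the finite upper bound $a$, so $|\loss'(w)| \leq \tfrac{a}{n}\sum_i |x_i| \eqqcolon C$ for all $w$. Consequently $F(w) \geq w - \eta C - w' \to +\infty$ as $w \to \infty$, so $F(w) > 0$ for all sufficiently large $w$. Since $\loss$ is $C^3$ (item 1), $F$ is continuous, and applying the intermediate value theorem on an interval $[w', \bar w]$ with $\bar w$ large enough yields a point $w \in (w', \bar w)$ with $F(w) = 0$, i.e. $T(w) = w'$. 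The \emph{strict} inequality $F(w') < 0$ ensures the root is genuinely to the right of $w'$ rather than equal to it, which is exactly what the statement asks for.

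There is no serious obstacle in this argument; the two points deserving care are precisely the two sign computations. The first, $\loss'(w') > 0$ strictly, is what guarantees $w > w'$ (and not merely $w \geq w'$) and relies on strict convexity. The second, the boundedness of $\loss'$, is where the large-argument behaviour of $\ell'$ in \cref{assmpt:individual-loss} enters: it forces the identity term $w$ in $T(w)$ to dominate the bounded term $\eta\,\loss'(w)$, so that $F$ grows without bound and the intermediate value theorem applies. Everything else is continuity bookkeeping.
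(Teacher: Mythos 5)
Your proof is correct and takes essentially the same route as the paper's: both rest on boundedness of $\loss'$ (from \cref{assmpt:individual-loss}) forcing $T(w)\to+\infty$, continuity of $T$, and strict convexity of $\loss$ to make the resulting preimage lie strictly to the right of $w'$. The only cosmetic difference is that you anchor the intermediate value theorem at $w'$ itself via $F(w')<0$, whereas the paper anchors it at the fixed point $T(w^*)=w^*$ and deduces $w>w'$ afterwards from $\loss'(w)>0$.
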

\begin{proof}
	Since $\loss$ is Lipschitz, 
	\begin{align*}
		\lim_{w\to +\infty} T(w) &= \lim_{w\to +\infty} w - \eta \loss'(w) \\
		&= \lim_{w\to +\infty} w - C \tag{For some constant $C$} \\
		&= +\infty.
	\end{align*}
	Continuity of $T$ and finiteness of $w^*$ implies the existence of 
	$w > w^*$ such that $T(w) > T(w^*) = w^*$.
	By strict convexity of $\loss$, we have $\loss'(w) > 0$, implying
	$w' = w - \eta \loss'(w) < w$.
\end{proof}

\begin{lem}[Continuous differentiability of perturbed loss]
	\label{lemma:ell-eps-cont-diff}
	Let $\rho:\R^2\to\R$ be defined as 
	\begin{align*}
		\rho(\epsilon, w) = \begin{cases}
			\epsilon^2 \ell\paren{-\frac{w}{\epsilon^2}}  & \text{if $\epsilon \neq 0$} \\
			\max\{-w,\, 0\} & \text{if $\epsilon = 0$.}
		\end{cases}
	\end{align*}
	where $\ell$ satisfies \cref{assmpt:individual-loss}.
	Then $\rho$ has continuous first partial derivatives 
	everywhere except at $w=0$ and $\epsilon=0$. 
	The second partial 
	derivatives $\frac{\partial^2\rho}{\partial^2}, \frac{\partial^2\rho}{\partial\epsilon\partial w}, \frac{\partial^2\rho}{\partial w\partial \epsilon}$
	are continuous everywhere except $w = 0$ and $\epsilon=0$, with
	$\frac{\partial^2\rho}{\partial^2\epsilon}$ continuous on $w>0$.
\end{lem}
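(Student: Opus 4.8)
The plan is to split $\R^2$ into the open region $\{\epsilon\neq 0\}$, where $\rho$ is smooth and everything is routine, and the glueing line $\{\epsilon = 0\}$, where I would reduce every continuity question to a single quantitative input: the decay rate of $\ell''$ at $\pm\infty$. First I would record the derivatives off the line. Writing $z = -w/\epsilon^2$, the map $(\epsilon,w)\mapsto z$ is smooth for $\epsilon\neq 0$ and $\ell\in C^3$, so $\rho\in C^3$ on $\{\epsilon\neq 0\}$ with
\begin{align*}
\frac{\partial\rho}{\partial w} = -\ell'(z), \qquad \frac{\partial\rho}{\partial\epsilon} = 2\epsilon\bigl(\ell(z) - z\ell'(z)\bigr),
\end{align*}
and, using $\tfrac{d}{dz}\bigl(\ell(z)-z\ell'(z)\bigr) = -z\ell''(z)$,
\begin{align*}
\frac{\partial^2\rho}{\partial w^2} &= \frac{\ell''(z)}{\epsilon^2}, \\
\frac{\partial^2\rho}{\partial\epsilon\,\partial w} = \frac{\partial^2\rho}{\partial w\,\partial\epsilon} &= -\frac{2w}{\epsilon^3}\ell''(z), \\
\frac{\partial^2\rho}{\partial\epsilon^2} &= 2\bigl(\ell(z) - z\ell'(z)\bigr) + 4z^2\ell''(z).
\end{align*}
These are continuous on $\{\epsilon\neq 0\}$, which immediately settles every claim away from the line, including at points $(\epsilon_0, 0)$ with $\epsilon_0\neq 0$, where $z=0$ is harmless.

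The decisive observation is that the decay condition in \cref{assmpt:individual-loss} is two-sided: with $t = 1/\epsilon$, the limit $\lim_{\epsilon\to 0}\epsilon^{-2}\ell''(1/\epsilon) = 0$ is precisely $\lim_{t\to+\infty}t^2\ell''(t)=0$ together with $\lim_{t\to-\infty}t^2\ell''(t)=0$ (the one-sided limits $\epsilon\to 0^\pm$), so $\ell''(z) = o(z^{-2})$ at both ends. Integrating, and using $\ell'(-\infty)=0$, $\ell'(+\infty)=1$ and $\ell(-\infty)=0$, I obtain $\ell'(z)=o(|z|^{-1})$ and $\ell(z)\to 0$ as $z\to-\infty$, and $1-\ell'(z)=o(z^{-1})$ as $z\to+\infty$. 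Now fix $(0,w_0)$ with $w_0\neq 0$; as $(\epsilon,w)\to(0,w_0)$ we have $z\to-\infty$ if $w_0>0$ and $z\to+\infty$ if $w_0<0$, with $|w|$ bounded away from $0$ and $\infty$, so $\epsilon^{-2}=|z|/|w|$ and each formula becomes a rate question in $z$ alone. Thus $\partial_w^2\rho = |z|\ell''(z)/|w|\to 0$ and $\partial_\epsilon\partial_w\rho$ is of size $|z|^{3/2}\ell''(z)\to 0$, while $\partial_w\rho=-\ell'(z)\to 0$ or $-1$; these match the values read off the locally affine restriction $\rho(0,w)=\max\{-w,0\}$, giving continuity. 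Existence of the $\epsilon$-partial on the line itself I would obtain from the fundamental theorem of calculus, writing $\rho(\epsilon,w_0)-\rho(0,w_0)=\int_0^\epsilon \partial_s\rho(s,w_0)\,ds$ — legitimate since $\rho$ is continuous across the line by \cref{cor:limit-is-relu} — and letting $\epsilon\to 0$. Once the partials exist in a neighborhood and extend continuously to the point, the standard criterion (continuous partials imply $C^1$, and $C^2$ where the second partials are included) closes each case, and the ReLU kink excludes only the single point $(0,0)$.

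The crux is $\partial_\epsilon^2\rho = 2\bigl(\ell(z)-z\ell'(z)\bigr) + 4z^2\ell''(z)$, where I expect the real work to lie. The term $4z^2\ell''(z)\to 0$ at both ends by the decay condition, but $A(z)\coloneqq\ell(z)-z\ell'(z) = \ell(0)-\int_0^z s\ell''(s)\,ds$ behaves asymmetrically. As $z\to-\infty$ (the $w_0>0$ side) both $\ell(z)\to 0$ and $z\ell'(z)=z\cdot o(|z|^{-1})\to 0$, so $A(z)\to 0$ and $\partial_\epsilon^2\rho$ extends continuously — this is exactly the $w>0$ claim. As $z\to+\infty$ (the $w_0<0$ side), however, the integrand $s\ell''(s)=o(s^{-1})$ need not be integrable, so $A(z)$ may diverge (logarithmically) and $\partial_\epsilon^2\rho$ fails to extend — which is why the lemma restricts this one derivative to $w>0$. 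Notably the first partial survives the divergence, since $\partial_\epsilon\rho = 2\epsilon A(z)$ carries an extra factor $\epsilon$ that kills the slow growth. The main obstacle is therefore making the $o(\cdot)$ estimates on $\ell,\ell',\ell''$ quantitative and uniform in $w$ near $w_0$ so that they control genuine joint two-variable limits, and pinning down the exact threshold at which $A(z)$ stops converging; the identity $A'(z)=-z\ell''(z)$ is the bookkeeping device that makes both the $w>0$ convergence and the $w<0$ divergence transparent.
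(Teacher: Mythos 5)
Your proposal is correct and follows the same overall strategy as the paper's proof: compute the partials explicitly on $\{\epsilon\neq 0\}$, use the decay hypothesis read two-sidedly (i.e.\ $z^2\ell''(z)\to 0$ as $z\to\pm\infty$) to kill the relevant limits as $\epsilon\to 0$, match against the piecewise-affine values on the line $\epsilon=0$, and isolate $\partial^2\rho/\partial\epsilon^2$ as the one derivative that only survives on $w>0$. The differences are in bookkeeping: the paper stays in $(\epsilon,w)$ coordinates, converts $\ell$- and $\ell'$-limits into $\ell''$-limits by repeated L'H\^opital, and establishes existence of the partials at $\epsilon=0$ by direct difference quotients, whereas you substitute $z=-w/\epsilon^2$, integrate the decay condition into $\ell'(z)=o(|z|^{-1})$ and $1-\ell'(z)=o(z^{-1})$, and get existence on the line from the mean-value/FTC criterion applied to the continuously extended derivative. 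Both routes are legitimate, and yours is arguably cleaner on the genuine two-variable limits, which you correctly flag as the point where uniformity in $w$ near $w_0$ is needed. One point deserves emphasis: your formula $\partial^2_\epsilon\rho = 2\bigl(\ell(z)-z\ell'(z)\bigr)+4z^2\ell''(z)$, equivalently $2\ell(z)+\tfrac{2w}{\epsilon^2}\ell'(z)+\tfrac{4w^2}{\epsilon^4}\ell''(z)$, is the correct one; the paper's displayed computation carries coefficient slips (it writes $\tfrac{2w}{\epsilon^2}\ell'$ where $\tfrac{4w}{\epsilon^2}\ell'$ should appear, and $\tfrac{2w^2}{\epsilon^4}\ell''$ where $\tfrac{4w^2}{\epsilon^4}\ell''$ should appear), producing a spurious cancellation of the $\ell'$ terms. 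The slip is harmless for the lemma's conclusion, since the residual term $\tfrac{2w}{\epsilon^2}\ell'(z)=-2z\ell'(z)\to 0$ as $z\to-\infty$ on $w>0$ — exactly what your $A(z)=\ell(z)-z\ell'(z)\to 0$ argument, via the identity $A'(z)=-z\ell''(z)$, makes transparent — but your version is the one that survives checking, and your observation that $A(z)$ may diverge logarithmically as $z\to+\infty$ while the prefactor $\epsilon$ still rescues the \emph{first} $\epsilon$-partial for $w<0$ is a sharper account of the asymmetry than the paper gives.
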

\begin{proof}
	The partial derivative wrt $w$ is 
	\begin{align}
		\frac{\partial \rho}{\partial w}(\epsilon, w) = \begin{cases}
				-\ell'\paren{-\frac{w}{\epsilon^2}} & \text{if $\epsilon \neq 0$} \\
				-1 & \text{if $\epsilon = 0$ and $w<0$}\\
				0 & \text{if $\epsilon = 0$ and $w>0$}
		\end{cases}
	\end{align}
	which is continuous everywhere except at $w=0$, since 
	$\lim_{\epsilon\to 0} -\ell'(-w/\epsilon^2) = -1 $ for $w < 0$ and 
	$0$ for $w>0$. 
	The partial derivative wrt $\epsilon$ when $\epsilon\neq 0$ is
	\begin{align}
		\frac{\partial \rho}{\partial \epsilon}(\epsilon, w) = 2\epsilon\cdot\ell\paren{ - \frac{w}{\epsilon^2}} + \frac{2w}{\epsilon}\cdot\ell'\paren{ -\frac{w}{\epsilon^2} },
	\end{align}
	which is clearly continuous. For all $w\in\R$, 
	taking the limit as $\epsilon\to 0$, if $w \geq 0$, 
	then the first term goes to $0$ as 
	$\lim_{z\to-\infty} \ell(z) = 0$. For the second term, we have 
	\begin{align*}
		\lim_{\epsilon \to 0} \frac{2w}{\epsilon} \ell'\paren{-\frac{w}{\epsilon^2}} &= \lim_{\epsilon \to 0}  \frac{ 4w^2 }{ \epsilon^3 } \ell''\paren{-\frac{w}{\epsilon^2}}  = 0
	\end{align*}
	where we first apply L'Hôpital's rule 
	followed by using our assumption on the decay rate of $\ell''$.
	Similarly, if $w < 0$, we can apply L'Hôpital's rule on 
	this term and merge with the second to obtain 
	\begin{align*}
		\lim_{\epsilon \to 0}  \frac{\partial \rho}{ \partial \epsilon } (\epsilon, w) &= \lim_{\epsilon \to 0} -\frac{4w}{\epsilon} \ell'\paren{-\frac{w}{\epsilon^2}} + \frac{2w}{\epsilon}   \ell'\paren{-\frac{w}{\epsilon^2}} \\
		&= \lim_{\epsilon \to 0} -\frac{2w}{\epsilon} \ell'\paren{-\frac{w}{\epsilon^2}} \\
		&= \lim_{\epsilon \to 0} -\frac{ 4w^2 }{\epsilon^3} \ell''\paren{-\frac{w}{\epsilon^2}} \tag{ \text{L'Hôpital's rule again} }
	\end{align*}
	which is again $0$ using the decay rate assumption. 
	At $\epsilon=0$, 
	the partial derivative can be evaluated as 
	\begin{align*}
		\left. \frac{\partial \rho }{ \partial \epsilon} (\epsilon, w) \right\rvert_{\epsilon=0} &= \lim_{t\to 0} \frac{ 1}{t} \paren{ t^2\ell\paren{-\frac{w}{t^2}} - \max\braces{-w,\, 0}  } = 0
	\end{align*}
	as long as $w \neq 0$,
	which proves continuity of the partial derivative of $\rho$ wrt $\epsilon$ 
	for all $w$. Therefore, $\rho$ has continuous partial derivatives in both 
	variables except at $w=0$ and $\epsilon=0$.
	For the second derivatives, consider taking derivatives again wrt $w$,
	\begin{align}
		\frac{\partial^2 \rho }{\partial^2 w } (\epsilon, w) &= \begin{cases}
			\frac{1}{\epsilon^2}\ell''\paren{-\frac{w}{\epsilon^2}} & \text{if $\epsilon\neq 0$} \\
			0 & \text{if $\epsilon = 0$ and $w\neq 0$},
		\end{cases} \\
		\frac{\partial^2 \rho }{\partial w \partial \epsilon} (\epsilon, w) &= \begin{cases}
			-\frac{2w}{\epsilon^3} \ell''\paren{ -\frac{w}{\epsilon^2} } & \text{if $\epsilon \neq 0$} \\
			0 & \text{if $\epsilon = 0$}
		\end{cases} 
	\end{align}
	which is continuous in $w$ as long as $w$ and $\epsilon$ are not both $0$,
	and has limit $0$ as $\epsilon\to 0$ using our decay rate assumption,
	and thus also continuous in $\epsilon$.

	Furthermore, 
	\begin{align}
		\frac{\partial^2\rho}{\partial \epsilon \partial w}(\epsilon, w) &= \begin{cases}
			-\frac{2w}{\epsilon^3} \ell''\paren{-\frac{w}{\epsilon^2}} & \text{if $\epsilon\neq 0$} \\ 
			0 & \text{if $\epsilon=0$ and $w\neq 0$}
		\end{cases} \\ 
		\frac{\partial^2\rho}{\partial^2 \epsilon} (\epsilon, w) &= 2\ell\paren{-\frac{w}{\epsilon^2}} + \frac{2w}{\epsilon^2}\ell'\paren{-\frac{w}{\epsilon^2}} - \frac{2w}{\epsilon^2} \ell'\paren{-\frac{w}{\epsilon^2}} + \frac{2w^2}{\epsilon^4}\ell''\paren{-\frac{w}{\epsilon^2}} \nonumber \\
		&= 2\ell\paren{-\frac{w}{\epsilon^2}} + \frac{2w^2}{\epsilon^4}\ell''\paren{-\frac{w}{\epsilon^2}}  \nonumber 
	\end{align}
	which is continuous in $w$ and $\epsilon$ for all
	$(\epsilon, w)\neq (0,0)$.
	At the point $(\epsilon, w)=(0, 0)$, 
	\begin{align*}
		\lim_{\epsilon\to 0} \frac{\partial^2\rho}{\partial\epsilon\partial w}(\epsilon, w) = 0
	\end{align*}
	using our decay rate assumption, and 
	\begin{align*}
		\left. \frac{\partial^2 \rho }{ \partial \epsilon \partial w} (\epsilon, w) \right\rvert_{\epsilon=0} &= \lim_{t\to 0}  \frac{1}{t} \paren{ -\ell'\paren{-\frac{w}{t^2}} - \frac{\partial \rho}{\partial w} (0, w) } = 0
	\end{align*}
	as long as $w\neq 0$, and so $\frac{\partial^2 \rho }{ \partial \epsilon \partial w}$ 
	is continuous at $\epsilon=0$ as long as $w\neq 0$.
	Finally, for $w > 0$, 
	\begin{align*}
		\lim_{\epsilon \to 0} \frac{\partial^2\rho}{\partial^2\epsilon}(\epsilon, w) = 0
	\end{align*}
	using the limit of $\ell$ and the decay rate assumptions, and 
	\begin{align*}
		\left. \frac{\partial^2\rho}{\partial^2\epsilon}\right\rvert_{\epsilon=0} &= \lim_{t\to 0} \frac{1}{t} \paren{ 2t\ell\paren{-\frac{w}{\epsilon^2}} + \frac{2w}{\epsilon}\ell'\paren{-\frac{w}{t^2}} - \frac{\partial \rho}{\partial \epsilon}(0, w) } \\
		&= \lim_{t\to 0} \frac{1}{t} \paren{ 2t\ell\paren{-\frac{w}{t^2}} + \frac{2w}{t}\ell'\paren{-\frac{w}{t^2}} - 0} \\
		&= 0
	\end{align*}
	following previous arguments.

	In summary, we have shown that $\rho$ has continuous first partial derivatives
	everywhere except $w=0$ and $\epsilon=0$, and continuous second partial 
	derivatives $\frac{\partial^2\rho}{\partial^2 w}, \frac{\partial^2\rho}{\partial\epsilon\partial w}, \frac{\partial^2\rho}{\partial w\partial \epsilon}$
	everywhere except $w = 0$ and $\epsilon=0$, and lastly 
	$\frac{\partial^2\rho}{\partial^2\epsilon}$ is continuous on $w>0$.
\end{proof}

\begin{lem}[Continuous differentiability of perturbed minimizer]
	\label{lemma:continuous-perturbation-wstar-epsilon}
	Suppose \cref{assmpt:individual-loss} holds.
	As a function of $\epsilon$, the minimizer $w^*_\epsilon$ 
	of the perturbed objective \labelcref{eq:perturbed-loss}
	is continuous everywhere on $\R$ 
	and continuously differentiable everywhere except at $\epsilon=0$. 
\end{lem}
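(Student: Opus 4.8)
The plan is to characterize $w^*_\epsilon$ through the first-order optimality condition $\loss_\epsilon'(w)=0$ and then split into two regimes: $\epsilon\neq 0$, where the implicit function theorem yields continuous differentiability, and $\epsilon=0$, where a monotonicity-and-squeeze argument yields continuity (continuous differentiability at $\epsilon\neq0$ already supplies continuity there, so only the point $\epsilon=0$ needs separate treatment). First I would record two structural facts. Since $\loss$ is strictly convex and $c\,\ell_\epsilon$ is convex (each $\ell_\epsilon$ inherits convexity from $\ell$), the perturbed objective \labelcref{eq:perturbed-loss} is strictly convex with a unique minimizer, so $w^*_\epsilon$ is well defined. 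Writing $F(\epsilon,w)\coloneqq \loss_\epsilon'(w)=\loss'(w)+c\,\partial_w\rho(\epsilon,w)$ with $\rho$ as in \cref{lemma:ell-eps-cont-diff}, for $\epsilon\neq0$ we have $\partial_w\rho(\epsilon,w)=-\ell'(-w/\epsilon^2)<0$, so $F(\epsilon,w^*)<0$; since $F(\epsilon,\cdot)$ is strictly increasing this forces $w^*_\epsilon>w^*>0$. In particular the minimizer stays bounded away from the nonsmooth point $w=0$, which is exactly what lets me use the regularity of $\rho$ established in \cref{lemma:ell-eps-cont-diff}.

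For fixed $\epsilon_0\neq0$, I would apply the implicit function theorem (\cref{thm:implicit-function-theorem}) to $F$ at $(\epsilon_0,w^*_{\epsilon_0})$. Because $w^*_{\epsilon_0}>0$ and $\epsilon_0\neq0$, \cref{lemma:ell-eps-cont-diff} guarantees that $\partial_w\rho$ together with the second partials $\partial^2_{ww}\rho$ and $\partial^2_{\epsilon w}\rho$ are continuous in a neighborhood, so $F$ is $C^1$ there. The transversality condition is immediate from strict convexity: $\partial_w F(\epsilon_0,w^*_{\epsilon_0})=\loss''(w^*_{\epsilon_0})+c\,\epsilon_0^{-2}\ell''(-w^*_{\epsilon_0}/\epsilon_0^2)>0$. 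Hence $F(\epsilon,w^*_\epsilon)=0$ implicitly defines $w^*_\epsilon$ as a $C^1$ function of $\epsilon$ near $\epsilon_0$, establishing continuous differentiability for every $\epsilon\neq0$.

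It then remains to prove continuity at $\epsilon=0$, i.e.\ $w^*_\epsilon\to w^*_0=w^*$. Since $\rho$ depends on $\epsilon$ only through $\epsilon^2$, the map $\epsilon\mapsto w^*_\epsilon$ is even, so it suffices to treat $\epsilon\to0^+$. Fix $\delta>0$. Strict convexity gives $\loss'(w^*+\delta)>0$, and at the fixed positive point $w^*+\delta$ we have $\lim_{\epsilon\to0}\ell'(-(w^*+\delta)/\epsilon^2)=0$ by \cref{assmpt:individual-loss}, so $F(\epsilon,w^*+\delta)\to\loss'(w^*+\delta)>0$. Thus $F(\epsilon,w^*+\delta)>0$ for all small $\epsilon$, and monotonicity of $F(\epsilon,\cdot)$ forces $w^*_\epsilon<w^*+\delta$; combined with $w^*_\epsilon>w^*$ this squeezes $w^*_\epsilon\in(w^*,w^*+\delta)$, and letting $\delta\to0$ gives $w^*_\epsilon\to w^*$.

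I expect the main obstacle to be bookkeeping around the nonsmooth point rather than any single hard estimate: one must verify at the outset, via the sign argument above, that the minimizer never visits $w=0$, so that the regularity conclusions of \cref{lemma:ell-eps-cont-diff} — which fail precisely at $w=0$ and $\epsilon=0$ — are genuinely available where the implicit function theorem is invoked. The continuity-at-zero step is subtle only because the perturbation's gradient does not converge uniformly on all of $\R$ (it collapses to a ReLU kink at the origin); I therefore lean on pointwise convergence at the single test point $w^*+\delta$ together with the monotonicity of $F(\epsilon,\cdot)$, rather than any uniform-convergence-of-minimizers theorem.
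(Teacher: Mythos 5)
Your proof is correct, and it reaches the conclusion by a genuinely different route than the paper on the continuity half of the statement. The paper splits the lemma the same way you do (continuity everywhere, plus differentiability via the implicit function theorem applied to $G(\epsilon,w)=\loss_\epsilon'(w)$), but it obtains continuity --- including at $\epsilon=0$ --- by invoking the variational-analysis machinery of \cref{cor:continuity-of-argmin}, which requires verifying properness, strict convexity, and positivity of the horizon function $\loss_0^\infty$; your monotonicity-and-squeeze argument ($F(\epsilon,w^*)<0$ forces $w^*_\epsilon>w^*$, while pointwise convergence $\ell'(-(w^*+\delta)/\epsilon^2)\to 0$ at the single test point $w^*+\delta$ forces $w^*_\epsilon<w^*+\delta$ for small $\epsilon$) replaces that citation with an elementary, self-contained one-dimensional argument, and your evenness observation cleanly handles $\epsilon<0$. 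Also, the paper anchors its implicit-function-theorem application at the point $(0,w^*)$, using the regularity of $\rho$ at $\epsilon=0$, $w>0$ from \cref{lemma:ell-eps-cont-diff}, which actually yields a $C^1$ stationary-point selection through $\epsilon=0$ (slightly stronger than the stated claim near zero, while leaving the easy case of $\epsilon$ far from $0$ implicit); you instead anchor the theorem at each $(\epsilon_0,w^*_{\epsilon_0})$ with $\epsilon_0\neq 0$, which matches the literal statement exactly and needs only smoothness of $\ell$ away from $\epsilon=0$. The trade-off: the paper's horizon-function route generalizes verbatim to the two-dimensional setting (it is reused in \cref{lemma:continuous-perturbation-wstar-epsilon-2d}), whereas your squeeze argument exploits scalar monotonicity of $\loss_\epsilon'$ and would need to be reworked with monotone-operator arguments in higher dimensions. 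Two small points to tighten: existence of $w^*_\epsilon$ follows from coercivity of $\loss_\epsilon$ (it dominates the coercive $\loss$ since $\ell_\epsilon\geq 0$), not from strict convexity alone, which only gives uniqueness; and your identification $w^*_0=w^*$ deserves the one-line justification that the ReLU term is nonnegative and vanishes on $[0,\infty)\ni w^*$, so it cannot move the minimizer.
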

\begin{proof}
	We will invoke \cref{cor:continuity-of-argmin}
	to show that $w^*_\epsilon$ is a continuous function in $\epsilon$.
	To verify the assumptions, first observe that 
	as a function of $\epsilon\in\R$ and $w\in\R$, 
	the perturbed loss $\loss_\epsilon$ is clearly proper as it never attains $-\infty$
	and its (effective) domain is nonempty. It is also continuous in both 
	$\epsilon$ and $w$ by definition of $\ell_\epsilon$ and continuity of $\loss$.
	By \cref{assmpt:individual-loss}, $\ell$ is strictly convex, and so is 
	$\loss_\epsilon$ as it is the sum of a strictly convex function with
	a convex $\ell_\epsilon$.

	It remains to show that the horizon function (\cref{defn:horizon-functions})
	$\loss^\infty_0(w)$ is positive for all $w\neq 0$.
	Using the formula for a convex $\loss_\epsilon$ 
	from \cref{thm:horizon-functions},
	we have for all $w\neq 0$ and $\bar x \in\R$,
	\begin{align}
            \label{eq:verify-horizon-function-positive}
		\loss^\infty_0(w) &= \sup_{\tau\in(0,\infty)} \frac{ \loss_0( \bar x + \tau w ) - \loss_0(\bar x) }{\tau} >0
	\end{align}
	since the difference quotient for a convex function is non-decreasing on 
	$(\tau, \infty)$, and the infimum as $\tau \to 0$ is $0$ as long as $w\neq 0$.
	
	To show differentiability, let 
	\begin{align}
		G(\epsilon, w) = \loss_\epsilon'(w).
	\end{align}
	At point $(0, w^*)$, 
	$G(0, w^*) = \loss_0'(w^*) = \loss'(w^*) =0$. 
	In a neighborhood of this point, 
	$G$ is continuously differentiable due to $w^*>0$, 
	\cref{lemma:ell-eps-cont-diff} and 
	twice continuous-differentiability of $\loss$. 
	The implicit function theorem (\cref{thm:implicit-function-theorem}) thus
	guarantees the existence of a continuously differentiable function 
	$\omega: I\to J$ where $I$ is an open interval about $0$ and $J$ an 
	open interval about $w^*$ such that 
	$\omega(0) = w^*$ and $\loss'_\epsilon(\omega(\epsilon)) = 0$ for all $\epsilon\in I$.
	As $\loss_\epsilon$ is strictly convex, $\omega(\epsilon)$ must be its unique 
	minimizer. Setting $w^*_\epsilon$ to $\omega(\cdot)$ completes the proof.
\end{proof}

\begin{lem}[Continuous differentiability of perturbed GD map]
	\label{lemma:perturbed-T-differentiable}
	The function $\psi:\R^2\to\R$
	\begin{align*}
		\psi(\epsilon, w) = T_\epsilon(w) = w - \eta_\epsilon \loss'_\epsilon(w)
	\end{align*}
	is continuously differentiable everywhere except at 
	$(\epsilon, w) = (0, 0)$.
\end{lem}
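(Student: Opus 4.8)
The plan is to write $\psi$ as a combination of three pieces, each of which I can treat separately, and then invoke the algebra of $C^1$ functions. Explicitly,
\begin{align*}
	\psi(\epsilon, w) = w - \eta_\epsilon \cdot \loss_\epsilon'(w),
\end{align*}
where the map $w\mapsto w$ is smooth, $\eta_\epsilon$ depends only on $\epsilon$, and $\loss_\epsilon'(w) = \loss'(w) + c\,\frac{\partial \rho}{\partial w}(\epsilon, w)$ with $\rho$ the function of \cref{lemma:ell-eps-cont-diff}. Since finite sums and products of continuously differentiable functions are again continuously differentiable on their common domain, it suffices to show that $\loss_\epsilon'$ is jointly $C^1$ in $(\epsilon, w)$ away from $\{w=0\}\cup\{\epsilon=0\}$ and that $\eta_\epsilon$ is $C^1$ in $\epsilon$ for $\epsilon\neq 0$.

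For the first factor, I would note that $\loss'$ is $C^1$ in $w$ (indeed $\loss\in C^3$) and does not depend on $\epsilon$, so its contribution is harmless. The nontrivial term is $\partial_w\rho$, whose two first partial derivatives are precisely the second partials $\frac{\partial^2\rho}{\partial w^2}$ and $\frac{\partial^2\rho}{\partial\epsilon\,\partial w}$ of $\rho$, both of which \cref{lemma:ell-eps-cont-diff} guarantees are continuous everywhere except at $w=0$ and $\epsilon=0$. Hence $\loss_\epsilon'$ has continuous partials in both variables off the two exceptional lines, which is exactly the claimed regularity for that factor.

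The main work is the second factor $\eta_\epsilon = \gamma/\loss_\epsilon''(w^*_\epsilon)$, and this is where I expect the only real obstacle. The key observation that makes it manageable is that the perturbed minimizer satisfies $w^*_\epsilon > 0$, so we only ever evaluate $\rho$ and its $w$-derivatives on the region $w>0$, where $\rho(\epsilon, w) = \epsilon^2\ell(-w/\epsilon^2)$ is a genuine composition of the $C^3$ function $\ell$ with smooth maps for $\epsilon\neq 0$; in particular $\loss_\epsilon''(w) = \loss''(w) + \frac{c}{\epsilon^2}\ell''(-w/\epsilon^2)$ is jointly $C^1$ in $(\epsilon, w)$ on $\{w>0,\ \epsilon\neq 0\}$, the extra order of differentiability coming from the assumed existence and continuity of $\ell'''$. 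By \cref{lemma:continuous-perturbation-wstar-epsilon}, the map $\epsilon\mapsto w^*_\epsilon$ is itself $C^1$ for $\epsilon\neq 0$ and stays in $w>0$, so the chain rule shows $g(\epsilon)\coloneqq \loss_\epsilon''(w^*_\epsilon)$ is $C^1$ for $\epsilon\neq 0$. Finally, strict convexity of $\loss_\epsilon$ gives $g(\epsilon) > 0$, so $\eta_\epsilon = \gamma/g(\epsilon)$ is the reciprocal of a nonvanishing $C^1$ function and therefore $C^1$ in $\epsilon$ for $\epsilon\neq 0$.

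Combining the two factors through the product and sum rules then yields that $\psi$ is continuously differentiable on $\{w\neq 0,\ \epsilon\neq 0\}$, i.e.\ everywhere except at $w=0$ and $\epsilon=0$, as claimed. The two excluded lines are genuine: $w=0$ is the location of the ReLU kink inherited from $\ell_0$, across which $\partial_w\rho$ jumps, and $\epsilon=0$ is the degenerate limit at which the sharp rescaling destroys joint smoothness; both were already flagged as exceptional in \cref{lemma:ell-eps-cont-diff,lemma:continuous-perturbation-wstar-epsilon}, so beyond carefully bookkeeping the chain rule in the $\eta_\epsilon$ step no additional care is required.
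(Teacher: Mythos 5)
Your proposal is correct and takes essentially the same route as the paper's own proof: continuous differentiability in $w$ (for $\epsilon\neq 0$) from \cref{lemma:ell-eps-cont-diff} together with smoothness of the base loss $\loss$, and continuous differentiability of $\eta_\epsilon = \gamma/\loss_\epsilon''(w^*_\epsilon)$ from \cref{lemma:continuous-perturbation-wstar-epsilon} combined with continuity of the third derivative and the chain rule, with the $\epsilon=0$ case handled separately in $w$. Your write-up merely makes explicit some steps the paper leaves implicit (the restriction to $w>0$ where $w^*_\epsilon$ lives, and the positivity of the denominator $\loss_\epsilon''(w^*_\epsilon)$, which under \cref{assmpt:individual-loss} follows from $\loss''>0$ rather than from strict convexity alone), so it matches the paper's argument.
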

\begin{proof}
	For $\epsilon \neq 0$, continuous differentiability of $\psi$ with respect to $w$ 
	follows from \cref{lemma:ell-eps-cont-diff} and that of 
	the original $\loss$. Additionally, recall that 
	\begin{align*}
		\eta_\epsilon = \frac{\gamma}{ \loss_\epsilon''(w^*_\epsilon) }.
	\end{align*}
	By \cref{lemma:continuous-perturbation-wstar-epsilon},
	$w^*_\epsilon$ is continuously differentiable.
	Together with the fact that $\loss'''$ is continuous, 
	we have $\eta_\epsilon$ must be continuously 
	differentiable in $\epsilon$. Therefore $\partial \psi/\partial \epsilon$
	must also be continuous. 
	If $\epsilon=0$, the result trivially holds in $w$ 
	by definition of $\loss_\epsilon$. 
\end{proof}

\section{Proofs in higher dimensions}
\label{sec:higher-d-proofs}

\subsection{Cycle construction in two dimensions}
\label{sec:cycle-construction-2d}

\twodimcycle*
\begin{proof}
	Consider the dataset constructed in \cref{lemma:cross-over-2d}
	where all labels are $y_i=1$, and the 
	$x_i\in\R^2$'s consist of $\m1$ copies of $e_1$, $\n1$ copies of $-e_1$, 
	$\m2$ copies of $e_2$, and $\n2$ copies of $-e_2$, where $e_i$ is the 
	$i$th coordinate vector. 
	The resulting loss, gradient, and Hessian are given by 
	\begin{align}
		\label{eq:2d-original-loss}
		\loss(w) &= \m1\ell(-w_1) + \n1\ell(w_1) + \m2\ell(-w_2) + \n2\ell(w_2), \\		
		\nabla \loss(w) &= \begin{pmatrix}
			\n1\ell'(w_1) - \m1\ell'(-w_1)  \\
			\n2\ell'(w_2) - \m2\ell'(-w_2) 
		\end{pmatrix}\\
		\nabla^2 \loss(w) &= \begin{pmatrix}
			\m1 \ell''(-w_1) + \n1\ell''(w_1) & 0 \\
			0 & \m2 \ell''(-w_2) + \n2\ell''(w_2)
		\end{pmatrix}.
	\end{align}
	The solution $w^*=(u^*,v^*)\transpose$ satisfies $u^*>0$ and $v^*>0$, and 
	the dataset is chosen WLOG such that for $\gamma\in(0,1]$,
	\begin{align*}
		\eta = \frac{\gamma}{ \lambda_{\max}(\nabla^2\loss(w^*)) } = \frac{\gamma}{ \m1\ell''(-u^*) + \n1 \ell''(u^*) },
	\end{align*}
	that is, it only depends on the first coordinate of the solution.
	
	By \cref{lemma:cross-over-2d,lemma:extending-gd-trajectory-2d}, 
	under the step size $\eta$,
	there exists a trajectory where $\braces{w_t}_0^{k-3}$ all lie 
	in the positive quadrant below the line $ w\transpose p = 0$,
	while $w_{k-2}$ crosses this line but remain to the right of $w^*$ 
	in the positive quadrant. Furthermore, next point $w_k$ will move 
	closer to $w^*$ in both directions without crossing back, due to 
	the decoupling of our objective and that $\gamma\leq 1$. 

	Now let $\w1^*$, $\w2^*$ be the two coordinates of $w^*$, 
	\begin{align}
		p = (\w2^*,\, -\w1^*)\transpose, \quad \text{and} \quad q = w_0 - T(w_{k-1} + \eta \cdot c_1 \cdot p),
	\end{align}
	where $c_1>0$ will be chosen later.
	and define the two ``kicks'' to be 
	\begin{align}
		\ell_{1,\epsilon}(w) = \begin{cases}
			\epsilon^2 \ell\paren{ -\frac{w\transpose p }{\epsilon^2} } & \text{if $\epsilon \neq 0$}  \\
			\max\braces{ -w\transpose p, \, 0} & \text{if $\epsilon=0$}
		\end{cases}  \qquad 
		\ell_{2,\epsilon}(w) = \begin{cases}
			\epsilon^2 \ell\paren{ -\frac{w\transpose q }{\epsilon^2} } & \text{if $\epsilon \neq 0$}  \\
			\max\braces{ -w\transpose q, \, 0} & \text{if $\epsilon=0$}
		\end{cases} .
	\end{align}
	Let $c_2 = 1/\eta$, 
	where $\eta$ is the step size that generated the 
	original trajectory described above.
	Then for all $\epsilon$ such that $c_1\cdot\epsilon^2$ and $c_2\cdot\epsilon^2$ are both rational, 
	the perturbed objective 
	\begin{align}
		\label{eq:perturbed-loss-2d}
		\loss_\epsilon(w) \coloneqq \loss(w) + c_1 \cdot \ell_{1,\epsilon}(w) + c_2 \cdot \ell_{2, \epsilon}(w)
	\end{align}
	is equivalent to a classification problem on a valid dataset. 
	\cref{assmpt:individual-loss} guarantees that both $\ell_{1,\epsilon}$ 
	and $\ell_{2,\epsilon}$ are continuous everywhere in $\epsilon$.

	Let $w_\epsilon^* = (u_\epsilon^*, v_\epsilon^*)\transpose$ be the minimizer 
	of $\loss_\epsilon$, for $\epsilon>0$. We can run GD with the step size 
	\begin{align}
		\eta_\epsilon = \frac{\gamma}{\lambda_{\max}(\nabla^2\loss_\epsilon(w^*_\epsilon))}
	\end{align}
	on this perturbed objective using the same $\gamma$ that generated 
	the original trajectory. Let 
	\begin{align}
		\label{eq:perturbed-gd-map-2d}
		T_\epsilon(\tilde w) \coloneqq \tilde w - \eta_\epsilon \nabla \loss_\epsilon(\tilde w)
	\end{align}
	be the corresponding GD map. 
	Let $\tilde w_{t+1} = T_\epsilon(\tilde w_t)$ be the GD iterates 
	under this perturbed map $T_\epsilon$, starting at $\tilde w_0 = w_0$.
	By continuity of $\eta_\epsilon$, $\nabla\loss_\epsilon$, and 
	$\nabla^2\loss_\epsilon$ (\cref{lemma:ell-eps-cont-diff-2d,lemma:continuous-perturbation-wstar-epsilon-2d}),
	\begin{align*}
		\eta_\epsilon \to \eta \quad \text{and} \quad T_\epsilon \to T_0
	\end{align*}
	as $\epsilon\to 0$.
	Observe that for all $t=0,1,\dots,k-3$, $w_t\transpose p > 0$ as all of 
	these iterates lie below the line connecting the origin and $w^*$ (the line 
	$w\transpose p = 0$, see \cref{fig:2d-proof-illustration}). 
	And so in the limit as $\epsilon\to 0$, these points do not activate the 
	gradient in $\max\braces{-w\transpose p,\, 0}$. 
	Furthermore, all of these points lie above the $w\transpose q=0$ line
	by construction and do not activate the gradient in the second ReLU.
	At $w_{k-2}$, however, 
	the first gradient becomes active, and thus we have 
	\begin{align*}
		\lim_{\epsilon\to 0} \tilde w_{k-1} = \lim_{\epsilon\to 0} T_\epsilon(\tilde w_{k-2}) &= \lim_{\epsilon\to 0} \tilde w_{k-2} - \eta_\epsilon \nabla \loss_\epsilon(\tilde w_{k-2}) \\
		&= w_{k-2} - \eta \nabla \loss(w_{k-2}) + \eta \cdot c_1 \cdot p +  0 \\
		&= w_{k-1} + \eta \cdot c_1 \cdot p.
	\end{align*}
	Note that we can set $c_1$ arbitrarily large to guarantee that 
	the second coordinate of this point is negative. Having $c_1$ fixed, 
	observe that 
	\begin{align*}
		(w_{k-1} + \eta \cdot c_1 \cdot p)\transpose q = (w_{k-1} + \eta \cdot c_1 \cdot p)\transpose (w_0 - T(w_{k-1} + \eta \cdot c_1 \cdot p))
	\end{align*}
	and we can extend $w_0$ backwards as far as needed to ensure that 
	the inner product with $w_0$ is sufficiently negative for the entire 
	inner product to be negative (\cref{lemma:extending-gd-trajectory-2d}). 
	This allows us to activate the second 
	ReLU. Combining these together, we have at the next point 
	\begin{align*}
		\lim_{\epsilon\to 0} T_\epsilon(\tilde w_{k-1}) &= \lim_{\epsilon\to 0} \tilde w_{k-1} - \eta _\epsilon \nabla \loss_\epsilon(\tilde w_{k-1}) \\
		&= w_{k-1} + \eta \cdot c_1 \cdot p - \eta \nabla \loss( w_{k-1} + \eta \cdot c_1 \cdot p ) - 0 + \eta \cdot c_2 \cdot q \\
		&= T(w_{k-1} + \eta \cdot c_1 \cdot p) + \eta \frac{1}{\eta} q \\
		&= T(w_{k-1} + \eta \cdot c_1 \cdot p) + w_0 - T(w_{k-1} + \eta \cdot c_1 \cdot p) \\
		&= w_0,
	\end{align*}
	giving us a cycle.

	Let $G:\R\times \R^2\to\R$ be defined as
	\begin{align}
		G(\epsilon, w) \coloneqq T^{k}_\epsilon(w) - w = \underbrace{T_\epsilon \circ \hdots \circ T_\epsilon}_{k \text{ times}} (w) - w.
	\end{align}
	By \cref{lemma:perturbed-T-differentiable-2d}$, 
	T_\epsilon$ is continuously differentiable in $w$ and $\epsilon$ except at 
	$\epsilon=0$ and $\braces{w:w\transpose p >0}$, and so is $T^k_\epsilon$ 
	as it is a function composition via the chain rule.
	Consider the point $(\epsilon, w) = (0, w_0)$. 
	Due to the trajectory construction, we would not have $w_0$ lie exactly on the 
	line $w\transpose p = 0$. 
	Evaluating $G$ at this point gives us
	\begin{align*}
		G(0, w_0) &= T_0(T^{k-1}_0( w_0)) - w_0 = 0
	\end{align*}
	as we saw that $\lim_{\epsilon \to 0} T_\epsilon(\tilde w_k) = w_0$.
	In addition, at any $w\neq 0$, the Jacobian of the GD map with respect 
	to $w$ is given by 
	\begin{align*}
		J_w T_\epsilon(w) = I - \eta_\epsilon \nabla^2\loss_\epsilon(w).
	\end{align*}
	By the chain rule, 
	\begin{align*}
		J_w G(\epsilon, w) &= J_wT_\epsilon^{k}(w) - I \\
		&= \prod_{t=0}^{k-1} \paren{ I - \eta_\epsilon \nabla^2\loss_\epsilon( T_\epsilon(\tilde w_t) ) } - I. 
	\end{align*}
	By \cref{assmpt:individual-loss}, $\ell''$ is decreasing on $[0, \infty)$.
	At $\epsilon=0$, $\nabla^2 \loss_0$  evaluated at every point on the 
	cycle has its largest eigenvalue strictly less than that at $w^*$
	due to the decoupling of the base loss, 
	except possibly at $w_k$. Recall that we can make $w_k$'s second 
	coordinate arbitrarily negative by increasing $c_1$, yet still be 
	able to activate the second ReLU by extending $w_0$ backwards. 
	Thus we can guarantee that for all $i=1,2$
	\begin{align*}
		\abs{ \lambda_i \paren{ J_wT_0^{k}(w_0) } } < (1 - \gamma )^k < 1.
	\end{align*}
	That is, all eigenvalues of the Jacobian of $T^{k}$ lies within the 
	unit circle, when $\epsilon=0$ and $w=w_0$.
	This suffices to ensure that
	\begin{align*}
		\abs{ \lambda_i \paren{  J_w G(0, w_0) }  } \neq 0.
	\end{align*}
	
	We have now justified the assumptions of the implicit function theorem 
	(\cref{thm:implicit-function-theorem}) to conclude that 
	there exists a function $\omega: I\to J$ where $I$ is an open interval about 
	$\epsilon=0$ and $J$ is an open neighborhood about $w=w_0$ such that 
	$G(\epsilon, \omega(\epsilon))  = 0$ for all $\epsilon\in I$.
	
	Local stability of the cycle follows the same argument as in 
	the proof of \cref{thm:1d-convergence-to-cycle}, where 
	we define a perturbation $\mu:\R\times \R^2\to \R_+$ 
	\begin{align}
		\mu(\epsilon, w) &\coloneqq \abs{ (J_wT_\epsilon^{k})(w)}
	\end{align}
	which is continuous in $w$ due to continuity of $\ell''_\epsilon$ 
	in $w$ for all $\epsilon$, as well as continuity of $T_\epsilon$
	except on $\braces{w\transpose p = 0}$ and $\epsilon=0$ 
	\cref{lemma:perturbed-T-differentiable-2d}.
	By \cref{lemma:ell-eps-cont-diff-2d}, $\ell_\epsilon''(w)$ 
	is continuous almost everywhere, and so is 
	$\loss''_\epsilon$. Combining \cref{lemma:continuous-perturbation-wstar-epsilon-2d}
	for continuity of $\eta_\epsilon$ and \cref{lemma:perturbed-T-differentiable-2d}
	for $T_\epsilon$, we have that $\mu$ is continuous in both variables 
	except on $\braces{w\transpose p = 0}$ and $\epsilon=0$.
	
	We know that  $\mu(0, w_0) < 1$ from earlier, and by continuity there must 
	exist a neighborhood $U$ around $\epsilon=0$ such that 
	$\mu(\epsilon, \omega(\epsilon)) < 1$ for all $\epsilon\in U$, as $\omega$ is 
	continuous in $\epsilon$. Since $U$ and $I$ are both open, there must exist 
	a nonzero $\epsilon_0\in I \cap U$ such that
	\begin{align*}
		G(\epsilon_0, \omega(\epsilon_0)) = 0 \quad \text{and} \quad \mu(\epsilon_0, \omega(\epsilon_0)) < 1,
	\end{align*}
	implying that the trajectory on the perturbed objective with $\epsilon = \epsilon_0$ 
	is a stable cycle. Stability ensures that if initialize close enough 
	to $\omega(\epsilon_0)$ we will converge to this cycle, as it is a 
	stable fixed point of the $T^{k}$ map.
	A final note is that we can always pick $\epsilon_0$ such that 
	$c_1\cdot \epsilon_0^2$ and $c_2\cdot \epsilon_0^2$ are both rational, 
	so that the perturbed objective \labelcref{eq:perturbed-loss-2d} 
	indeed corresponds to a valid classification problem. 
	This completes the proof.
	\begin{figure}
		\centering\includegraphics[width=0.75\textwidth]{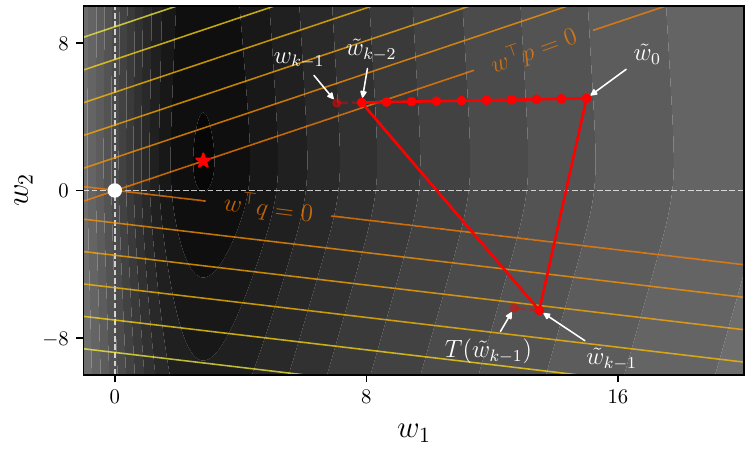}
		\caption{
			Illustration of the two-dimensional cycle construction. The dark 
			grey contour in the back corresponds to the level sets of 
			$\loss$, the original loss. The minimizer is marked with the star. 
			The two sets of straight contour 
			lines correspond to the two ReLUs that we add. Points on the cycle 
			are marked with solid red dots connected by solid lines. 
		}
		\label{fig:2d-proof-illustration}
	\end{figure}	
\end{proof}

\begin{lem}[Crossing over in 2D]
	\label{lemma:cross-over-2d}
	Let $N = \m1 + \m2 + \n1 + \n2$ for positive integers 
	$\m1, \m2, \n1,$ and $\n2$.
	Suppose the $2$-dimensional classification objective has the form 
	\begin{align*}
		\loss(w) = \frac{1}{N} \sum_{i=1}^{N} \ell(-y_i x_i\transpose w),
	\end{align*}
	where $y_i=1$ for all $i$, and there are $\m1$ copies of $e_1$, $\n1$ 
	copies of $-e_1$, $\m2$ copies of $e_2$, and $\n2$ copies of $-e_2$, with 
	$e_i$ being the $i$th coordinate vector in $2d$. Suppose $\ell$ 
	satisfies \cref{assmpt:individual-loss},
	and $\w1^* > \w2^*>0$. Then for all $\gamma\in(0, 1]$, 
	there exists a combination of $\m1$, $\n1$, $\m2$, and $\n2$ such that
	with a step size of $\eta = \gamma / \lambda_{\max}(\nabla^2 \loss(w^*))$,
	there exists a point $w=(\w1,\, \w2)\transpose$ with 
	\begin{align}
		\label{eq:before-crossing}
		\w1 > \w1^*, \quad \w2 > \w2^*, \quad \frac{\w2^*}{\w1^*} > \frac{\w2}{\w1}
	\end{align}
	and 
	\begin{align}
		\label{eq:after-crossing}
		\frac{\w2^*}{\w1^*} < \frac{T(w)_2}{T(w)_1}, \quad \text{and} \quad T(w)_1 > \w1^*.
	\end{align}
\end{lem}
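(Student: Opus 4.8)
The plan is to exploit the fact that, for this dataset, $\loss$ decouples across the two coordinates, so $T$ acts as an independent one-dimensional GD map in each coordinate, and to engineer the data so that the first coordinate contracts toward its minimizer strictly faster than the second. Write $u^* = \w1^*$, $v^* = \w2^*$, and set $f(w) = \w1 v^* - \w2 u^*$; then $f(w)>0$ is exactly the ``below the line through the origin and $w^*$'' region, so \labelcref{eq:before-crossing} asks (among other things) for $f(w)>0$ and \labelcref{eq:after-crossing} asks for $f(T(w))<0$. The coordinate-$1$ and coordinate-$2$ blocks of $\loss$ determine $u^*,v^*$ through the ratios $\m1/\n1$ and $\m2/\n2$ only, while their curvatures scale with $\m1+\n1$ and $\m2+\n2$. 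I would first pick rational ratios $\m1/\n1 > \m2/\n2 > 1$ to force $u^*>v^*>0$, then multiply $\m1,\n1$ by a large integer $K$ (which preserves $u^*$) until the first diagonal Hessian entry $C_1 = \m1\ell''(-u^*)+\n1\ell''(u^*)$ strictly exceeds the second, $C_2 = \m2\ell''(-v^*)+\n2\ell''(v^*)$. Then $\lambda_{\max}(\nabla^2\loss(w^*)) = C_1/N$, so $\eta C_1/N = \gamma$ while $\gamma_2 := \eta C_2/N = \gamma C_2/C_1 < \gamma$.

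Next I linearize at $w^*$. Since $\ell$ is $C^3$, $T(w^*+\delta) = w^* + J_T(w^*)\delta + O(\|\delta\|^2)$ with $J_T(w^*) = I - \eta\nabla^2\loss(w^*) = \diag(1-\gamma,\,1-\gamma_2)$. As $f$ is linear with $f(w^*)=0$, we get $f(w^*+\delta) = \delta_1 v^* - \delta_2 u^*$ exactly, and $f(T(w^*+\delta)) = (1-\gamma)\delta_1 v^* - (1-\gamma_2)\delta_2 u^* + O(\|\delta\|^2)$. I would then choose a perturbation direction with $\delta_1,\delta_2>0$ and $\frac{1-\gamma}{1-\gamma_2}\frac{v^*}{u^*} < \frac{\delta_2}{\delta_1} < \frac{v^*}{u^*}$; this interval is nonempty \emph{precisely} because $\gamma_2<\gamma$. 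For such a direction the leading term of $f(w^*+\delta)$ is a strictly positive $\Theta(\|\delta\|)$ quantity and that of $f(T(w^*+\delta))$ is a strictly negative $\Theta(\|\delta\|)$ quantity, so taking $\|\delta\|$ small enough that the $O(\|\delta\|^2)$ remainder cannot overturn either sign yields $f(w)>0$ and $f(T(w))<0$, i.e., the two ratio inequalities in \labelcref{eq:before-crossing} and \labelcref{eq:after-crossing}. The conditions $\w1>u^*$ and $\w2>v^*$ are immediate from $\delta_1,\delta_2>0$.

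For the last requirement $T(w)_1 > u^*$ I would avoid the first-order expansion (its leading term, $(1-\gamma)\delta_1$, degenerates at $\gamma=1$) and argue directly from convexity. The coordinate-$1$ partial derivative $g_1(w_1) = \tfrac{1}{N}(\n1\ell'(w_1)-\m1\ell'(-w_1))$ vanishes at $u^*$, and its derivative $\tfrac{1}{N}(\n1\ell''(w_1)+\m1\ell''(-w_1))$ is decreasing on $[0,\infty)$, since by \cref{assmpt:individual-loss} both $\ell''(w_1)$ and $\ell''(-w_1)$ decrease there; hence $g_1$ is strictly concave on $[0,\infty)$. Therefore $g_1(\w1) < g_1'(u^*)(\w1-u^*)$ for $\w1>u^*$, and because $\eta g_1'(u^*) = \eta C_1/N = \gamma\le 1$, we get $T(w)_1 = \w1 - \eta g_1(\w1) > u^* + (1-\gamma)(\w1-u^*) \ge u^*$. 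This holds for every $\gamma\in(0,1]$, so $w=w^*+\delta$ satisfies all five inequalities of \labelcref{eq:before-crossing,eq:after-crossing}.

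I expect the main obstacle to be the data-construction step: securing strict curvature dominance $\gamma_2<\gamma$ simultaneously with $u^*>v^*>0$ under the integrality of $\m1,\n1,\m2,\n2$, since it is exactly this gap that makes the wedge of admissible directions $\delta$ nonempty. The secondary technical care is checking that the $O(\|\delta\|^2)$ remainder in the crossing expansion is dominated by the $\Theta(\|\delta\|)$ leading term uniformly over the chosen direction; the direct concavity argument is what lets the boundary case $\gamma=1$ go through cleanly, where the first coordinate's linearization is degenerate.
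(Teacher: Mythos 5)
Your proposal is correct, and it shares the paper's skeleton --- the coordinate decoupling, the data construction forcing $\w1^* > \w2^* > 0$ together with curvature dominance $C_1 = \m1\ell''(-\w1^*)+\n1\ell''(\w1^*) > C_2 = \m2\ell''(-\w2^*)+\n2\ell''(\w2^*)$, and the concavity-of-$f_1'$ argument giving $T(w)_1 > \w1^*$ for all $\gamma\le 1$ --- but it establishes the actual line-crossing by a genuinely different mechanism. The paper works far from the minimizer: it additionally imposes the tail condition $\lim_{w\to\infty} f_1'(w) > \lim_{w\to\infty} f_2'(w)$, takes a point with both coordinates beyond a threshold $\bar w$ where $f_1' > f_2'$, sets $\w2 = c\w1$ with $c$ near $1$, and argues by continuity in $c$ that $T(w)_2 > T(w)_1$, hence $T(w)_2/T(w)_1 > 1 > \w2^*/\w1^*$. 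You instead work infinitesimally close to $w^*$: the spectral gap $\gamma_2 = \gamma C_2/C_1 < \gamma$ in $J_T(w^*) = \diag(1-\gamma,\,1-\gamma_2)$ opens a nonempty wedge of directions $\delta$ for which your linear functional $f(w)=\w1\w2^* - \w2\w1^*$ satisfies $f(w^*+\delta)>0$ exactly while the leading term of $f(T(w^*+\delta))$ is strictly negative, and a Taylor remainder estimate (legitimate, since $\ell\in C^3$ makes $T\in C^2$) finishes the two ratio inequalities. Your route buys two things: it needs no condition on the gradient tails, only the curvature gap; and it verifies the constraint $\w2/\w1 < \w2^*/\w1^*$ of \labelcref{eq:before-crossing} by construction, whereas in the paper's argument the choice of $c$ close to $1$ sits in unacknowledged tension with that same constraint (it implicitly requires $\w2^*/\w1^*$ itself to be near $1$, which the paper never arranges), so your version is in this respect tighter. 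What the paper's far-field construction buys is a crossing point with large coordinates, closer in spirit to how the lemma is later combined with the trajectory-extension lemma; your local point serves equally well there, since backward extension is available coordinate-wise. One caveat you share with the paper: the strict inequality $T(w)_1 > \w1^*$ is derived from strict concavity of the coordinate gradient, which requires $\ell''$ to be \emph{strictly} decreasing on $[0,\infty)$ --- true for the logistic and squareplus losses, though \cref{assmpt:individual-loss} literally states only ``decreasing.''
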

\begin{proof}
	Using our dataset definition, the objective (up to scaling factor) is given by 
	\begin{align*}
		\loss(w) = \m1 \ell(-\w1) + \n1 \ell(\w1) + \m2 \ell(-\w2) + \n2 \ell(\w2).
	\end{align*}
	The gradient and Hessian are 
	\begin{align*}
		\nabla \loss(w) &= \begin{pmatrix}
			\n1\ell'(\w1) - \m1\ell'(-\w1)  \\
			\n2\ell'(\w2) - \m2\ell'(-\w2) 
		\end{pmatrix}\\
		\nabla^2 \loss(w) &= \begin{pmatrix}
			\m1 \ell''(-\w1) + \n1\ell''(\w1) & 0 \\
			0 & \m2 \ell''(-\w2) + \n2\ell''(\w2)
		\end{pmatrix}.
	\end{align*}
	Define $f_1(\w1) = \m1 \ell(-\w1) + \n1 \ell(\w1)$ and similarly for $f_2$, 
	so that $\loss(w) = f_1(\w1) + f_2(\w2)$. Due to this decoupling, running GD
	on $\loss$ is equivalent to running GD independently on the two 
	one-dimensional problems $f_1$ and $f_2$, with the same step size 
	\begin{align*}
		\eta = \frac{\gamma}{\lambda_{\max}(\nabla^2\loss(w^*))} = \frac{\gamma}{ \max\braces{ f_1''(\w1^*), \, f_2''(\w2*) } }.
	\end{align*}
	Note that we are free to set $\m1$, $\n1$, $\m2$, and $\n2$ such that 
	$\w1^* > \w2^*>0$, 
	\begin{align}
		\label{eq:2d-dataset-construction-conditions}
		f_1''(\w1^*) > f_2''(\w2^*) \quad \text{and} \quad \lim_{w\to\infty} f_1'(w) > f_2'(w)
	\end{align}
	so we will assume these are true without loss of generality. 

	Consider a point $w = (\w1,\w2)\transpose$ such that 
	the conditions in \labelcref{eq:before-crossing} are satisfied. Then 
	\begin{align*}
		T(w)_1 &= \w1 - \eta f_1'(\w1) \\
		&= \w1 - \frac{\gamma}{f_1''(\w1^*)} f_1'(\w1) \\
		&\geq \w1 - \frac{f_1'(\w1)}{f_1''(\w1^*)} \tag{$\gamma \leq 1$}\\
		&> \w1 - \frac{1}{f_1''(\w1^*)}( f_1''(\w1^*)(\w1 - \w1^*)) \tag{$f_1'$ is concave on $(\w1^*, \infty)$} \\
		&= \w1^* 
	\end{align*}
	where concavity of $f_1'$ follows from \cref{assmpt:individual-loss}.
	Furthermore, due to \labelcref{eq:2d-dataset-construction-conditions}, 
	there must exists $\bar w$ such that $f_1'(w) > f_2'(w)$ for all 
	$w > \bar w$. Thus we can pick $\w2 > \bar w$, and get 
	\begin{align*}
		T(w)_2 &= \w2 - \eta f_2'(\w2) > \w2 - \eta f_1'(\w2).
	\end{align*}
	Now let $\w2 = c\w1$ where $c \in(0,1]$. When $c=1$, we have 
	$T(w)_1 > c\w1 - \eta f_1'(c\w1) = T(w)_1$, and since $T(\w2)$ is 
	continuous in $c$, there must exist $c < 1$ where $T(w)_2 > T(w)_1$ 
	still hold. Thus we have shown that 
	\begin{align*}
		\frac{T(w)_2}{T(w)_1} > 1 > \frac{\w2^*}{\w1^*}.
	\end{align*}
	This completes the proof.
\end{proof}

\begin{lem}[Trajectory extension in 2D]
	\label{lemma:extending-gd-trajectory-2d}
	Consider the objective constructed in \cref{lemma:cross-over-2d}.
	For all $w'=(\w1',\w2')\transpose$ with 
	$\w1'>\w1^*$, $\w2'>\w2^*$, and $\eta>0$, there exists $w = (\w1,\w2)\transpose$
	such that $\w1 > \w1'$, $\w2>\w2'$, and $T(w)=w'$.
\end{lem}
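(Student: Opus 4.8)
The plan is to exploit the coordinate-wise decoupling of the objective established in \cref{lemma:cross-over-2d}. Since $\loss(w) = f_1(\w1) + f_2(\w2)$ with $f_1(\w1) = \m1\ell(-\w1) + \n1\ell(\w1)$ and $f_2$ defined analogously, the gradient is diagonal and the GD map factors coordinate-wise as $T(w) = (T_1(\w1),\, T_2(\w2))\transpose$, where $T_i(s) = s - \eta f_i'(s)$ is the one-dimensional GD map for $f_i$ under the shared step size $\eta$. Thus solving $T(w) = w'$ subject to $\w1 > \w1'$ and $\w2 > \w2'$ reduces to solving the two scalar equations $T_1(\w1) = \w1'$ and $T_2(\w2) = \w2'$ independently, each with the corresponding strict inequality.

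First I would verify that each $f_i$ is itself a valid one-dimensional non-separable classification objective to which \cref{lemma:extending-gd-trajectory} applies. Indeed, $f_i$ arises from $\m{i}$ copies of the feature $+1$ and $\n{i}$ copies of the feature $-1$ (all with label $+1$), so it has exactly the form \labelcref{eq:classification-obj} in one dimension, the data is non-separable, and $\ell$ satisfies \cref{assmpt:individual-loss} by hypothesis. By the construction in \cref{lemma:cross-over-2d}, the minimizer of $f_i$ is precisely the $i$-th coordinate $\w{i}^*$ of $w^*$, and we are given $\w1^* > \w2^* > 0$, so each scalar minimizer is strictly positive, as required by \cref{lemma:extending-gd-trajectory}.

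Next I would apply \cref{lemma:extending-gd-trajectory} separately to each coordinate. Crucially, that lemma holds for \emph{every} step size $\eta > 0$, so the single shared $\eta$ inherited from the two-dimensional GD map is simultaneously admissible for both scalar problems. Applying it to $f_1$ with the target $\w1' > \w1^*$ produces some $\w1 > \w1'$ with $T_1(\w1) = \w1'$; applying it to $f_2$ with the target $\w2' > \w2^*$ produces some $\w2 > \w2'$ with $T_2(\w2) = \w2'$. Setting $w = (\w1,\, \w2)\transpose$ then yields $T(w) = (\w1',\, \w2')\transpose = w'$ together with $\w1 > \w1'$ and $\w2 > \w2'$, which is exactly the claim.

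There is essentially no analytic obstacle here: the entire content of the statement is the reduction to the scalar case, and all the hard work (using the Lipschitz bound on $\loss'$ to push $T$ to $+\infty$, then invoking continuity and strict convexity) has already been done in \cref{lemma:extending-gd-trajectory}. The only point that needs a line of care is confirming that $\eta$ need not be re-tuned per coordinate, but this is immediate because the scalar extension lemma is valid for all positive step sizes, so the common $\eta$ of the two-dimensional problem works for both diagonal factors at once.
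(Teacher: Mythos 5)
Your proof is correct and follows essentially the same route as the paper's: both exploit the coordinate-wise decoupling $\loss(w) = f_1(\w1) + f_2(\w2)$ from \cref{lemma:cross-over-2d} and then apply the one-dimensional extension result (\cref{lemma:extending-gd-trajectory}) independently to each coordinate, noting that it holds for every $\eta > 0$ so the shared step size needs no re-tuning. Your write-up just spells out the details (validity of each $f_i$ as a scalar problem, positivity of each coordinate of $w^*$) that the paper leaves implicit.
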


\begin{proof}
	As shown in \cref{lemma:cross-over-2d}, the objective can be decoupled 
	into two independent problems corresponding to the two coordinates. 
	Therefore, we can simply extend the trajectory independently 
	in each coordinate using 
	\cref{lemma:extending-gd-trajectory}, and the result trivially holds.
\end{proof}

\begin{lem}[Continuous differentiability of perturbed loss]
	\label{lemma:ell-eps-cont-diff-2d}
	Let $\rho:\R^2\times \R\to\R$ be defined as 
	\begin{align*}
		\rho(\epsilon, w) = \begin{cases}
			\epsilon^2 \ell\paren{-\frac{w\transpose p}{\epsilon^2}}  & \text{if $\epsilon \neq 0$} \\
			\max\{-w\transpose p,\, 0\} & \text{if $\epsilon = 0$.}
		\end{cases}
	\end{align*}
	where $\ell$ satisfies \cref{assmpt:individual-loss},
	and $p$ is some non-zero vector in $\R^2$. 
	Then $\rho$ has continuous first partial derivatives 
	everywhere except on the set 
	\begin{align}
		\label{eq:non-differentiable-set-origin}
		\braces{(\epsilon, w):\; \epsilon=0 \; \text{and} \; w\transpose p = 0 }.
	\end{align}
	The second partial derivatives 
	$\nabla^2_w \rho$, $J_\epsilon \nabla_w\rho$, 
	and  $\nabla_w \frac{\partial \rho}{\partial \epsilon}$
	are continuous everywhere except on the same set 
	\labelcref{eq:non-differentiable-set-origin}, while 
	$\frac{\partial^2\rho}{\partial^2\epsilon}$ is only continuous on 
	$\braces{(\epsilon, w):\, w\transpose p > 0}$.
\end{lem}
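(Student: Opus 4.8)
The plan is to \emph{reduce the statement to its one-dimensional counterpart}, \cref{lemma:ell-eps-cont-diff}, by exploiting that $\rho$ depends on $w$ only through the scalar $s\coloneqq w\transpose p$. Write $\tilde\rho(\epsilon, s)$ for the one-dimensional function of \cref{lemma:ell-eps-cont-diff}, so that $\tilde\rho(\epsilon, s) = \epsilon^2\ell(-s/\epsilon^2)$ for $\epsilon\neq 0$ and $\tilde\rho(0,s)=\max\braces{-s,0}$. Then, \emph{including at the boundary $\epsilon=0$}, we have the exact factorization $\rho(\epsilon, w) = \tilde\rho(\epsilon, w\transpose p) = (\tilde\rho\circ\Phi)(\epsilon, w)$ through the continuous linear map $\Phi:(\epsilon, w)\mapsto(\epsilon, w\transpose p)$. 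Since $\nabla_w(w\transpose p)=p$ is constant, the chain rule yields $\nabla_w\rho = \partial_s\tilde\rho\cdot p$ and $\partial_\epsilon\rho = \partial_\epsilon\tilde\rho$, and for the second partials $\nabla_w^2\rho = \partial_s^2\tilde\rho\cdot pp\transpose$, $J_\epsilon\nabla_w\rho = \partial_\epsilon\partial_s\tilde\rho\cdot p$, $\nabla_w\tfrac{\partial\rho}{\partial\epsilon} = \partial_s\partial_\epsilon\tilde\rho\cdot p$, and $\tfrac{\partial^2\rho}{\partial\epsilon^2}=\partial_\epsilon^2\tilde\rho$, each evaluated at $s=w\transpose p$. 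Every two-dimensional derivative is thus a \emph{constant} multiple (by $p$ or $pp\transpose$) of the corresponding one-dimensional derivative precomposed with $\Phi$.

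Continuity then follows by pulling back along $\Phi$. By \cref{lemma:ell-eps-cont-diff}, the one-dimensional quantities $\partial_s\tilde\rho$, $\partial_\epsilon\tilde\rho$, $\partial_s^2\tilde\rho$, $\partial_\epsilon\partial_s\tilde\rho$, and $\partial_s\partial_\epsilon\tilde\rho$ are continuous away from the single point $(\epsilon, s)=(0,0)$, while $\partial_\epsilon^2\tilde\rho$ is continuous on the half-plane $\braces{s>0}$. Because $\Phi$ is continuous with $\Phi\inverse(\braces{(0,0)}) = \braces{(\epsilon, w):\epsilon=0\text{ and } w\transpose p=0}$ --- which is exactly the exceptional set \labelcref{eq:non-differentiable-set-origin} --- each composition of a continuous one-dimensional partial with $\Phi$ is continuous precisely off that set. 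Likewise $\Phi\inverse(\braces{s>0})=\braces{w\transpose p>0}$, which delivers the final claim for $\tfrac{\partial^2\rho}{\partial\epsilon^2}$.

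The only point needing care --- and essentially the sole obstacle --- is justifying the chain rule \emph{at the boundary $\epsilon=0$}, where $\tilde\rho$ is merely piecewise-defined and its $\epsilon$-derivative was obtained in \cref{lemma:ell-eps-cont-diff} via an explicit difference quotient rather than ordinary differentiation. Here I would argue directly: because $w\transpose p$ is held fixed while differentiating in $\epsilon$, the quotient $\tfrac1t\paren{\rho(t,w)-\rho(0,w)}$ is identical to the one-dimensional quotient at $s=w\transpose p$, so $\partial_\epsilon\rho(0,w)=\partial_\epsilon\tilde\rho(0,w\transpose p)=0$ whenever $w\transpose p\neq 0$, consistent with the factorization; the mixed and pure second partials at $\epsilon=0$ inherit their values and limits in the same fashion. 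These limits are precisely the ones already computed in \cref{lemma:ell-eps-cont-diff} using the decay condition on $\ell''$ in \cref{assmpt:individual-loss}, so no estimate beyond the one-dimensional ones is required.
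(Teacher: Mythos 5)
Your proposal is correct and takes essentially the same route as the paper: the paper's own proof writes out the partial derivatives (which are exactly the one-dimensional expressions with $w$ replaced by $w\transpose p$, scaled by $p$ or $pp\transpose$) and then states that continuity ``follows the same way as in the proof of \cref{lemma:ell-eps-cont-diff}, with $w$ replaced by $w\transpose p$ in appropriate places.'' Your factorization $\rho = \tilde\rho\circ\Phi$ with the chain rule and pullback of continuity along $\Phi$ is just a cleaner formalization of that same reduction, including the needed care with the difference-quotient definition of the $\epsilon$-derivatives at $\epsilon=0$.
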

\begin{proof}
The partial gradients and derivatives with respect to $w$ and $\epsilon$ 
are given by 
\begin{align}
	\nabla_w \rho(\epsilon, w) &= \begin{cases}
		-\ell'\paren{-\frac{w\transpose p}{\epsilon^2}} p & \text{if $\epsilon \neq 0$} \\
		-p & \text{if $\epsilon = 0$ and $w\transpose p<0$}\\
		0 & \text{if $\epsilon = 0$ and $w\transpose p>0$} 
\end{cases} \\
	\frac{\partial \rho}{ \partial \epsilon}(\epsilon, w) &= 2\epsilon\cdot\ell\paren{ - \frac{w\transpose p}{\epsilon^2}} + \frac{2w\transpose p}{\epsilon}\cdot\ell'\paren{ -\frac{w\transpose p}{\epsilon^2} },
\end{align}
when $\epsilon\neq 0$, and 
\begin{align}
	\left. \frac{\partial \rho }{ \partial \epsilon} (\epsilon, w) \right\rvert_{\epsilon=0} &= \lim_{t\to 0} \frac{ 1}{t} \paren{ t^2\ell\paren{-\frac{w\transpose p}{t^2}} - \max\braces{-w\transpose p,\, 0}  } = 0
\end{align}
as long as $w\transpose p \neq 0$. Continuity thus follows the same way as 
in the proof of \cref{lemma:ell-eps-cont-diff}, 
with $w$ replaced by $w\transpose p$ in appropriate places. 

The second derivatives involve taking the Jacobian of $\nabla_w\rho(\epsilon,w)$ 
with respect to $\epsilon$ and $w$ (the Hessian in this case), as well as 
computing $\frac{\partial^2 \rho}{\partial^2\epsilon}$ and 
$\nabla_w \frac{\partial \rho}{ \partial \epsilon}$. All arguments 
about continuity are once again the same as in \cref{lemma:ell-eps-cont-diff}, 
which lets us conclude that 
\begin{align*}
	\nabla^2_w \rho , \quad J_\epsilon \nabla_w\rho, \quad  \text{and} \quad \nabla_w \frac{\partial \rho}{\partial \epsilon}
\end{align*}
are all continuous everywhere except on the set 
\labelcref{eq:non-differentiable-set-origin},
where $J_\epsilon$ denotes the Jacobian with respect to $\epsilon$.
Lastly, $\frac{\partial^2\rho}{\partial^2\epsilon}$ is only continuous on 
$\braces{(\epsilon, w):\, w\transpose p > 0}$ 
(as in \cref{lemma:ell-eps-cont-diff}).
\end{proof}

\begin{lem}[Continuous differentiability of perturbed minimizer]
	\label{lemma:continuous-perturbation-wstar-epsilon-2d}
	Suppose \cref{assmpt:individual-loss} holds.
	As a function of $\epsilon$, the minimizer $w^*_\epsilon$ 
	of the perturbed objective \labelcref{eq:perturbed-loss-2d}
	is continuous everywhere on $\R$ 
	and continuously differentiable everywhere except at $\epsilon=0$. 
\end{lem}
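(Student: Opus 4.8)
The plan is to follow the proof of \cref{lemma:continuous-perturbation-wstar-epsilon} almost line for line, now treating $w$ as a vector in $\R^2$ and carrying the two perturbations $\ell_{1,\epsilon},\ell_{2,\epsilon}$ of \cref{eq:perturbed-loss-2d} simultaneously. The statement separates into a global continuity claim on all of $\R$ (including $\epsilon=0$) and a continuous-differentiability claim that I will only be able to establish away from $\epsilon=0$.

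For continuity I would invoke \cref{cor:continuity-of-argmin} exactly as in the one-dimensional case, checking its hypotheses for $\loss_\epsilon$ viewed as a function of $(\epsilon,w)\in\R\times\R^2$. Properness is immediate because $\loss_\epsilon\geq 0$ with full domain. Joint continuity follows from the continuity of $\ell_{1,\epsilon}$ and $\ell_{2,\epsilon}$ in $(\epsilon,w)$ established in \cref{lemma:ell-eps-cont-diff-2d} together with continuity of the base loss. Strict convexity in $w$ holds since the base loss in \cref{eq:2d-original-loss} has full-rank features $\pm e_1,\pm e_2$---its Hessian is diagonal with strictly positive entries---while both perturbations are convex, and strictly convex plus convex is strictly convex. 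Finally, for the horizon-function condition the computation in \cref{eq:verify-horizon-function-positive} carries over verbatim with $\bar x,w\in\R^2$: the difference quotient $[\loss_0(\bar x+\tau w)-\loss_0(\bar x)]/\tau$ is nondecreasing in $\tau$ and tends to a strictly positive limit for every $w\neq 0$, because the base loss increases without bound along every ray of $\R^2$ (each coordinate contributes a strictly positive asymptotic slope once the corresponding component of $w$ is nonzero). This yields continuity of $w^*_\epsilon$ on all of $\R$, and in particular $w^*_\epsilon\to w^*$ as $\epsilon\to 0$.

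For differentiability, the essential difference from the one-dimensional setting is that $w^*$ now lies \emph{on} the kink line $w\transpose p=0$ of the first perturbation: by the construction in \cref{thm:2d-convergence-to-cycle} the vector $p=(\w2^*,\,-\w1^*)\transpose$ is orthogonal to $w^*=(\w1^*,\w2^*)\transpose$. Consequently $(0,w^*)$ lies in the exceptional set \cref{eq:non-differentiable-set-origin} of \cref{lemma:ell-eps-cont-diff-2d}, on which $\nabla_w\loss_\epsilon$ fails to be $C^1$, so---unlike the 1D lemma, where $w^*>0$ kept the iterate on the smooth side of the perturbation---I cannot anchor the implicit function theorem at $\epsilon=0$. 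Instead I would apply it at an arbitrary fixed $\epsilon_0\neq 0$. Set $G(\epsilon,w)=\nabla_w\loss_\epsilon(w)\colon\R\times\R^2\to\R^2$, so that $G(\epsilon_0,w^*_{\epsilon_0})=0$. Since $\epsilon_0\neq 0$ keeps $(\epsilon_0,w^*_{\epsilon_0})$ off the exceptional set, \cref{lemma:ell-eps-cont-diff-2d} and the $C^3$ regularity of the base loss make $G$ continuously differentiable near this point; the Jacobian $J_wG=\nabla^2_w\loss_{\epsilon_0}(w^*_{\epsilon_0})$ is positive definite by strict convexity, hence invertible. \cref{thm:implicit-function-theorem} then produces a $C^1$ map $\omega$ on a neighborhood of $\epsilon_0$ with $\nabla_w\loss_\epsilon(\omega(\epsilon))=0$, and strict convexity forces $\omega(\epsilon)=w^*_\epsilon$. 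As $\epsilon_0\neq 0$ was arbitrary, $w^*_\epsilon$ is $C^1$ on $\R\setminus\{0\}$, which together with the continuity claim is exactly the assertion.

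The main obstacle is precisely this placement of $w^*$ on the kink $w\transpose p=0$: it is what forces the continuity at $\epsilon=0$ to be obtained from the argmin-stability result \cref{cor:continuity-of-argmin} rather than, as one might hope, as a free byproduct of the implicit function theorem. Everything else is a routine vectorization of the one-dimensional argument, so I expect the only real care to be in confirming the horizon-function positivity and in keeping track of which second partials of \cref{lemma:ell-eps-cont-diff-2d} degenerate on the line $w\transpose p=0$.
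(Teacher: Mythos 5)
Your proposal is correct, and its continuity half coincides with the paper's proof: both invoke \cref{cor:continuity-of-argmin} after checking properness, joint continuity, strict convexity, and positivity of the horizon function, exactly as in the one-dimensional case. The differentiability half takes a genuinely different route. The paper anchors the implicit function theorem at $(0,w^*)$, justifying smoothness of $G(\epsilon,w)=\nabla\loss_\epsilon(w)$ near that point by the claim $p\transpose w^*=(\w1^*)^2-(\w2^*)^2>0$. That claim is true for the vector $p=(\w1^*,\,-\w2^*)\transpose$ appearing in \cref{lemma:perturbed-stepsize-2d}, but not for the vector $p=(\w2^*,\,-\w1^*)\transpose$ with which the perturbed objective \labelcref{eq:perturbed-loss-2d} is actually defined in \cref{thm:2d-convergence-to-cycle}; for the latter, $p\transpose w^*=0$, i.e.\ $w^*$ sits on the kink line. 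You adopt the latter reading, observe that $(0,w^*)$ then lies in the exceptional set \labelcref{eq:non-differentiable-set-origin} where the gradient fails to be $C^1$, and instead anchor the implicit function theorem at an arbitrary point $(\epsilon_0,w^*_{\epsilon_0})$ with $\epsilon_0\neq 0$, where $G$ is $C^1$ and $J_wG=\nabla^2\loss_{\epsilon_0}(w^*_{\epsilon_0})$ is invertible. Your route buys robustness and completeness: it is indifferent to which side of the kink $w^*$ lies on (so it survives the paper's internal inconsistency about $p$), and it establishes differentiability at every $\epsilon_0\neq 0$, whereas the paper's single application of the theorem only produces a $C^1$ solution map on an open interval $I$ about $0$ and never addresses $\epsilon$ outside $I$. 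What the paper's route buys --- granting its premise $p\transpose w^*>0$ --- is the stronger conclusion that $w^*_\epsilon$ is $C^1$ on a full neighborhood of $0$, including at $\epsilon=0$ itself; that exceeds what the lemma states and is closer to what the downstream limits $\eta_\epsilon\to\eta$ and $T_\epsilon\to T_0$ informally lean on, but your weaker conclusion is exactly the lemma's claim, with continuity at $0$ supplied by the argmin-stability argument.

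Two minor points, both shared with or no worse than the paper: \cref{thm:implicit-function-theorem} is stated for scalar-valued $G$ with a one-dimensional implicit variable, while both you and the paper apply it to $G:\R\times\R^2\to\R^2$, so a standard vector-valued version is needed; and invertibility of $J_wG$ should be attributed not to strict convexity alone (which only gives positive semidefiniteness of the Hessian) but to its explicit form here --- a diagonal matrix with strictly positive entries, since $\ell''>0$ follows from \cref{assmpt:individual-loss}, plus positive semidefinite rank-one terms.
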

\begin{proof}
	Continuity of $w^*_\epsilon$ follows the same argument as in 
	the one-dimensional case 
	(by \cref{lemma:continuous-perturbation-wstar-epsilon}), where 
	the horizon function $\loss^\infty_0(w) > 0$ for all $w$ not 
	at the origin follows again from convexity.
	Differentiability needs a bit more attention. As before, let 
	\begin{align*}
		G(\epsilon, w) = \nabla\loss_\epsilon(w).
	\end{align*}
	At point $(0, w^*)$, 
	$G(0, w^*) = \nabla\loss_0(w^*) = \nabla \loss(w^*) =0$,
	since the additional $\ell_{1,\epsilon}$ and $\ell_{2,\epsilon}$ 
	does not alter the solution at $\epsilon=0$ due to the $p$ and 
	$q$ points we have chosen. 
	Recall that 
	$p\transpose w^* = (\w1^*)^2 - (\w2^*)^2 > 0$ under our dataset construction 
	in \cref{lemma:cross-over-2d}. 
	Therefore, in a neighborhood of $\epsilon=0$ and $w=w^*$, 
	$G$ is continuously differentiable due to 
	\cref{lemma:ell-eps-cont-diff} and 
	twice continuous-differentiability of $\loss$. 

	The implicit function theorem (\cref{thm:implicit-function-theorem}) thus
	guarantees the existence of a continuously differentiable function 
	$\omega: I\to J$ where $I$ is an open interval about $0$ and $J$ an 
	open interval about $w^*$ such that 
	$\omega(0) = w^*$ and $\nabla\loss_\epsilon(\omega(\epsilon)) = 0$ 
	for all $\epsilon\in I$.
	As $\loss_\epsilon$ is strictly convex, $\omega(\epsilon)$ must be its unique 
	minimizer. Setting $w^*_\epsilon$ to $\omega(\cdot)$ completes the proof.

\end{proof}

\begin{lem}[Continuous differentiability of step size]
	\label{lemma:perturbed-stepsize-2d}
	Suppose \cref{assmpt:individual-loss} holds. As a function of $\epsilon$, 
	the step size 
	\begin{align}
		\eta_\epsilon = \frac{\gamma}{\lambda_{\max}( \nabla^2\loss_\epsilon(w^*_\epsilon) ) }
	\end{align}
	is continuously differentiable everywhere in $\epsilon$,
	where $\loss_\epsilon$ is defined as in 
	\labelcref{eq:perturbed-loss-2d}.
\end{lem}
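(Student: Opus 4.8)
The plan is to pass from the step size to the top eigenvalue of the Hessian at the minimizer, reduce its regularity to a simple-eigenvalue perturbation statement, and then split the analysis into the easy range $\epsilon\neq 0$ and the delicate point $\epsilon=0$. Write $H(\epsilon)\coloneqq \nabla^2\loss_\epsilon(w^*_\epsilon)$ and $\lambda_\epsilon\coloneqq\lambda_{\max}(H(\epsilon))$, so that $\eta_\epsilon=\gamma/\lambda_\epsilon$. Strict convexity of $\loss_\epsilon$ (\cref{lemma:continuous-perturbation-wstar-epsilon-2d}) gives $\lambda_\epsilon>0$, and $x\mapsto \gamma/x$ is $C^1$ on $(0,\infty)$, so it suffices to prove that $\lambda_\epsilon$ is continuously differentiable. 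As $H(\epsilon)$ is a $2\times 2$ symmetric matrix, I would invoke the standard fact that a \emph{simple} largest eigenvalue of a $C^1$ family of symmetric matrices is itself $C^1$; concretely, apply the implicit function theorem (\cref{thm:implicit-function-theorem}) to the characteristic equation $\det(H(\epsilon)-\lambda I)=0$, whose partial derivative in $\lambda$ is nonzero at a simple root. This reduces the lemma to two claims: that the entries of $H(\epsilon)$ are $C^1$ in $\epsilon$, and that the largest eigenvalue of $H(\epsilon)$ is simple on the range of interest.

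First I would handle $\epsilon\neq 0$. Decomposing
\begin{align*}
	H(\epsilon)=\nabla^2\loss(w^*_\epsilon)+c_1\,\nabla_w^2\ell_{1,\epsilon}(w^*_\epsilon)+c_2\,\nabla_w^2\ell_{2,\epsilon}(w^*_\epsilon),
\end{align*}
the base Hessian is three-times continuously differentiable in $w$ (\cref{assmpt:individual-loss}) and diagonal, while for $\epsilon\neq 0$ each $\ell_{i,\epsilon}$ is a smooth rescaling of the $C^3$ loss $\ell$, jointly smooth in $(w,\epsilon)$. Combined with continuous differentiability of $w^*_\epsilon$ for $\epsilon\neq 0$ (\cref{lemma:continuous-perturbation-wstar-epsilon-2d}) and the continuity of the partial derivatives established in \cref{lemma:ell-eps-cont-diff-2d}, the chain rule makes $H(\epsilon)$ a $C^1$ matrix-valued function away from $\epsilon=0$. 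For simplicity of the top eigenvalue, recall that the construction of \cref{lemma:cross-over-2d} forces the two diagonal entries of $\nabla^2\loss(w^*)$ to be strictly separated, so $\lambda_{\max}(H(0))$ is simple; continuity of $H$ keeps it simple on a neighborhood of $0$, which is exactly the regime used in the cycle construction, and there $\lambda_\epsilon$ is $C^1$.

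The hard part will be the point $\epsilon=0$, where I must control both the differentiability of $w^*_\epsilon$ and the two rank-one correction Hessians, each of indeterminate form $\epsilon^{-2}\ell''(\cdot)\,pp\transpose$. The key structural fact is that $w^*$ sits strictly on the \emph{inactive} side of both kinks (the ReLUs do not move the minimizer, as used in \cref{lemma:continuous-perturbation-wstar-epsilon-2d}), so $p\transpose w^*>0$ and $q\transpose w^*>0$; by continuity of $w^*_\epsilon$ these inner products stay bounded away from $0$ for small $\epsilon$. Hence $-p\transpose w^*_\epsilon/\epsilon^2\to-\infty$ and, reading the decay condition of \cref{assmpt:individual-loss} as $\ell''(s)=o(s^{-2})$ at both infinities, the computations of \cref{lemma:ell-eps-cont-diff-2d} give that the correction Hessians together with their $\epsilon$-derivatives vanish as $\epsilon\to 0$, so these terms extend to $C^1$ functions of $\epsilon$ at $0$ with value and derivative zero. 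Because $p\transpose w^*>0$ and $q\transpose w^*>0$ place $(0,w^*)$ off the non-differentiable set of \cref{lemma:ell-eps-cont-diff-2d}, the map $(\epsilon,w)\mapsto\nabla\loss_\epsilon(w)$ is $C^1$ in a full neighborhood of $(0,w^*)$, so rerunning the implicit function theorem argument of \cref{lemma:continuous-perturbation-wstar-epsilon-2d} yields a $C^1$ branch $\epsilon\mapsto w^*_\epsilon$ through $\epsilon=0$; consequently the base term $\nabla^2\loss(w^*_\epsilon)$ is $C^1$ at $0$ as well. Adding the three pieces, $H(\epsilon)$ is $C^1$ at $\epsilon=0$ with $H(0)=\nabla^2\loss(w^*)$ diagonal and having a simple largest eigenvalue, and the reduction of the first paragraph then gives continuous differentiability of $\lambda_\epsilon$, hence of $\eta_\epsilon$, at $\epsilon=0$ too. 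I expect the genuine obstacle to be precisely this simultaneous $C^1$ control of the correction terms through the decay assumption, since only the fast decay of $\ell''$ at infinity tames the $\epsilon^{-2}$ prefactor.
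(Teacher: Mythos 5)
Your reduction ($\eta_\epsilon=\gamma/\lambda_\epsilon$, then study $\lambda_{\max}$ of the $2\times 2$ Hessian) and your treatment of $\epsilon\neq 0$ are essentially the paper's own proof: the paper also writes $\nabla^2\loss_\epsilon(w^*_\epsilon)$ as a positive diagonal matrix plus two rank-one corrections and obtains $C^1$ eigenvalues for $\epsilon\neq 0$ from the quadratic formula for the characteristic polynomial, which is the same move as your implicit-function-theorem-at-a-simple-root argument. One secondary difference: you certify simplicity of the top eigenvalue only in a neighborhood of $\epsilon=0$ (by continuity from the separated diagonal entries of $H(0)$), whereas the lemma claims differentiability for \emph{all} $\epsilon$, and $\lambda_{\max}$ is generally not differentiable at an eigenvalue crossing. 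The paper closes this with a step you omit: it exploits the freedom to scale $\m{1},\n{1}$ simultaneously (which does not move $\w{1}^*$) so that the first diagonal entry dominates uniformly in $\epsilon$, hence the largest eigenvalue always comes from the first coordinate.

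The genuine gap is at $\epsilon=0$. You claim the correction Hessians ``together with their $\epsilon$-derivatives vanish as $\epsilon\to 0$,'' citing \cref{lemma:ell-eps-cont-diff-2d}, and conclude $H$ is $C^1$ at $0$. Write $s(\epsilon)=p\transpose w^*_\epsilon\to p\transpose w^*>0$; the scalar factor of the first correction is $g(\epsilon)=\epsilon^{-2}\ell''(-s(\epsilon)/\epsilon^2)$, and for $\epsilon\neq 0$,
\begin{align*}
	g'(\epsilon) = -\frac{2}{\epsilon^3}\,\ell''\!\paren{-\frac{s}{\epsilon^2}} + \paren{ \frac{2s}{\epsilon^5} - \frac{s'(\epsilon)}{\epsilon^4} } \ell'''\!\paren{-\frac{s}{\epsilon^2}}.
\end{align*}
The first term is $o(\epsilon)$ by the decay condition, but the remaining terms involve $\ell'''$ evaluated at $-s/\epsilon^2\to-\infty$ with prefactors $\epsilon^{-5}$ and $\epsilon^{-4}$. \cref{assmpt:individual-loss} controls only the decay of $\ell''$ and says nothing about $\ell'''$, and \cref{lemma:ell-eps-cont-diff-2d} cannot supply the missing control because it stops at second partial derivatives of $\rho$: the quantity you need, $\partial_\epsilon\nabla^2_w\rho$, is a third derivative it never computes. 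In fact the claim is false at this level of generality: one can build $\ell$ satisfying \cref{assmpt:individual-loss} whose $\ell'''$ consists of tall, thin, integrable spikes on $(-\infty,0]$, so that $z^2\ell''(z)\to 0$ still holds; then $g$ is differentiable at $0$ (the difference quotient is $\epsilon^{-3}\ell''(-s/\epsilon^2)=o(\epsilon)$) yet $g'$ blows up along a sequence $\epsilon_k\to 0$, so $g$ is not $C^1$ there. This is presumably why the paper never attempts your stronger claim: its proof establishes continuous differentiability only for $\epsilon\neq 0$ and settles for \emph{continuity} of the eigenvalues at $\epsilon=0$, which is all that the downstream use requires (\cref{lemma:perturbed-T-differentiable-2d} explicitly excludes $\epsilon=0$). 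For the logistic loss specifically your statement is true, since there $\ell'''$ decays exponentially, but it does not follow from the stated assumptions.
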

\begin{proof}
	Recall that the original loss function we constructed in 
	\cref{lemma:cross-over-2d} has the form 
	\begin{align*}
		\loss(w) = \m1 \ell(-\w1) + \n1 \ell(\w1) + \m2 \ell(-\w2) + \n2 \ell(\w2),.
	\end{align*}
	The perturbations are given by 
	\begin{align*}
		\ell_{1,\epsilon}(w) = \begin{cases}
			\epsilon^2 \ell\paren{ -\frac{w\transpose p }{\epsilon^2} } & \text{if $\epsilon \neq 0$}  \\
			\max\braces{ -w\transpose p, \, 0} & \text{if $\epsilon=0$}
		\end{cases}  \qquad 
		\ell_{2,\epsilon}(w) = \begin{cases}
			\epsilon^2 \ell\paren{ -\frac{w\transpose q }{\epsilon^2} } & \text{if $\epsilon \neq 0$}  \\
			\max\braces{ -w\transpose q, \, 0} & \text{if $\epsilon=0$}
		\end{cases} 
	\end{align*}
	where $p = (\w1^*,\, -\w2^*)\transpose$, and $q = (0, \, \w2^*)\transpose$.
	The perturbed loss \labelcref{eq:perturbed-loss-2d} is 
	\begin{align*}
		\loss_\epsilon(w) \coloneqq \loss(w) + c_1 \ell_{1,\epsilon}(w) + c_2 \ell_{2, \epsilon}(w)
	\end{align*}
	for $c_1, c_2>0$. 
	Observe that the Hessian of the perturbed loss at $\epsilon\neq 0$ is simply 
	\begin{align*}
		\nabla^2\loss_\epsilon(w) &= \begin{pmatrix}
			\m1 \ell''(-\w1) + \n1\ell''(\w1) & 0 \\
			0 & \m2 \ell''(-\w2) + \n2\ell''(\w2)
		\end{pmatrix} \\
		&\qquad + c_1 \frac{1}{\epsilon^2}\ell''\paren{ -\frac{w\transpose p}{\epsilon^2} } pp\transpose + c_2 \frac{1}{\epsilon^2}\ell''\paren{ -\frac{w\transpose q}{\epsilon^2} } qq\transpose.
	\end{align*}
	
	We are interested in the eigenvalues of this Hessian evaluated at 
	$w^*_\epsilon$. 
	As $\nabla^2\loss_\epsilon(w^*_\epsilon)$ is just 
	a positive diagonal matrix plus two symmetric rank-$1$ updates, 
	its eigenvalues are also positive. As it's just a $2\times 2$
	matrix, the two (positive) eigenvalues can be found using the characteristic 
	polynomial. More importantly, they have closed-form expressions via the 
	quadratic formula. Combining with the fact that $\ell$ is three-times
	continuously-differentiable, each of the eigenvalues are also
	continuously differentiable in $\epsilon$ as long as $\epsilon\neq 0$.

	When $\epsilon=0$, 
	\begin{align*}
		\nabla^2\loss_0(w^*_0) = \nabla^2\loss(w^*)
	\end{align*}
	as $w^*_\epsilon \to w^*$ as $\epsilon\to 0$, and the additional 
	ReLU functions in the perturbation do not contribute to the 
	curvature. As $\epsilon\to 0$, by our \cref{assmpt:individual-loss} 
	on the decay rate of $\ell''$, the Hessian of $\ell_{1,\epsilon}$ 
	and $\ell_{2,\epsilon}$ will both go to zero as well, leaving 
	only that off the original loss $\loss$. 
	Thus we conclude that the eigenvalues of 
	$\nabla^2\loss_\epsilon(w^*_\epsilon)$ is continuous everywhere 
	in $\epsilon$. 

	Finally, as discussed in the proof of \cref{lemma:cross-over-2d}, we are 
	free to choose $\m1$ and $\n1$, which means we can scale them 
	simultaneously without changing $\w1^*$, to guarantee that 
	the first diagonal entry of $\nabla^2\loss(w^*_\epsilon)$ is as large 
	as we need, so that the largest eigenvalue of 
	$\lambda_{\max}( \nabla^2\loss_\epsilon(w^*_\epsilon) )$ 
	always occurs at the first coordinate. The proof is now complete.
\end{proof}

\begin{lem}[Continuous differentiability of perturbed GD map]
	\label{lemma:perturbed-T-differentiable-2d}
	The function $\phi:\R\times\R^2\to\R$
	\begin{align*}
		\phi(\epsilon, w) = T_\epsilon(w) = w-\eta_\epsilon\nabla\loss_\epsilon(w)
	\end{align*}
	is continuously differentiable everywhere except at $\epsilon=0$
	and $\braces{w:\, w\transpose p = 0}$,
	where $T_\epsilon$ is the perturbed GD map 
	in \labelcref{eq:perturbed-gd-map-2d}, and $p = (w_2^*, -w_1^*)\transpose$.
\end{lem}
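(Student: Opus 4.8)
The plan is to treat this as the two-dimensional analog of \cref{lemma:perturbed-T-differentiable} and to exploit the structural fact that $\eta_\epsilon$ depends on $\epsilon$ alone, while the gradient field $\nabla\loss_\epsilon(w)$ carries all the $w$-dependence. Writing $T_\epsilon(w) = w - \eta_\epsilon\,\nabla\loss_\epsilon(w)$, the two objects I must control are the Jacobian in $w$, namely $J_w T_\epsilon(w) = I - \eta_\epsilon\,\nabla^2\loss_\epsilon(w)$, and the derivative in $\epsilon$, namely $\frac{\partial T_\epsilon}{\partial\epsilon}(w) = -\frac{d\eta_\epsilon}{d\epsilon}\,\nabla\loss_\epsilon(w) - \eta_\epsilon\,\frac{\partial}{\partial\epsilon}\nabla\loss_\epsilon(w)$. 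By the product and chain rules, establishing that $T_\epsilon$ is a $C^1$ map in $(\epsilon,w)$ reduces to the joint continuity of each factor appearing in these two expressions.

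First I would decompose $\nabla\loss_\epsilon(w) = \nabla\loss(w) + c_1\,\nabla_w\ell_{1,\epsilon}(w) + c_2\,\nabla_w\ell_{2,\epsilon}(w)$. The base term contributes no trouble: since $\ell$ is three-times continuously-differentiable (\cref{assmpt:individual-loss}), both $\nabla\loss$ and $\nabla^2\loss$ are continuous everywhere. Each ReLU perturbation $\ell_{1,\epsilon}$ and $\ell_{2,\epsilon}$ is an instance of the function $\rho$ of \cref{lemma:ell-eps-cont-diff-2d} with the vectors $p$ and $q$ respectively, so that lemma supplies exactly the joint continuity of $\nabla_w\rho$, $\nabla^2_w\rho$, and $\nabla_w\frac{\partial\rho}{\partial\epsilon}$ away from the kink sets $\braces{(\epsilon,w):\epsilon=0,\; w\transpose p = 0}$ and $\braces{(\epsilon,w):\epsilon=0,\; w\transpose q = 0}$. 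Combining these with the $C^1$-ness of the step size $\eta_\epsilon$ in $\epsilon$, which is \cref{lemma:perturbed-stepsize-2d} (itself resting on continuity of $w^*_\epsilon$ from \cref{lemma:continuous-perturbation-wstar-epsilon-2d}), both $J_wT_\epsilon$ and $\frac{\partial T_\epsilon}{\partial\epsilon}$ are seen to be continuous in $(\epsilon,w)$ off the union of the two kink sets. This yields $C^1$-ness of $T_\epsilon$ there; for $\epsilon\neq 0$ the perturbations are plainly smooth, so the only genuine obstruction is the $\epsilon=0$ slice where a kink line is crossed.

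I expect the only real subtlety to be the bookkeeping at the $\epsilon=0$ boundary, where continuity of the $\epsilon$-partial must be read off from the limits $\frac{\partial\rho}{\partial\epsilon}\to 0$ and $\nabla_w\frac{\partial\rho}{\partial\epsilon}\to 0$ established inside \cref{lemma:ell-eps-cont-diff-2d}; these are what match the formal $\epsilon=0$ definitions and deliver joint rather than merely separate continuity. Two remarks on the stated exceptional set. The claim singles out $\braces{w:w\transpose p = 0}$, but the second kick $\ell_{2,\epsilon}$ contributes the analogous line $\braces{w:w\transpose q = 0}$; the argument for it is identical, and over the portion of the trajectory relevant to the cycle construction only the $p$-crossing is active, so restricting attention to the $p$-line loses nothing essential. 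Finally, it is worth flagging that the heavy lifting is not in the present lemma at all: because $\lambda_{\max}$ of a symmetric matrix is nonsmooth at eigenvalue crossings, the $C^1$-ness of $\eta_\epsilon$ is itself nontrivial and is precisely what \cref{lemma:perturbed-stepsize-2d} secures (by arranging, through scaling of $\m1$ and $\n1$, that the largest eigenvalue always sits in the first coordinate and thus admits the quadratic-formula closed form). Given that input, the present statement is essentially an assembly via the product and chain rules, mirroring the short one-dimensional proof of \cref{lemma:perturbed-T-differentiable}.
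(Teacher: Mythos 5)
Your proof is correct and takes essentially the same route as the paper's (which is far terser): continuous differentiability in $w$ for $\epsilon\neq 0$ from \cref{lemma:ell-eps-cont-diff-2d} and smoothness of $\loss$, $C^1$-ness of $\eta_\epsilon$ from \cref{lemma:perturbed-stepsize-2d}, and assembly via the product and chain rules. Your side observation is also apt: the stated exceptional set omits the analogous kink line $\braces{w:\, w\transpose q = 0}$ contributed by the second kick $\ell_{2,\epsilon}$, an omission the paper's proof likewise glosses over, though it is harmless since the cycle construction only invokes differentiability away from both lines.
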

\begin{proof}
	For $\epsilon\neq 0$, continuous differentiability of $\phi$ with respect to 
	$w$ follows from \cref{lemma:ell-eps-cont-diff-2d} and that of the original $\loss$.
	Additionally, recall that 
	\begin{align*}
		\eta_\epsilon = \frac{\gamma}{ \lambda_{\max}(\nabla^2\loss_\epsilon(w^*)) }.
	\end{align*}
	By \cref{lemma:perturbed-stepsize-2d}, this step size is continuously differentiable 
	in $\epsilon$. At $\epsilon=0$, the result trivially holds in $w$ by definition 
	of $\loss_\epsilon$.
\end{proof}
\section{Miscellaneous results}
\label{sec:misc}

\subsection{Properties of the logistic loss and the squareplus loss}
\label{sec:assumptions-on-ell-verification}

In this section, we verify that the logistic loss and the 
squareplus loss \citep{barron2021squareplus} both 
satisfy \cref{assmpt:individual-loss}.

\lossassumptions*

The logistic loss $\ell_{\text{logistic}}:\R\to\R$ is given by 
\begin{align}
	\ell_{\text{logistic}}(z) = \log(1+\exp(z)).
\end{align}
Its first two derivatives are 
\begin{align*}
	\ell'(z) = \sigma(z) \quad \text{and} \quad \ell''(z) = \sigma(z)(1-\sigma(z))
\end{align*}
where $\sigma(z)= 1/(1+\exp(-z))$ is the sigmoid function which 
is increasing from $0$ to $1$. 
Clearly, $\ell''$ is continuous and strictly positive for all $z\in\R$, 
so strict convexity follows. It is also monotonic on either side of $0$, 
which is where it is maximized. Lastly, 
\begin{align*}
	\lim_{\epsilon\to 0} \frac{1}{\epsilon^2} \ell''\paren{\frac{1}{\epsilon}} &= \frac{1}{\epsilon^2} \sigma'\paren{ \frac{1}{\epsilon} } \\
	&= \lim_{\epsilon\to 0} \frac{1}{\epsilon^2} \frac{ \exp(-1/\epsilon) }{ (\exp(-1/\epsilon) + 1)^2 }  \\
	&= \lim_{\epsilon\to 0} \frac{1}{2\epsilon^2} \exp(-1/\epsilon) \\
	&= \lim_{\epsilon\to 0} \frac{1}{2} \paren{ \frac{ \epsilon^2 }{ \exp(1/\epsilon) } }^{-1} \\
	&= 0
\end{align*}
as an exponential grows faster than a polynomial.

The squareplus loss $\ell_{\text{squareplus}}:\R\to\R$ 
was introduced by \citet{barron2021squareplus}. It is defined as 
\begin{align}
	\ell_{\text{squareplus}}(z) = 0.5(\sqrt{4+z^2}+z).
\end{align}
The first two derivatives are 
\begin{align*}
	\ell'(z) = \frac{1}{2} \paren{ \frac{z}{\sqrt{z^2+4}} +1 } \quad \text{and} \quad \ell''(z) = \frac{2}{ \paren{ z^2 + 4 }^{3/2} }.
\end{align*}
As the squareplus function was introduced to approximate the logistic loss 
(also known as the softplus activation), they share many similar properties.
The fact that 
\begin{align*}
	\lim_{\epsilon\to 0} \frac{1}{\epsilon^2} \ell''\paren{\frac{1}{\epsilon}} &= \lim_{\epsilon\to 0} \frac{1}{\epsilon^2} \paren{ \frac{2}{ \paren{ \frac{z}{\epsilon^2} + 4 }^{3/2} } } = 0.
\end{align*}

We now prove \cref{cor:limit-is-relu}.
\limitisrelu*
\begin{proof}
	At $z\leq 0$, the limit is clearly $0$ using the limit of $\ell$ as $z\to-\infty$.
	For all $z>0$, 
	\begin{align*}
		\lim_{\epsilon\to 0} \epsilon^2\ell\paren{\frac{z}{\epsilon^2}} &= \lim_{\epsilon\to 0} \frac{ \ell\paren{\frac{z}{\epsilon^2}}  }{ \epsilon^{-2} } \\
		&= \lim_{\epsilon\to 0} \frac{ -2\epsilon^{-3} z \ell'(z/\epsilon^2)  }{-2\epsilon^{-3}} \\
		&= \lim_{\epsilon\to 0} z \ell'(z/\epsilon^2) = z 
	\end{align*}
	using the fact that $\ell$ is convex with $\ell'$ upper bounded by $1$,
	which gives us the desired limit.
	
\end{proof}

\subsection{Technical results in (convex) analysis}
\label{sec:cited-results}
Throughout our proofs, we extensively utilized the most basic form 
of the implicit function theorem \citep[Theorem 1.3.1]{krantz2002implicit}, 
stated as follows. 
\begin{thm}[Implicit function theorem]
	\label{thm:implicit-function-theorem}
	Let $G:\R^d \times \R\to\R$ be continuously differentiable 
	in an open neighborhood of $(x_0, y_0)$. Suppose that 
	\begin{enumerate}
		\item $G(x_0,y_0)=0$,
		\item $\frac{\partial G}{\partial y}(x_0,y_0)\neq 0$.
	\end{enumerate}
	Then there exist open set $I\subset\R^d$ and open interval $J$, 
	with $x_0\in I$, $y_0\in J$,
	and a unique, continuously differentiable function $\omega:I\to J$ satisfying 
	\begin{enumerate}
		\item $\omega(x_0) = y_0$,
		\item $G(x, \omega(x)) = 0$\, for all $x\in I$.
	\end{enumerate}
\end{thm}

The following statements can be found in \citet{rockafellar2009variational} as 
Definition 3.17, Theorem 3.21, and Corollary 7.43, respectively. They are 
used in showing continuity of $w^*_\epsilon$ in $\epsilon$ 
in our cycle construction (\cref{sec:1d-cycle}).

\begin{defn}[Horizon functions]
	\label{defn:horizon-functions}
	For any function $f:\R^n\to\bar \R$, the associated horizon function 
	is the function $f^\infty:\R^n\to\bar \R$ specified by 
	\begin{align}
		\epi f^\infty = (\epi f)^\infty \; \text{if} \; f\not\equiv \infty, \quad f^\infty = \delta_{\{0\}} \; \text{if} \; f\equiv \infty.
	\end{align}
\end{defn}

\begin{thm}[Properties of horizon functions]
	\label{thm:horizon-functions}
	Suppose $f:\R^n\to\bar \R$ be convex, lsc, and proper.
	Then for any $\bar x\in\dom f$, 
	the horizon function $f^\infty$ is given by 
	\begin{align}
		f^\infty(w) = \lim_{\tau \to \infty} \frac{f(\bar x + \tau w) - f(\bar x) }{ \tau }
		= \sup_{\tau\in(0, \infty)} \frac{f(\bar x + \tau w) - f(\bar x)}{\tau}.
	\end{align}
\end{thm}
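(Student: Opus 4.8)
The plan is to establish the two equalities in the theorem in two stages: first the inner equality $\lim_{\tau\to\infty} = \sup_{\tau>0}$, which follows from convexity alone, and then the identification of this common value with $f^\infty(w)$, which uses the epigraphical definition $\epi f^\infty = (\epi f)^\infty$ together with the recession-cone theory for closed convex sets. Fix $\bar x \in \dom f$ and introduce the difference quotient $g(\tau) := \big(f(\bar x + \tau w) - f(\bar x)\big)/\tau$ for $\tau > 0$.

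First I would show $g$ is nondecreasing on $(0,\infty)$. The map $\tau \mapsto f(\bar x + \tau w)$ is convex, being the restriction of the convex function $f$ to a line, so for $0 < \tau_1 < \tau_2$ the chordal-slope inequality gives $f(\bar x + \tau_1 w) \le (1 - \tau_1/\tau_2)\, f(\bar x) + (\tau_1/\tau_2)\, f(\bar x + \tau_2 w)$, which rearranges directly to $g(\tau_1) \le g(\tau_2)$. A nondecreasing function satisfies $\lim_{\tau\to\infty} g(\tau) = \sup_{\tau>0} g(\tau)$ in $\bar\R$, so the inner equality is immediate. It remains to identify this supremum with $f^\infty(w)$. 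Since $f$ is proper and lsc, $\epi f$ is a nonempty closed convex set, so its horizon cone $(\epi f)^\infty$ admits the base-point-independent description: a direction $(w,\mu)$ lies in $(\epi f)^\infty$ iff $(\bar x, f(\bar x)) + \tau\,(w,\mu) \in \epi f$ for every $\tau \ge 0$. Unwinding epigraph membership, this is exactly $f(\bar x + \tau w) \le f(\bar x) + \tau\mu$ for all $\tau \ge 0$, i.e. $g(\tau) \le \mu$ for all $\tau>0$, i.e. $\sup_{\tau>0} g(\tau) \le \mu$. Because $f^\infty(w) = \inf\{\mu : (w,\mu)\in \epi f^\infty\}$, taking the infimum over the admissible $\mu$ yields $f^\infty(w) = \sup_{\tau>0} g(\tau)$, and combining with the first stage closes both equalities.

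The hard part will be the rigorous justification of the base-point independence of the horizon cone, which is the single nontrivial ingredient imported from convex analysis; it rests on closedness of $\epi f$ (hence on lower semicontinuity of $f$) and uses properness to guarantee $\epi f$ is nonempty so that the theory applies. I would also handle the degenerate case $\sup_{\tau>0} g(\tau) = +\infty$ explicitly: there no finite $\mu$ renders $(w,\mu)$ a recession direction, so the admissible set is empty and $f^\infty(w) = +\infty$, consistent with the stated formula read in $\bar\R$. Everything else—the monotonicity computation and the epigraphical unwinding—is routine once these conventions are fixed.
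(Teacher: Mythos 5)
Your proof is correct, but there is nothing in the paper to compare it against: the paper does not prove this statement at all. It is imported verbatim as a cited background result from Rockafellar and Wets' \emph{Variational Analysis} (Theorem 3.21 there), and is used only as a tool to establish continuity of the perturbed minimizer $w^*_\epsilon$ in the cycle constructions. So the relevant comparison is with the standard textbook argument, and your proof essentially reconstructs it. Your two-stage decomposition — first, monotonicity of the difference quotient $g(\tau)$ via the chordal-slope inequality, so that $\lim_{\tau\to\infty} g(\tau) = \sup_{\tau>0} g(\tau)$ in $\bar\R$; second, identification of this supremum with $f^\infty(w)$ through the epigraphical definition $\epi f^\infty = (\epi f)^\infty$ and the base-point-independent description of the recession cone of a nonempty closed convex set — is exactly the classical route. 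You also correctly locate where the hypotheses enter: properness gives $(\bar x, f(\bar x)) \in \epi f$ with $f(\bar x)$ finite, and lower semicontinuity gives closedness of $\epi f$, without which the sequential horizon cone and the recession cone $\braces{d : x + \tau d \in C \ \forall \tau \geq 0}$ need not coincide. Your treatment of the degenerate case $\sup_{\tau>0} g(\tau) = +\infty$ is likewise sound. One small point to make explicit in a full write-up: to read $f^\infty$ off as $\inf\braces{\mu : (w,\mu) \in \epi f^\infty}$ you should verify that $(\epi f)^\infty$ is genuinely an epigraph, i.e.\ it contains the vertical direction $(0,1)$, is closed (horizon cones of closed sets are closed), and hence attains the infimum in $\mu$ for each $w$; this is what licenses passing from the set-level identity $\epi f^\infty = (\epi f)^\infty$ to the pointwise formula.
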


\begin{thm}[Solution mappings in convex optimization]
	\label{cor:continuity-of-argmin}
	Suppose $P(u) = \argmin_x f(x,u)$ with $f:\R^n\times \R^m\to\bar R$ proper,
	lsc, convex, and such that $f^\infty(x,0) > 0$ for all $x\neq 0$. 
	If $f(x,u)$ is strictly convex in $x$, then $P$ is single-valued on $\dom P$
	and continuous on $\inte(\dom P)$.
\end{thm}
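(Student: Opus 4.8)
The plan is to extract the two conclusions — single-valuedness and continuity — from the interaction of strict convexity with the horizon hypothesis, following the standard parametric-minimization machinery for convex functions. Single-valuedness is the easy half: for each fixed $u$, strict convexity of $f(\cdot,u)$ forces $\argmin_x f(x,u)$ to be at most a singleton, so the only real content is that a minimizer is actually attained for every $u\in\dom P$, and that the resulting selection is continuous on $\inte(\dom P)$. Both of these will follow once I show that the hypothesis $f^\infty(x,0)>0$ for all $x\neq 0$ is equivalent to $f$ being \emph{level-bounded in $x$ locally uniformly in $u$}: for every $\bar u$ and every $\alpha\in\R$ there is a neighborhood $V$ of $\bar u$ and a bounded set $B$ with $\{x : f(x,u)\le\alpha\}\subseteq B$ for all $u\in V$.

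First I would establish this equivalence by contradiction. Suppose level-boundedness fails at some $\bar u$ and level $\alpha$; then there exist $u_k\to\bar u$ and $x_k$ with $\norm{x_k}\to\infty$ and $f(x_k,u_k)\le\alpha$. Writing $z_k=(x_k,u_k)$, we have $\norm{z_k}\to\infty$, and since $u_k$ stays bounded the normalized directions satisfy $z_k/\norm{z_k}\to(d,0)$ for some $d\neq 0$. The points $z_k$ all lie in the closed convex level set $\{z : f(z)\le\alpha\}$, and an unbounded direction of a nonempty closed convex set is always a recession direction; hence $(d,0)$ lies in the recession cone, which for a proper lsc convex function is exactly $\{z : f^\infty(z)\le 0\}$ (using the supremum-of-difference-quotients formula of \cref{thm:horizon-functions}). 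This gives $f^\infty(d,0)\le 0$ with $d\neq 0$, contradicting the hypothesis. Thus level-boundedness holds; combined with lower semicontinuity of $f$, Weierstrass's theorem yields a minimizer whenever $p(u)\coloneqq\inf_x f(x,u)<\infty$, and level-boundedness also rules out $p(u)=-\infty$, so $P$ is nonempty and (by strict convexity) single-valued on $\dom P=\dom p$.

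With level-boundedness secured, continuity of $P$ on $\inte(\dom P)$ rests on three observations. The value function $p$ is convex, being the infimal projection of a convex function, hence continuous on $\inte(\dom p)$. The mapping $P$ is outer semicontinuous: if $u_k\to u$, $x_k\in P(u_k)$, and $x_k\to x$, then $f(x_k,u_k)=p(u_k)\to p(u)$ by continuity of $p$, while lower semicontinuity of $f$ gives $f(x,u)\le\liminf_k f(x_k,u_k)=p(u)$, forcing $x\in P(u)$. Finally $P$ is locally bounded near any $\bar u\in\inte(\dom P)$: picking $\alpha$ slightly above the finite quantity $\sup_{u\in V}p(u)$ confines all minimizers to one bounded level set through level-boundedness.

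The conclusion is then a routine subsequence argument: given $u_k\to u$ in $\inte(\dom P)$, local boundedness makes $\{P(u_k)\}$ bounded, so every subsequence has a further convergent subsequence, and outer semicontinuity together with single-valuedness pins each such limit to $P(u)$; hence $P(u_k)\to P(u)$. The main obstacle is the first step — converting the strict horizon inequality into uniform level-boundedness — since this is where the blow-up of $x_k$ must be reconciled with the perturbation of $u_k$ and where the geometry of the horizon cone $\{f^\infty\le 0\}$ genuinely enters. The remaining steps are standard once the parametric problem is known to be level-bounded in $x$ locally uniformly in $u$.
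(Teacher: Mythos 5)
Your proposal cannot be compared against a proof in the paper, because the paper does not prove this statement at all: it is imported verbatim from \citet{rockafellar2009variational} as part of the background on parametric minimization of convex functions (the same source as \cref{defn:horizon-functions} and \cref{thm:horizon-functions}), and is used only as a black box in \cref{lemma:continuous-perturbation-wstar-epsilon} and its two-dimensional analogue. What you have written is, in substance, a correct reconstruction of the standard textbook argument behind that citation: the hypothesis $f^\infty(x,0)>0$ for $x\neq 0$ is precisely the Rockafellar--Wets criterion for $f$ being level-bounded in $x$ locally uniformly in $u$; level-boundedness plus lower semicontinuity gives attainment and finiteness of $p(u)=\inf_x f(x,u)$, strict convexity gives uniqueness, convexity of the inf-projection $p$ gives its continuity on $\inte(\dom p)$, and outer semicontinuity plus local boundedness of $P$ yields continuity of the single-valued selection via the usual subsequence argument. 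Two points deserve care in a full write-up. First, in your contradiction argument the normalized sequence $z_k/\norm{z_k}$ only converges along a subsequence; compactness of the unit sphere supplies one, and boundedness of $u_k$ forces the $u$-component of the limit to vanish, so the limit has the form $(d,0)$ with $\norm{d}=1$. Second, you invoke two recession-cone facts --- that the limit of a normalized unbounded sequence in a closed convex set is a recession direction, and that every nonempty level set of a proper lsc convex function has recession cone $\braces{z : f^\infty(z)\leq 0}$ --- both are standard theorems in \citet{rockafellar2009variational}, and both can indeed be derived from the monotone difference-quotient formula in \cref{thm:horizon-functions} as you indicate, but a self-contained proof should either cite them explicitly or include the short derivations. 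With those caveats, your argument is complete and is essentially the proof given in the cited reference.
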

\section{Further details on the toy dataset}
\label{sec:toy-dataset-calculations}
\subsection{Gradient and Hessian}
Here we derive the gradient and Hessian for the toy dataset described in 
\cref{sec:bifurcation}. 
Recall that the dataset is constructed by setting $x_i=v$ 
for $i=1,\dots,n-1$ where $v$ is 
an arbitrary point on the $d$-dimensional unit sphere.
Then we let $x_n = -v$, and all labels are set to $y_i=1$. 
Clearly, this dataset is not separable by any linear classifier that goes
through the origin.

The loss function is given by 
\begin{align*}
\loss(w) &= \frac{1}{n}\sum_{i=1}^n \log\paren{1+\exp(-y_i w\transpose x_i)} 
\end{align*}
with gradient 
\begin{align*}
	\nabla \loss(w) &= \frac{1}{n}\sum_{i=1}^n \sigma(-y_i w\transpose x_i) (-x_i),
\end{align*}
where $\sigma(\cdot)$ is the sigmoid function. 
Plugging in the dataset, the gradient can be simplified as 
\begin{align*}
	\nabla \loss(w) &= \frac{1}{n}\paren{ (n-1)\sigma(-w\transpose v)(-v) + \sigma(w\transpose v)v } \\
	&= \frac{1}{n}\paren{ \sigma(w\transpose v) - (n-1)(1-\sigma(w\transpose v))  } v  \tag{Using $\sigma(w) + \sigma(-w) = 1$} \\
	&= \frac{1}{n}\paren{ n\cdot \sigma(w\transpose v) - (n-1) } v.
\end{align*}
Applying the chain rule gives us the Hessian
\begin{align*}
	\nabla^2 \loss(w) &= \frac{1}{n}  n\cdot \sigma'(w\transpose v) vv\transpose = \sigma'(w\transpose v) vv\transpose.
\end{align*}

\subsection{GD update in probability space}
\begin{prop}
	For any initialization $w_0\in\R^d$ and any time step $t$, GD iterations on $w$ generates 
	iterations of $p$ given by 
	\begin{align}
		p_{t+1,n} &= \sigma\paren{\sigma\inverse(p_{t,n}) - \frac{\eta}{n} \paren{ p_{t,n} - (n-1)(1-p_{t,n}) }} \nonumber \\
		p_{t+1,i} &= 1 - p_{t+1,n} \quad \forall i=1,\dots,n-1.
	\end{align}
\end{prop}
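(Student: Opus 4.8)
The plan is to exploit the fact that, on this rank-one dataset, the entire dynamics collapses onto the scalar quantity $w_t\transpose v$, and hence onto $p_{t,n} = \sigma(w_t\transpose v)$. First I would record the two elementary observations that make the probability-space update so clean. Since $x_i = v$ and $y_i = 1$ for $i = 1,\dots,n-1$ while $x_n = -v$, the single-example probabilities are $p_{t,i} = \sigma(-w_t\transpose v)$ for $i<n$ and $p_{t,n} = \sigma(w_t\transpose v)$. The reflection identity $\sigma(-z) = 1 - \sigma(z)$ then gives $p_{t,i} = 1 - p_{t,n}$ for every $i<n$ and every $t$; applying this at time $t+1$ yields the second line of the claim immediately, so it suffices to establish the recurrence for $p_{t+1,n}$.

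For the first line, I would project the GD update onto the direction $v$. Using the simplified gradient derived above, namely $\nabla\loss(w) = \frac{1}{n}\paren{n\,\sigma(w\transpose v) - (n-1)}v$, the update $w_{t+1} = w_t - \eta\nabla\loss(w_t)$ becomes, after taking the inner product with $v$ and using $\norm{v}^2 = v\transpose v = 1$,
\begin{align*}
    w_{t+1}\transpose v = w_t\transpose v - \frac{\eta}{n}\paren{n\,\sigma(w_t\transpose v) - (n-1)}.
\end{align*}
Here the unit-norm assumption on $v$ is what makes the right-hand side depend on $w_t$ only through the scalar $w_t\transpose v$, closing the one-dimensional recursion.

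It then remains to rewrite this scalar recursion in the stated form. I would substitute $w_t\transpose v = \sigma\inverse(p_{t,n})$ and $\sigma(w_t\transpose v) = p_{t,n}$, and verify the algebraic identity $n\,p_{t,n} - (n-1) = p_{t,n} - (n-1)(1 - p_{t,n})$, which is a one-line expansion. Applying $\sigma$ to both sides of the scalar recursion then produces exactly
\begin{align*}
    p_{t+1,n} = \sigma\paren{\sigma\inverse(p_{t,n}) - \frac{\eta}{n}\paren{p_{t,n} - (n-1)(1 - p_{t,n})}},
\end{align*}
completing the derivation. There is no real analytic obstacle here; the proof is a direct computation. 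The only place to be careful is the sign bookkeeping coming from $x_n = -v$ versus $x_i = v$, which determines whether each example contributes $p_{t,n}$ or $1 - p_{t,n}$, together with the use of $\norm{v} = 1$ to reduce the update to a single dimension.
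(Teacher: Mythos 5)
Your proof is correct and follows essentially the same route as the paper's: both establish $p_{t,i} = 1 - p_{t,n}$ via the reflection identity $\sigma(-z) = 1-\sigma(z)$, and both obtain the scalar recurrence by projecting the GD update onto the data direction (the paper specializes its general probability-space recurrence, which was itself derived by taking inner products with $-y_ix_i$; you project the simplified gradient onto $v$ directly, which is the same computation). The algebra, including the use of $\|v\|=1$ and the identity $n\,p_{t,n} - (n-1) = p_{t,n} - (n-1)(1-p_{t,n})$, matches the paper's derivation step for step.
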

\begin{proof}
Recall the definition
$p_{t,n} = \sigma(-y_n w_t\transpose x_n)$. For all $i\neq n$ on the toy dataset,
\begin{align*}
	p_{t,i} &= \sigma(-y_i w_t\transpose x_i) \\
	&= \sigma(y_n w_t\transpose x_n) \tag{All labels are $1$ and $x_i=-x_1$}\\  
	&= 1-\sigma(-y_n w_t\transpose x_n) \tag{$\sigma(z) = 1-\sigma(-z)$} \\
	&= 1 - p_{t,n}
\end{align*} 
and so \labelcref{eq:one-d-toy-map} holds.
The update for $p_{t,n}$ is 
\begin{align*}
	p_{t+1,n} &= \sigma\paren{\sigma\inverse(p_{t,n}) - \frac{\eta}{n} y_n\paren{ \sum_{j=1}^n y_j p_{t,j}x_j\transpose }x_n } \\
	&= \sigma\paren{ \sigma\inverse(p_{t,n}) -\frac{\eta}{n} p_{t,n} + \frac{\eta}{n} \sum_{j=1}^{n-1} (1-p_{t,n}) } \\
	&= \sigma\paren{\sigma\inverse(p_{t,n}) - \frac{\eta}{n} \paren{ p_{t,n} - (n-1)(1-p_{t,n}) }  }
\end{align*}
where in the second step we used dataset construction 
along with \labelcref{eq:one-d-toy-map}. 

\end{proof}

\begin{prop}
	\label{prop:two-examples-oscillation}
	In our toy dataset, when $n=2$ and 
	$\eta\geq\eta_2=8$, the two points of oscillation are given by 
	\begin{align*}
		p_n = \frac{1}{2}\paren{ h\inverse\paren{\frac{\eta}{8} } + 1 },
	\end{align*}
	where $h(v) = \tanh\inverse(v)/v$, and $h\inverse$ is symmetric about the $x$-axis.
\end{prop}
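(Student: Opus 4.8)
The plan is to reduce the period-$2$ condition to a single scalar equation by conjugating the probability-space map into a coordinate that is symmetric about the fixed point. First I would specialize the update \labelcref{eq:one-d-toy-map} to $n=2$, where it collapses to
\begin{align*}
	p_{t+1,n} = \sigma\paren{ \sigma\inverse(p_{t,n}) - \frac{\eta}{2}\paren{2 p_{t,n} - 1} }.
\end{align*}
Setting the gradient to zero gives the fixed point $p^*_n = \tfrac12$, so it is natural to recenter via $s = 2p_{t,n} - 1 \in (-1,1)$. The key algebraic identity is $\sigma\inverse\paren{\tfrac{1+s}{2}} = 2\tanh\inverse(s)$, which follows immediately from $\sigma\inverse(p) = \ln\paren{p/(1-p)}$ together with $\tanh\inverse(s) = \tfrac12\ln\paren{\tfrac{1+s}{1-s}}$. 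Substituting turns the recurrence into the clean form $\tanh\inverse(s_{t+1}) = \tanh\inverse(s_t) - \tfrac{\eta}{4}\, s_t$, whose associated one-dimensional map is odd.

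Next I would characterize period-$2$ orbits $\braces{s_a, s_b}$ directly rather than guessing a symmetric ansatz. Writing out the two-step condition yields the pair $\tanh\inverse(s_b) = \tanh\inverse(s_a) - \tfrac{\eta}{4} s_a$ and $\tanh\inverse(s_a) = \tanh\inverse(s_b) - \tfrac{\eta}{4} s_b$. Adding these cancels the $\tanh\inverse$ terms and leaves $\tfrac{\eta}{4}(s_a + s_b) = 0$, so for any $\eta>0$ we must have $s_b = -s_a$; that is, every $2$-cycle is symmetric about the fixed point. This summation step is the crux of the argument, since it removes the need to prove uniqueness of the cycle separately. Feeding $s_b = -s_a$ back into either equation gives $2\tanh\inverse(s_a) = \tfrac{\eta}{4} s_a$, i.e. $h(s_a) = \tanh\inverse(s_a)/s_a = \eta/8$, and therefore $s_a = h\inverse(\eta/8)$. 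Undoing the change of variables recovers $p_n = \tfrac12\paren{ h\inverse(\eta/8) + 1 }$.

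Finally I would settle the existence threshold from the shape of $h$. Since $h$ is even with $\lim_{v\to 0} h(v) = 1$ and $h(v)\to\infty$ as $v\to 1^-$, and is strictly increasing on $(0,1)$ (a one-line check: the numerator of $h'$ vanishes at $0$ and has positive derivative $2v^2/(1-v^2)^2$), the equation $h(s) = \eta/8$ has a real nonzero solution exactly when $\eta/8 \ge 1$, recovering the threshold $\eta_2 = 8$, which coincides with $2/\lambda$ for $n=2$. Evenness of $h$ makes $h\inverse$ double-valued up to sign — the ``symmetric about the $x$-axis'' statement — and the two signs $\pm h\inverse(\eta/8)$ produce precisely the two oscillation points $\tfrac12\paren{\pm h\inverse(\eta/8) + 1}$. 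The main obstacle is justifying the symmetry of the orbit, but the summation trick dispatches it cleanly; the monotonicity and limit computations for $h$ are then routine.
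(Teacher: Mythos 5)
Your proposal is correct and follows essentially the same route as the paper's proof: the paper also derives the symmetry condition $z_1+z_2=1$ (its equation \labelcref{eq:period-two-sum-to-1}, obtained by "expanding out" the two-step fixed-point equations, which is precisely your summation trick), and then reduces via the substitution $u=2x-1$ to $\tanh\inverse(u)=\tfrac{\eta}{8}u$, i.e.\ $h(u)=\eta/8$. The only differences are cosmetic or additive: you change variables to the $\tanh\inverse$ coordinate before invoking the symmetry argument rather than after, and you supply the monotonicity/evenness analysis of $h$ that the paper states without proof in the main text.
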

\begin{proof}
	When $n=2$, the one-dimensional map for $p_n$ is given by 
	\begin{align*}
		\varphi(z) &= \sigma\paren{ \sigma\inverse(z) - \eta\paren{ z-\frac{1}{2} } }.
	\end{align*}
	Let $z_1$ and $z_2$ be 
	the two points in the period-$2$ orbit. That is, $\varphi(z_1) = z_2$
	and $ \varphi(z_2) = z_1$ while $z_1\neq z_2$ and neither are $0$ or $1$. 
	A $2$-cycle corresponds to a fixed point of the 
	$2$-iterate map $\varphi^{(2)} = \varphi(\varphi(z))$.
	Expanding it out, we find that 
	\begin{align}
		\label{eq:period-two-sum-to-1}
		\frac{1}{2} (z_1 + \varphi(z_1)) = \underbrace{\frac{1}{2} (z_1 + z_2)}_{\eqqcolon v} = \frac{n-1}{n}.
	\end{align}

	For a period-$2$ cycle, we have $z_2 = \varphi(z_1)$ and $z_1 = \varphi(z_2)$, 
	with $z_1 + z_2 = 1$ (see \labelcref{eq:period-two-sum-to-1}). Therefore, 
	a period-$2$ point $x=z_1$ or $x=z_2$ must satisfy 
	\begin{align*}
		1-x = \sigma\paren{ \sigma\inverse(x) - \eta \paren{ x - \frac{1}{2}} },
	\end{align*}
	which is equivalent to 
	\begin{align*}
		\ln\paren{ \frac{x}{1-x} } &= \frac{\eta}{2}\paren{ x - \frac{1}{2}}.
	\end{align*}
	Using a change of variable with $u\coloneqq 2x-1$, this is equivalent to 
	\begin{align*}
		\ln\paren{ \frac{1+u}{1-u} } = \frac{\eta}{4} u \quad \equiv  \quad \arctan(u) =  \frac{\eta}{8} u.
	\end{align*}
	Setting $h(v) \coloneqq \tanh\inverse(v)/v$ and substituting $x$ back 
	completes the proof.
\end{proof}

\begin{figure}
    \centering
    \includegraphics[width=0.8\linewidth]{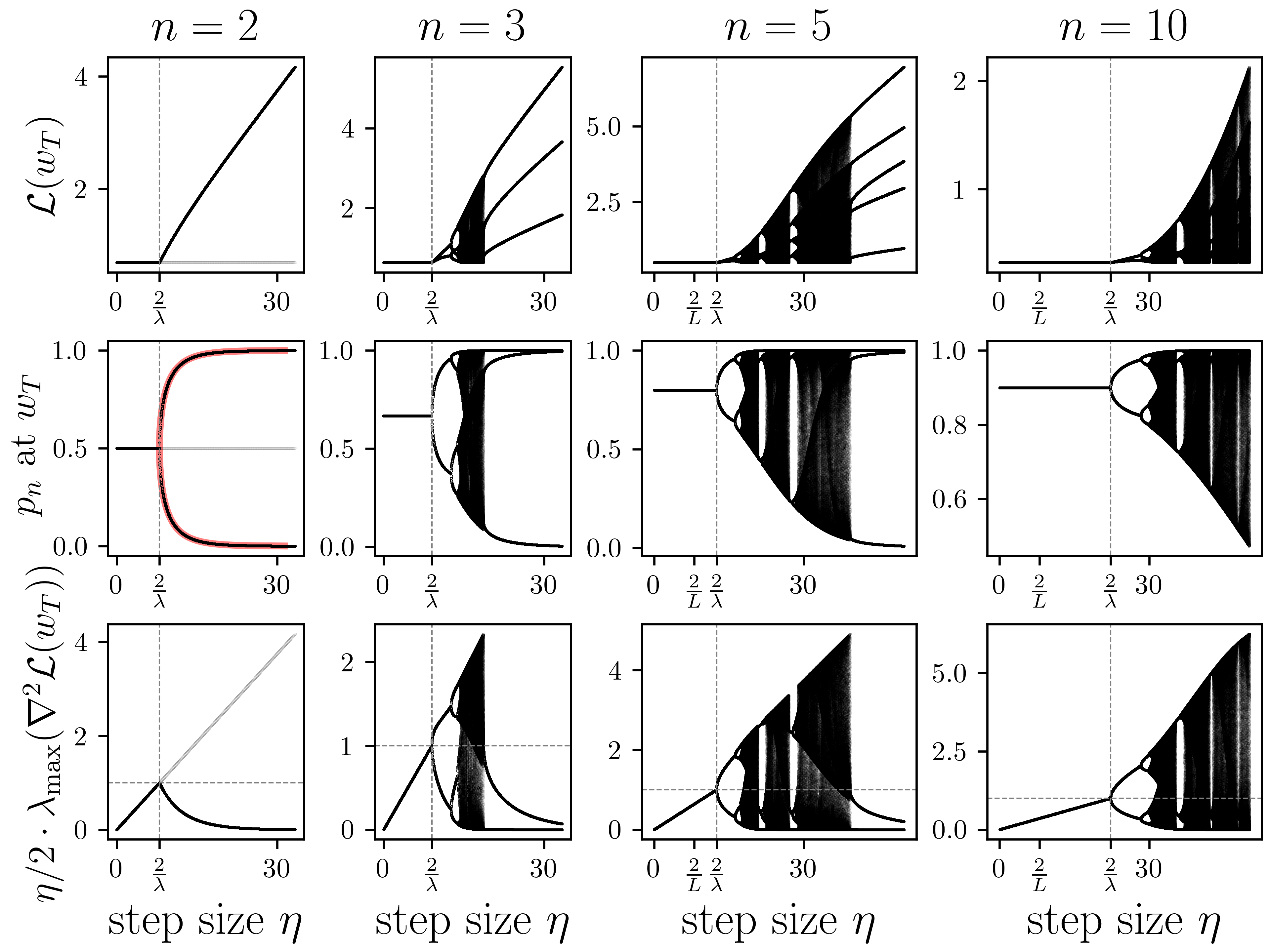}
    \caption{Bifurcation diagrams on the toy dataset for $d=2$ 
	and different values of $n$. For each step size $\eta$, 
	we run GD for $T=10^5$ iterations with 
	$1024$ random initializations of varying scales, 
    including one at $w_0=\mathbf{0}$. 
	Each point in a panel corresponds to the loss (first row), 
	the $p_n$ value in \labelcref{eq:one-d-toy-map}
    (second row), or the (scaled) largest eigenvalue of the Hessian evaluated 
	at $w_T$ (last row). When multiple points are visible for a single $\eta$, 
	GD either converged to a cycle or is chaotic under that step size. 
	The only exception to that is when $n=2$, where $w^*=\mathbf{0}$ is 
	the solution and thus a fixed point for any $\eta$. 
    For $n=2$, we have also superimposed in red the exact values 
	of $p_n$ computed from \labelcref{eq:two-examples-oscillation}.}
    \label{fig:toy_one_class}
\end{figure}

\section{Additional discussions}
\subsection{Basin of attraction of a two-dimensional counterexample}
\label{sec:initial-conditions}

Recall that when $\gamma<2$, any stable cycle that GD can converge to 
is in fact co-stable with the fixed point $w^*$. This means that 
depending on the initialization, GD may still converge to $w^*$ with the 
same step size, despite some other initialization may lead to 
convergence to the cycle. One may wonder, how big are their respective 
basins of attraction? That is, what are the set of initializations that 
eventually converge to the cycle (or $w^*$)? 
As our cycle construction critically relies on the implicit function theorem,
analyzing the size of the attracting neighbourhood is very challenging. 
Nonetheless, one can compute the basin of attraction numerically, 
as shown in \cref{fig:2d-cycle-basin} for a two-dimensional cycle. 
Not surprisingly, there is a neighbourhood around every point on the 
cyclic orbit that is attracted to the cycle. The shape and size of this basin, 
however, is rather fascinating, as it has a fractal-like structure 
and is much larger than just compact a neighborhood around the trajectory. 

\begin{figure}
	\centering \includegraphics[width=0.55\textwidth]{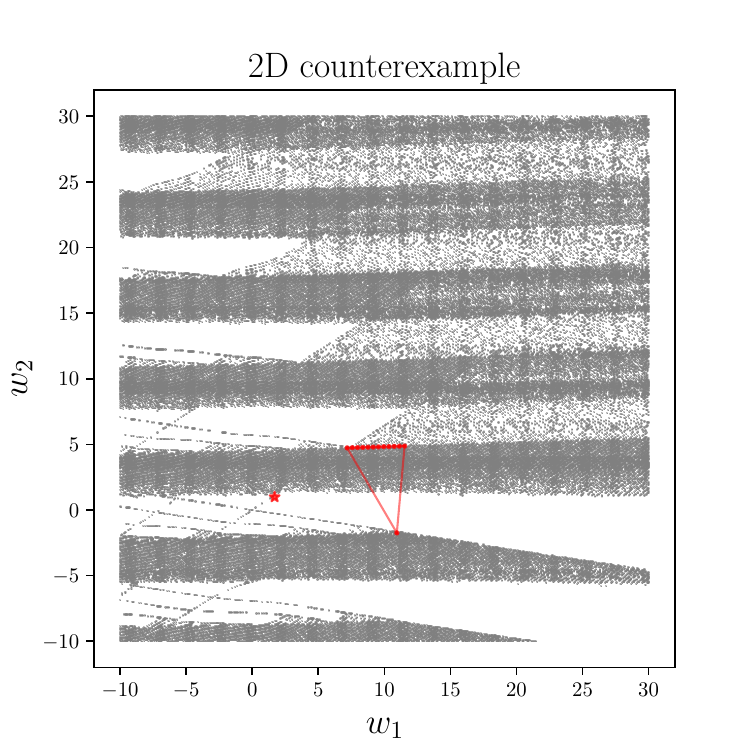}
	\caption{
		The basin of attraction of a two-dimensional counterexample. In this 
		two-dimensional logistic regression problem where the solution 
		is marked in a red star, GD can converge to a 
		cycle using $\gamma=0.95$ (red trajectory). We took a mesh grid of 
		size $512\times 512$ in the displayed range. We ran GD with 
		the prescribed step size at each point on this grid, and mark it 
		in grey if it converged to the cycle, and white if it converged to 
		$w^*$. The dataset construction is given in \cref{sec:exps}.
	}
	\label{fig:2d-cycle-basin}
\end{figure}

\subsection{Violation of EoS}
\label{sec:violation-of-eos}

In \cref{fig:cyclic-permutation}, we present a few logistic regression examples 
that violate the EoS phenomenon \citep{cohen2022gradient}
The construction of these examples are discussed as part of the proof for 
\cref{cor:lift-from-2d}, as well as 
at the end of \cref{sec:higher-dimension-case}. 

\begin{figure}
    \centering \includegraphics[width=0.9\textwidth]{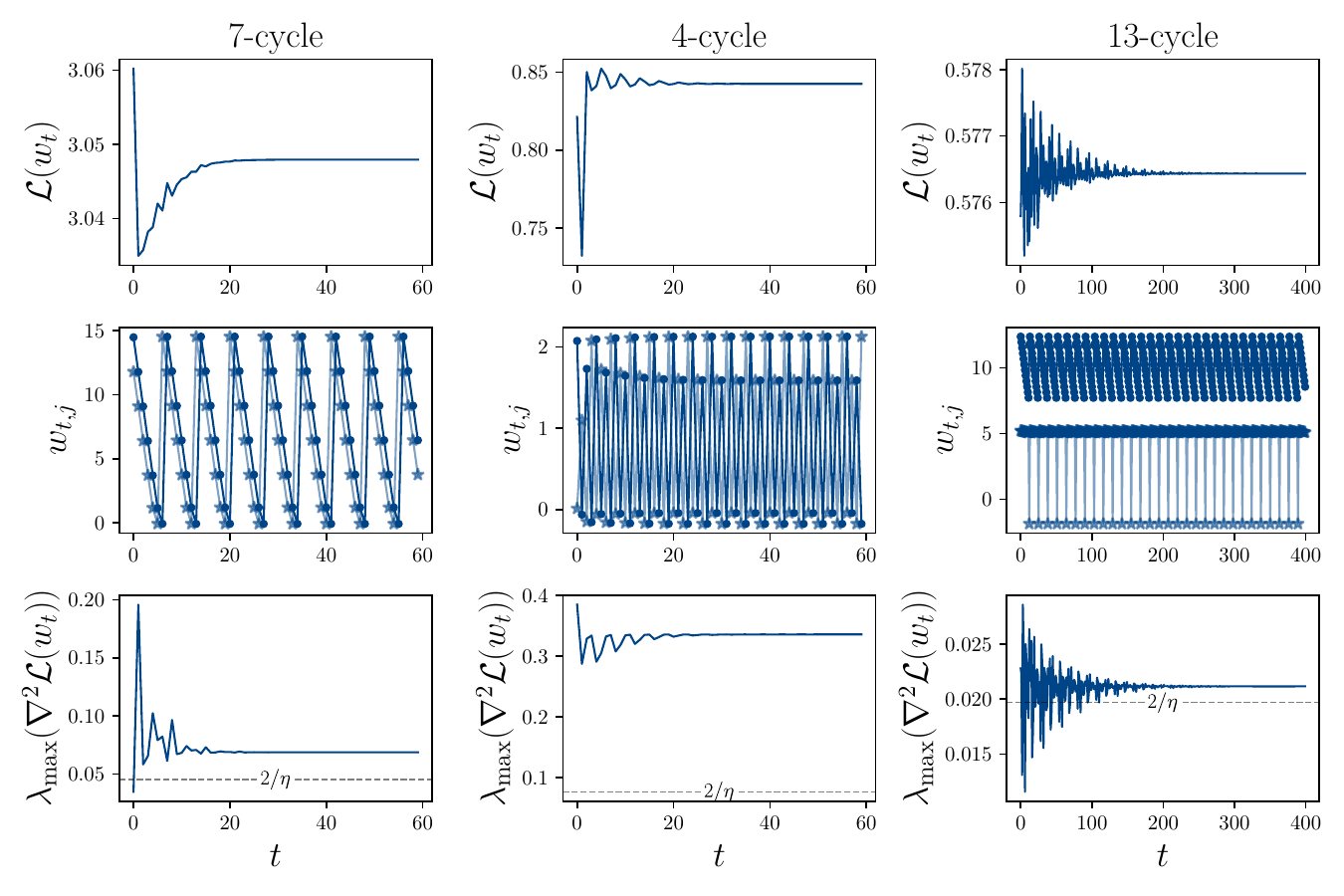}
    \caption{Logistic regression examples where GD using a step size strictly 
	below the stability threshold leads EoS violation. Observe that  
	while the sharpness oscillates around $2/\eta$ and eventually converges 
	to above $2/\eta$, the loss does not decrease further, and the 
	iterates are oscillating in a cycle. The first two examples 
	(first and second columns) are created using 
	the first two (one-dimensional) datasets in \cref{fig:cycles-w-diff-gammas}, 
	while the last example is created using the two-dimensional dataset 
	in \ref{fig:perturbed-loss-2d}. In the second row, two arbitrary coordinates 
	of the iterates are plotted.}
    \label{fig:cyclic-permutation}
\end{figure}

\subsection{Experiment details}
\label{sec:exps}

\paragraph{Dataset construction for \cref{fig:cycles-w-diff-gammas}}

We construct a one-dimensional dataset for each $\gamma$, where the labels are 
all $1$'s and the features are $x_i\in\braces{1,\,-1}$, with an extra $b$  
copies of an $x_i$ with a large magnitude. 
Let $m$ be the number of copies of $x_i=1$ and $n$ be the number of $x_i=-1$, 
with values given in \cref{tab:diff-gammas}
All initializations are set to $w_0=10$.

\begin{table}
	\caption{Dataset construction for \cref{fig:cycles-w-diff-gammas}}
	\centering
	\label{tab:diff-gammas}
	\begin{tabular}{@{}llllll@{}}
	\toprule
	$\gamma$ & $m$ & $n$ & Extra  $x_i$ & $b$ & GD converges to \\ \midrule
	$1.9$    & 250 & 200 & 20           & 6   & $4$-cycle       \\
	$1.5$    & 250 & 200 & 70           & 15  & $7$-cycle       \\
	$1.4$    & 200 & 190 & 270          & 25  & $37$-cycle      \\
	$1.5$    & 250 & 200 & 60           & 15  & Possibly chaos  \\ \bottomrule
	\end{tabular}
\end{table}

\paragraph{Dataset construction for \cref{fig:perturbed-loss-2d}}

The base dataset consists of $500$ copies of $x_i=e_1$, $30$ copies of 
$x_i=-e_1$, $5$ copies of $e_2$, and a single copy of $x_i=-e_2$. 
All labels are $1$'s.
On top of this, we add $7$ copies of $x_i = (45, \, -70)\transpose$
and $10$ copies of $x_i = (7.5, 50)\transpose$. 
These two sets of data points correspond to the two ``kicks'' required 
to form a cycle. The initialization is set to $w_0 = (15,\,4)\transpose$,
and $\gamma=0.4$.

\paragraph{Dataset construction for \cref{fig:2d-cycle-basin}}

This dataset is constructed nearly identical as the above, except that 
we only used $160$ copies of $x_i=e_1$, and $\gamma$ is set to $0.95$.

\end{document}